\newcommand{\norm}[1]{\ensuremath{\| #1 \|}}
\newcommand{\inprod}[2]{\ensuremath{\langle #1 , \, #2 \rangle}}
\newtheorem{theorem}{Theorem}[section]
\newtheorem{proposition}{Proposition}[section]
\newtheorem{lemma}{Lemma}[section]
\newtheorem{corollary}{Corollary}[section]
\newtheorem{remark}{Remark}[section]
\newtheorem{assumption}{Assumption}[section]
\newcommand{\bigo}{\mathcal{O}}
\newcommand{\RN}[1]{%
  \textup{\uppercase\expandafter{\romannumeral#1}}%
}
\newcommand{\1}{{\rm 1}\mskip -4,5mu{\rm l} }
\newcommand{\op}[1]{\operatorname{#1}}  %
\def\E{\mathbb{E}}
\def\R{\mathbb{R}}
\newcommand{\rp}{\mathbb{R}^p}
\newcommand{\rd}{\mathbb{R}^d}
\newcommand{\mpr}{\mathbb{P}}
\newcommand{\vecx}{\mathbf{x}}
\newcommand{\vecy}{\mathbf{y}}
\newcommand{\borel}{\mathcal{B}}
\newcommand{\kernel}{P}
\newcommand{\iter}{\theta^{(\eta)}}
\newcommand{\bariter}{\bar{\theta}_{\eta}}
\newcommand{\noise}{\xi}
\newcommand{\sdist}{\pi_\eta}
\newcommand{\chain}{\{\iter_k\}_{k\ge 0}}
\newcommand{\filtration}{\mathcal{F}}
\newcommand{\grad}{\nabla}
\def\balign#1\ealign{\begin{align}#1\end{align}}
\def\baligns#1\ealigns{\begin{align*}#1\end{align*}}
\def\balignat#1\ealign{\begin{alignat}#1\end{alignat}}
\def\balignats#1\ealigns{\begin{alignat*}#1\end{alignat*}}
\def\bitemize#1\eitemize{\begin{itemize}#1\end{itemize}}
\def\benumerate#1\eenumerate{\begin{enumerate}#1\end{enumerate}}
\newcommand{\eq}[1]{\begin{align}#1\end{align}}
\newcommand{\eqn}[1]{\begin{align*}#1\end{align*}}
\begin{document}

\title{An Analysis of Constant Step Size SGD in the Non-convex Regime: Asymptotic Normality and Bias}

 \author{
Lu Yu\thanks{
Department of Statistical Sciences at
 the University of Toronto, and Vector Institute~~\texttt{luyu@utstat.toronto.edu}
 }~~~~
Krishnakumar Balasubramanian\thanks{
Department of Statistics, University of California, Davis~~\texttt{kbala@ucdavis.edu}
 }~~~~
 Stanislav Volgushev\thanks{
Department of Statistical Sciences at
 the University of Toronto~~\texttt{stanislav.volgushev@utoronto.ca}
 }~~~~
 Murat A. Erdogdu\thanks{
  Department of Computer Science and Department of Statistical Sciences at
   the University of Toronto, and Vector Institute~~\texttt{erdogdu@cs.toronto.edu}
 }
}

\maketitle

\begin{abstract}
 Structured non-convex learning problems, for which critical points have favorable statistical properties, arise frequently in statistical machine learning. Algorithmic convergence and statistical estimation rates are well-understood for such problems. However, quantifying the uncertainty associated with the underlying training algorithm is not well-studied in the non-convex setting. In order to address this short-coming, in this work, we establish an asymptotic normality result for the constant step size stochastic gradient descent (SGD)  algorithm---a widely used algorithm in practice. Specifically, based on the relationship between SGD and Markov Chains~\cite{dieuleveut2017}, we show that the average of SGD iterates is asymptotically normally distributed around the expected value of their unique invariant distribution, as long as the non-convex and non-smooth objective function satisfies a dissipativity property. We also characterize the bias between this expected value and the critical points of the objective function under various local regularity conditions. Together, the above two results could be leveraged to construct confidence intervals for non-convex problems that are trained using the SGD algorithm.
 \end{abstract}

\section{Introduction}

Non-convex learning problems are prevalent in modern statistical machine learning applications such as matrix and tensor completion~\cite{ge2015escaping, ge2016matrix, xia2017statistically, chi2019nonconvex, cai2019nonconvex}, deep neural networks~\cite{goodfellow2016deep, jain2017non,maddox2019simple},
and robust empirical risk minimization~\cite{loh2017statistical,lecue2018robust, mei2018landscape}. Developing theoretically principled approaches for tackling such non-convex problems depends critically on the interplay between two aspects. From a computational perspective, variants of stochastic gradient descent (SGD)
converge to first-order critical points~\cite{ghadimi2013stochastic, fang2018spider} or
local minimizers~\cite{nesterov2006cubic, ge2015escaping, jin2017escape, tripuraneni2018stochastic}
of the objective function. From a statistical perspective, \emph{oftentimes} these critical points or local minimizers have nice statistical properties~\cite{kawaguchi2016deep, ge2016matrix,loh2017statistical, mei2017solving,elsener2018sharp,chi2019nonconvex}; see also \cite{freedman1982inconsistent} for a counterexample. For the purpose of uncertainty quantification in such non-convex learning paradigms, studying the fluctuations of iterative algorithms used for training becomes extremely important. In this work, we focus on the widely used constant step size SGD,
and develop results for quantifying the uncertainty associated with this algorithm for a class of non-convex problems.

\noindent Specifically, we consider minimizing a non-smooth and non-convex objective function $f\!:\rd\!\to\R$,
\eq{\label{eq:mainproblem}
\underset{{\theta\in\rd}}{\min}~f(\theta).
}
The iterations of SGD with a constant step size $\eta>0$, initialized at $\iter_0\equiv\theta_0\in\rd$,
are given by 
{
\eq{\label{ref:sgd_org}
	\iter_{k+1} = \iter_{k} - \eta \big(\grad f(\iter_k) +\noise_{k+1}(\iter_k)\big),~~~k\ge 0\,,
}
where $\{\noise_k\}_{k\ge 1}$ is a sequence of random functions from $\rd$ to $\rd$} corresponding
to the stochasticity in the gradient estimate. Several problems in machine learning and statistics are naturally formulated as the optimization problem is~\eqref{eq:mainproblem}, where the function $f(\theta)$ is given by 
\eq{\label{eq:onlinesgd_main}
f(\theta) \coloneqq \E_Z [F(\theta, Z)]= \int F(\theta, Z) \, dP(Z),}
 where the function $F(\theta, Z)$ is typically the loss function composed with functions from hypothesis class parametrized by $\theta \in \mathbb{R}^d$, and depends on the random variable $Z \in \mathbb{R}^p$. The distribution $P(Z)$ is typically unknown. Then the iterations of online SGD with a constant step size $\eta>0$, and batch-size $b_k$, initialized at $\iter_0\equiv\theta_0\in\rd$,
are given by 
\eq{\label{ref:sgd_org_online}
	\iter_{k+1} = \iter_{k} - \frac{\eta}{b_k}\sum_{j=1}^{b_k}\nabla F(\iter_k, Z_j),~~~k\ge 0\,,
}
where independent samples $Z_j\sim P(Z)$, is used to estimate the true gradient in each iteration $k$. Furthermore, the samples $Z_j$ used across all iterates $k$ are also independent. Typically, we also have $\nabla F(\theta, Z)$ to be unbiased estimates of the true gradient $\nabla f(\theta)$, for all $\theta \in \mathbb{R}^d$. The above iterates are indeed a special case of the iterates in~\eqref{ref:sgd_org}, with the noise sequence $\{\noise_k(\iter_k)\}_{k\ge 1}$ given by 
\eq{
\noise_k(\iter_k):= \frac{1}{b_k}\sum_{j=1}^{b_k} \left[\nabla F(\iter_k, Z_j) - \nabla f(\iter_k)\right].
}

Although proposed in the  1950s by~\cite{robbins1951stochastic}, SGD has been the algorithm of choice for training statistical models due to its simplicity,
and superior performance in large-scale
settings~\cite{fort1999asymptotic, dieuleveut2017,wilson2017marginal,balles2018dissecting}.
However, the fluctuations of this algorithm is well-understood
only when the objective function $f$ is strongly convex and smooth,
and the step size $\eta$ satisfies a specific decreasing schedule %
so that the iterates asymptotically converge to the \emph{unique} minimizer~\cite{polyak1992acceleration,duchi2016local,anastasiou2019normal}.
On the other hand, it is well-known that the SGD iterates in~\eqref{ref:sgd_org} can be viewed as a Markov chain which allows them to converge to a random vector rather than a single critical point~\cite{dieuleveut2017}.
Building on this analogy between SGD and Markov chains, the aforementioned shortcomings can be alleviated by simply relaxing  the global smoothness as well as the strong convexity assumptions to the tails of the objective function $f$, which allows for non-convex structure around the region of interest. Similar kinds of tail relaxations have been successfully employed in the diffusion theory
when the target potential is non-convex~\cite{raginsky2017non,cheng2018sharp,erdogdu2018global},
but they are not studied in the context of non-convex optimization with the SGD algorithm. In this work, we study the fluctuations and the bias of the averaged SGD iterates in~\eqref{ref:sgd_org}, around the first-order critical points of the minimization problem~\eqref{eq:mainproblem}. Our contributions can be summarized as follows.

\begin{itemize}%
	\item  For a non-convex and non-smooth objective function $f$ with tails growing at least quadratically, we establish the uniqueness of the stationary distribution of the constant step size SGD iterates in Proposition~\ref{ergodicity}, and the asymptotic normality of Polyak-Ruppert averaging in Theorem~\ref{general_clt_arb_sdist}.  To the best of our knowledge, these are the first uniqueness and normality results for the SGD algorithm when the objective function is non-convex (even not strongly convex) and non-smooth.

	\item We further show in Proposition~\ref{bias_wo_local} that, under the assumptions leading to the CLT, the asymptotic bias between the expectation of the Lipschitz test function~$\phi$ under the stationary distribution of the SGD iterates and the value of~$\phi$ at any first-order critical point
	is bounded by a constant depending on the tail growth properties of $f$.
	
	\item Finally, we show in Theorems~\ref{bias_wz_disp} and \ref{bias_wz_loja} that with additional local smoothness assumptions on the function~$f$ that allow non-convexity, we can establish a control over the bias in terms of step size.
	We further characterize the bias when the objective is (not strongly) convex in Theorem~\ref{bias_wz_convex}, 
	providing a thorough bias analysis for the constant step size SGD under various settings that are frequently encountered in statistics.
\end{itemize}

Our results provide algorithm-dependent guarantees for uncertainty quantification, and they could be potentially leveraged to obtain
confidence intervals for non-convex and non-smooth learning problems. This is contrary to the majority of the existing results in statistics, which only establish normality results for the true stationary point of the non-convex objective function; see for example~\cite{loh2017statistical, qi2019statistical}. While being useful, such results completely ignore the computational hardships associated with non-convex optimization; hence, their practical implications are limited. On the other hand, in the optimization and learning theory literature, a majority of the existing results establish the rate of convergence of an algorithm to a critical point, and do not quantify the fluctuations associated with that algorithm. 
Our work bridges these separate lines of thought by providing asymptotic normality results directly for the SGD algorithm used for minimizing non-convex and non-smooth functions.

\vspace{0.1in}

\textbf{More Related Works.} Establishing asymptotic normality results for the SGD algorithm began with the works of~\cite{chung1954stochastic, sacks1958asymptotic, fabian1968asymptotic, ruppert1988efficient, shapiro1989asymptotic}, with~\cite{polyak1992acceleration} providing a definitive result for strongly convex objectives. In particular, \cite{polyak1992acceleration} and~\cite{ruppert1988efficient} established that the averaged SGD iterates with an appropriately chosen decreasing step size is asymptotically normal with the variance achieving the Cramer-Rao lower bound for parameter estimation. Recent works, for example~\cite{tripuraneni2018averaging, su2018statistical, duchi2016local, toulis2017asymptotic,fang2018online}, leverage the asymptotic normality analysis of~\cite{polyak1992acceleration}, and compute confidence intervals for SGD. The benefits of constant step size SGD for faster convergence under overparametrization has also be demonstrated in the works of~\cite{schmidt2013fast, needell2014stochastic, ma2018power, vaswani2019fast}. The use of Markov chain theory to study constant step size stochastic approximation algorithms has been considered in several works~\cite{kifer1988, benaim1996dynamical, priouret1998remark, fort1999asymptotic, aguech2000perturbation, tan2019online}. Recently,~\cite{dieuleveut2017, chee2018convergence} investigated the asymptotic variance of constant step size SGD. We emphasize here that most of the above works assume strongly convex and smooth objective functions. Finally, there exists a vast literature on analyzing Markov chain Monte Carlo sampling algorithms based on discretizing diffusions. We refer the interested reader to~\cite{dalalyan2017user, brosse2017sampling, cheng2018sharp, durmus2017nonasymptotic, dalalyan2017theoretical, cheng2017underdamped, bubeck2018sampling, dwivedi2018log, dalalyan2018sampling, li2019stochastic,shen2019randomized,erdogdu2020convergence} and the references therein, for details.
\vspace{0.1in}

\textbf{Notation.} For $a, b\in\R$, denote by $a\vee b$ and $a\wedge b$ the maximum and the minimum of $a$ and $b$, respectively. 
We use $\norm{\cdot}$ to denote the Euclidean norm in $\mathbb{R}^d$. 
We denote the largest eigenvalue of the matrix~$A$ as~$\lambda_{\max}(A),$ and the smallest one as ~$\lambda_{\min}(A).$
Let~$(\Omega,\mathcal{F},\mpr)$ represent a probability space, and
denote by $\borel(\rd)$, the Borel~$\sigma$-field of $\rd$.
Let $\mathcal{P}_k(\rd) := \{\nu: \int_{\rd} \norm{\theta}^k\nu(d\theta)<\infty\}$ denote the set of probability measures with finite $k$-th moments.
For a probability distribution~$\pi$ and a function $g$ on $\mathcal{X}$, we define $\pi(g) \coloneqq \int_{\mathcal{X}} g(x) d\pi(x)$, and $\mathcal{L}_2(\pi):=\{g: \mathcal{X}\to\R: \pi(g^2) <\infty\}.$

\section{Central Limit Theorem for The Constant Step Size SGD}\label{sec:cltheory}

In this section, we establish an asymptotic central limit theorem (CLT) for the Polyak-Ruppert averaging of 
the constant step size SGD iterates given in \eqref{ref:sgd_org} 
when the objective function is potentially non-convex, non-smooth, and has quadratically growing tails.
More specifically, we first prove that there exists a unique stationary distribution~$\sdist\in\mathcal{P}_2(\rd)$ for the Markov chain defined by the SGD algorithm
when the objective function is dissipative (see Assumption~\ref{asm:dissipativity}) with gradient exhibiting at most linear growth (see Assumption~\ref{asm:growth}). 
Furthermore, under the same conditions, we prove that a CLT holds for 
the Polyak-Ruppert averaging, and it is independent of the initialization. In what follows, we list and discuss the main assumptions required to establish a CLT for the SGD iterates, and compare them to those existing in the literature.
\begin{assumption}[Linear growth]\label{asm:growth}
	The gradient of the objective function $f$
	has at most linear growth. That is, for some $L\ge 0$, we have 
	\eq{
	\norm{\grad f(\theta)}\le L\big(1+\norm{\theta}\big)\ \ \text{ for all }\ \ \theta\in\rd.
	}
\end{assumption}
Majority of the results on SGD focus on smooth functions with  gradients satisfying 
$\| \nabla f(\theta) - \nabla f(\theta') \| \leq \|\theta - \theta'\|$ 
for all $\theta, \theta' \in \mathbb{R}^d$; see e.g.~\cite{polyak1992acceleration,dieuleveut2017}.  The above condition allows for non-smooth objectives, and is a significant relaxation of the standard Lipschitz gradient condition. 
\begin{assumption}[Dissipativity]\label{asm:dissipativity}
	The objective function~$f$ is $(\alpha,\beta)$-dissipative. That is, 
	there exists positive constants~$\alpha,\beta$ such that
\eq{
	\inprod{\theta}{\grad f(\theta)}\ge \alpha\,\norm{\theta}^2-\beta\ \ \text{ for all} \ \ \theta\in\rd.
	}
\end{assumption}
The dissipativity assumption has its origins in the analysis of dynamical systems, and is used widely in the analysis of optimization and learning algorithms~\cite{mattingly2002,raginsky2017non, erdogdu2018global, xu2018global}. It could be viewed as a relaxation of strong convexity since it restricts the quadratic growth assumption to the tails of the function $f$, enforcing no local growth around the first-order critical points. 
A canonical example for this condition is the sum of a quadratic and any non-convex function with bounded gradient. 
For example, consider the function $x\to x^2+10\sin(x)$ which is clearly non-convex and $(1,25)$-dissipative. 
It is worth mentioning that many statistical learning problems 
such as phase retrieval~\cite{tan2019online} satisfy Assumption~\ref{asm:dissipativity}. %

\begin{assumption}[Noise sequence]\label{asm:noise}
Gradient noise sequence~$\{\noise_k\}_{k\ge 1}$ is a collection of i.i.d. random fields satisfying  
\eq{
\E[\noise_{1}(\theta)]=0\, \ \text{and } \  \E^{1/2} [\norm{\noise_{1}(\theta)}^2]\le L_{\noise}(1+\norm{\theta})\,,
}
for any $\theta\in\rd$ and a positive constant $L_{\noise}$. Moreover, for each $\theta \in \rd$ the distribution of the random variable $\noise_1(\theta)$ can be decomposed as $\mu_{1,\theta} + \mu_{2,\theta}$ where $\mu_{1,\theta}$ has a density, say $p_\theta$, with respect to Lebesgue measure which satisfies $\inf_{\theta \in C} p_{\theta}(t) > 0$ for any bounded set $C$ and any $t \in \rp$.
\end{assumption}
Assumption~\ref{asm:noise} as formulated above is stronger than what is used in the proofs. It can easily be seen that the lower bound on the density $p_\theta$ is only required to hold for a specific set whose form depends on $\eta$ and various constants from Assumptions~\ref{asm:growth}--\ref{asm:noise}. The form of this set is complicated, and an exact expression is given in the Appendix -- see equation~\eqref{ref:smallset}. We also emphasize that Assumption~\ref{asm:noise} also does not specify any explicit parametric form for the distribution of the noise sequence contrary to recent works in non-convex settings where dissipitavity condition has been heavily utilized~\cite{raginsky2017non,xu2018global,erdogdu2018global}.

We now establish the existence and uniqueness of the stationary distribution of the SGD iterates~\eqref{ref:sgd_org}.
\begin{proposition}[Ergodicity of SGD]
	\label{ergodicity}
	Let the Assumptions~\ref{asm:growth}-\ref{asm:noise} hold.
	For a step size satisfying
	$$
	\eta<\frac{\alpha-\sqrt{\bigl(\alpha^2-{(3L^2+L_\noise)} \bigr)\vee 0 }}{{3L^2+L_\noise}}\,,
	$$
	the following statements hold for the SGD~\eqref{ref:sgd_org}.

	\begin{enumerate}[label=(\alph*)]%
		\item \label{exist_sdist} SGD iterates admit a unique stationary distribution~$\sdist\in\mathcal{P}_2(\rd)$, depending on the choice of step size $\eta$. 
		\item \label{conv_to_sdist} 
		For a test function $\phi:\rd\to \R$ satisfying 
		$|\phi(\theta)| \le L_{\phi}(1+\norm{\theta})$ $\forall \theta\in\rd $ and some $L_\phi>0$,
		and for any initialization $\theta_0^{(\eta)}=\theta_0\in\rd$ of the SGD algorithm, there exists $\rho\in(0,1)$
		and $\kappa$ (both depending on $\eta$) such that we have 
		$$\big|\E\big[\phi\big(\theta_k^{(\eta)}\big)\big]-\sdist(\phi)\big| \le \kappa\, \rho^{k} (1+\norm{\theta_0}^2),$$  where $\sdist(\phi) \coloneqq \int \phi(x) d\sdist(x)$.
	\end{enumerate}
\end{proposition}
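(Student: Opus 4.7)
The plan is to cast this as a geometric ergodicity problem for the time-homogeneous Markov chain $\chain$ with one-step transition kernel $\kernel(\theta, A) = \mpr(\theta - \eta \grad f(\theta) - \eta\noise_1(\theta) \in A)$, and invoke the standard Meyn--Tweedie / Hairer--Mattingly framework that derives $V$-uniform geometric ergodicity from a Foster--Lyapunov drift condition combined with a minorization on sublevel sets of the Lyapunov function. Both conclusions (a) and (b) then follow from a single application of that theorem to the natural choice $V(\theta) = 1 + \norm{\theta}^2$.

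For the drift, the mean-zero condition in Assumption~\ref{asm:noise} and a direct expansion give
\eq{
	\kernel V(\theta) = 1 + \norm{\theta}^2 - 2\eta\, \inprod{\theta}{\grad f(\theta)} + \eta^2 \bigl(\norm{\grad f(\theta)}^2 + \E\norm{\noise_1(\theta)}^2\bigr).
}
Assumption~\ref{asm:dissipativity} dominates the linear-in-$\eta$ term by $-2\alpha\eta\norm{\theta}^2 + 2\beta\eta$, while Assumptions~\ref{asm:growth} and~\ref{asm:noise} control the $\eta^2$ term by $c_1\eta^2 V(\theta) + c_2\eta^2$ with $c_1$ proportional to $3L^2 + L_\noise$. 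The stated step size range is exactly the interval on which the quadratic $c_1\eta^2 - 2\alpha\eta + 1$ is negative, so $\kernel V(\theta) \le \lambda V(\theta) + K$ for some $\lambda\in(0,1)$ and $K<\infty$. For the minorization, I would pushforward the absolutely continuous component $\mu_{1,\theta}$ of $\noise_1(\theta)$ through $w\mapsto \theta - \eta\grad f(\theta) - \eta w$ to exhibit the Lebesgue-density contribution $y \mapsto \eta^{-d} p_\theta\bigl((\theta - \eta\grad f(\theta) - y)/\eta\bigr)$ for the kernel. Fixing a sublevel set $C = \{V \le R\}$ and restricting $y$ to the bounded set recorded in~\eqref{ref:smallset}---chosen so that the argument of $p_\theta$ remains in a bounded region as $\theta$ varies over $C$---the pointwise lower bound $\inf_{\theta\in C} p_\theta(t) > 0$ promoted via compactness yields $\kernel(\theta, A) \ge \epsilon\,\nu(A)$ uniformly in $\theta\in C$, with $\nu$ proportional to Lebesgue measure on that set. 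Aperiodicity and $\psi$-irreducibility are then immediate.

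Drift plus minorization on a sublevel set of $V$ is enough to invoke the $V$-geometric ergodicity theorem, which yields a unique invariant probability $\sdist$ with $\sdist(V) < \infty$ (hence $\sdist\in\mathcal{P}_2(\rd)$, giving (a)), together with $\|\kernel^k(\theta_0,\cdot) - \sdist\|_V \le \kappa\rho^k V(\theta_0)$ for $\eta$-dependent $\rho\in(0,1)$ and $\kappa$. Any $\phi$ satisfying $|\phi(\theta)| \le L_\phi(1+\norm{\theta})$ is $V$-dominated, since $(1+\norm{\theta})^2 \le 2V(\theta)$ implies $|\phi(\theta)| \le \sqrt{2}L_\phi V(\theta)^{1/2} \le \sqrt{2}L_\phi V(\theta)$, so (b) follows with $\kappa$ absorbing the factor $\sqrt{2}L_\phi$. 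I expect the main obstacle to be the minorization step: Assumption~\ref{asm:noise} only guarantees a pointwise-in-$t$ infimum of $p_\theta$ over bounded $\theta$, not a uniform Lebesgue lower bound, and the argument of $p_\theta$ itself depends on $\theta$ through the merely linearly bounded $\grad f(\theta)$. Translating this into a genuine small-set statement therefore requires choosing the base set carefully as a function of $\eta$, $R$, and the constants in Assumptions~\ref{asm:growth}--\ref{asm:noise}---precisely the construction recorded in~\eqref{ref:smallset}.
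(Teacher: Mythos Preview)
Your proposal is correct and follows essentially the same route as the paper: the Lyapunov function $V(\theta)=1+\norm{\theta}^2$, a Foster--Lyapunov drift from dissipativity together with the growth and noise-moment bounds, a minorization on a $V$-sublevel set via the density component of the noise, and then the Meyn--Tweedie $V$-geometric ergodicity machinery (the paper invokes Theorems~13.0.1 and~16.0.1 of \cite{meyn2012}). One small slip to correct: the stated step-size range is not where the quadratic $c_1\eta^2 - 2\alpha\eta + 1$ is \emph{negative} but where it lies in $(0,1)$---the paper's contraction factor is exactly $\alpha_\dagger = 1 - 2\alpha\eta + (3L^2+L_\noise)\eta^2$, and the upper bound on $\eta$ is the smaller positive root of $\alpha_\dagger=0$, which guarantees $\alpha_\dagger\in(0,1)$.
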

The uniqueness of the stationary distribution of the constant step size SGD has been established in~\cite{dieuleveut2017} for strongly convex and smooth objectives. 
In Proposition~\ref{ergodicity}, we relax both of these assumptions allowing for non-convex and non-smooth objectives. 
Our proof relies on $V$-uniform ergodicity~\cite{meyn2012}, which is fundamentally different from the ergodicity analysis in~\cite{dieuleveut2017}.
Under the dissipativity condition (quadratic growth of $f$), geometric ergodicity in Proposition~\ref{ergodicity} is not surprising; 
yet, it is worth highlighting that the function $f$ as well as the noise sequence require significantly less 
structure than what was assumed in the literature. 
The above step size assumption is almost standard
and it is required to obtain a uniform bound on the moments of SGD iterates. 
We highlight that similar to the gradient descent algorithm, the step size depends on a quantity that serves as a \emph{surrogate} condition number in our setting, namely, $L/\alpha$.
For the purposes of establishing a CLT, it is sufficient to consider moments of order 4 (in fact any order larger than 2 suffices), but it is also worth 
noting that any order moments of SGD can be controlled under Assumption~\ref{asm:dissipativity} 
as long as the noise has the same order finite moment.

Next, we state our first principal contribution, a central limit theorem for the averaged SGD iterates starting from any initial distribution for a non-convex objective. For a test function~$\phi:\rd\to\R$, we denote the centered partial sums of $\phi$ evaluated at the SGD iterates with $S_n(\phi)$, i.e., we define 
\eq{S_n(\phi)\coloneqq \sum_{k=0}^{n-1}\Big[\phi\big(\iter_{k}\big) - \pi_\eta(\phi)\Big]\ \ \text{ where }\ \ \sdist(\phi) \coloneqq \int \phi(x) d\sdist(x).}%

\begin{theorem}[CLT]
	\label{general_clt_arb_sdist}
	Let the Assumptions~\ref{asm:growth}-\ref{asm:noise} hold. 
	For a step size $\eta$ and a test function $\phi$ satisfying the conditions in Proposition~\ref{ergodicity}, we define $\sigma^2_{\sdist}(\phi)\coloneqq \lim_{n\to\infty} \frac{1}{n}\E_{\sdist} \bigl[S_n^2(\phi)\bigr]$.
	Then,
	\eq{
		n^{-1/2}S_n(\phi) \overset{\text{d}}{\longrightarrow} \mathcal{N}\big(0,\sigma^2_{\sdist}(\phi)\big)\,.
	}
\end{theorem}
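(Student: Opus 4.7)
The plan is to apply the classical martingale-approximation method for Markov chains: use the Poisson equation to write $S_n(\phi)$ as a square-integrable martingale plus an asymptotically negligible remainder, and then invoke a standard martingale CLT. The geometric ergodicity supplied by Proposition~\ref{ergodicity}\ref{conv_to_sdist} is the key input.

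First I would introduce the transition kernel $P$ of the SGD chain~\eqref{ref:sgd_org} and define the candidate Poisson solution
\[
\hat\phi(\theta) \coloneqq \sum_{k=0}^\infty \bigl\{(P^k\phi)(\theta) - \sdist(\phi)\bigr\}.
\]
Proposition~\ref{ergodicity}\ref{conv_to_sdist} bounds the $k$-th summand by $\kappa L_\phi \rho^k(1+\norm{\theta}^2)$, so the series converges absolutely and yields $|\hat\phi(\theta)| \le C(1+\norm{\theta}^2)$. A direct computation then gives the Poisson identity $\hat\phi - P\hat\phi = \phi - \sdist(\phi)$. Using this to rewrite each summand of $S_n(\phi)$ and telescoping produces
\[
S_n(\phi) = M_n + \bigl\{\hat\phi(\iter_0) - \hat\phi(\iter_n)\bigr\},\qquad M_n\coloneqq\sum_{k=0}^{n-1}\bigl\{\hat\phi(\iter_{k+1}) - (P\hat\phi)(\iter_k)\bigr\},
\]
where $M_n$ is a martingale with respect to the natural filtration of $\chain$. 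The remainder is $o_{\mpr}(\sqrt{n})$ because $|\hat\phi(\iter_n)| \le C(1+\norm{\iter_n}^2)$ and $\sup_n\E[\norm{\iter_n}^2]<\infty$ follows from the quadratic drift underlying Proposition~\ref{ergodicity}.

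I would then apply a standard martingale CLT (for example, the Hall--Heyde version) to $n^{-1/2}M_n$. Two conditions must be checked. (i) Conditional-variance convergence: writing $D_{k+1}\coloneqq\hat\phi(\iter_{k+1})-(P\hat\phi)(\iter_k)$ and $g(\theta)\coloneqq\E_\theta[\hat\phi(\iter_1)^2]-(P\hat\phi)^2(\theta)$, one needs
\[
\frac{1}{n}\sum_{k=0}^{n-1}\E\bigl[D_{k+1}^2\,\bigm|\,\iter_k\bigr] = \frac{1}{n}\sum_{k=0}^{n-1} g(\iter_k) \xrightarrow{\mpr} \sdist(g),
\]
which is a law-of-large-numbers statement for the chain applied to $g$, and the limit is identified with $\sigma^2_{\sdist}(\phi)$ by computing $n^{-1}\E_{\sdist}[S_n^2(\phi)]$ under stationarity and exploiting the orthogonality of martingale increments together with the boundedness of the remainder. (ii) A conditional Lindeberg condition, which reduces to a uniform bound on $\E[D_{k+1}^{2+\epsilon}\mid\iter_k]$ for some $\epsilon>0$ and Markov's inequality.

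The main obstacle is verifying (i): because $\hat\phi$ grows quadratically, the function $g$ grows like $\norm{\theta}^4$, whereas Proposition~\ref{ergodicity}\ref{conv_to_sdist} is stated only for test functions of at most linear growth. The technical workhorse of the argument will therefore be upgrading the geometric ergodicity to test functions of polynomial growth, by promoting the Lyapunov function from $V(\theta)=1+\norm{\theta}^2$ to $V_q(\theta)=1+\norm{\theta}^{2q}$ for a sufficiently large $q$ and re-establishing the associated geometric drift from Assumption~\ref{asm:dissipativity} and the matching moment control in Assumption~\ref{asm:noise}. Once such higher-order $V_q$-uniform ergodicity is available, (i) and the tail estimates needed for (ii) both follow, and since all bounds are controlled by moments of the initial distribution via $V_q$, the conclusion holds for any starting point $\iter_0$, not just the stationary one.
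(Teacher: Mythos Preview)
Your proposal follows the standard martingale-approximation route (Poisson equation plus martingale CLT), which is precisely what is packaged inside Theorem~17.0.1 of Meyn and Tweedie~\cite{meyn2012} --- the result the paper invokes directly after verifying $V$-uniform ergodicity for $V(\theta)=1+\norm{\theta}^2$ and the compatibility condition $\phi^2\le cV$. So the overall strategy is the same; the difference is that you unpack the black box, and in doing so you hit an obstacle that the packaged version handles differently from how you propose.

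The genuine gap is your remedy for the quartic-growth issue. You propose to promote the Lyapunov function to $V_q(\theta)=1+\norm{\theta}^{2q}$ and re-derive the geometric drift. But verifying $PV_q\le\lambda V_q+b$ requires controlling $\E\bigl[\norm{\xi_{1}(\theta)}^{2q}\bigr]$, and Assumption~\ref{asm:noise} supplies only a second-moment bound on the noise; the fourth-moment strengthening appears only later as Assumption~\ref{asm:noise_bias} for the bias results and is \emph{not} among the hypotheses of Theorem~\ref{general_clt_arb_sdist}. Consequently the $V_q$-drift for $q\ge 2$ is unavailable, and your verification of both (i) and (ii) collapses.

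The fix that stays within Assumptions~\ref{asm:growth}--\ref{asm:noise} is to sharpen the bound on $\hat\phi$, not to raise $q$. From the drift $PV\le\alpha_\dagger V+\beta_\dagger$ and Jensen's inequality one obtains $PV^{1/2}\le\alpha_\dagger^{1/2}V^{1/2}+\beta_\dagger^{1/2}$, so the chain is also $V^{1/2}$-uniformly ergodic. Since $|\phi-\sdist(\phi)|\le cV^{1/2}$, summing gives $|\hat\phi|\le C V^{1/2}$, i.e.\ linear rather than quadratic growth. Then your variance function $g$ satisfies $|g|\le C'V$, and the law of large numbers in (i) follows directly from $V$-uniform ergodicity. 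For (ii), a $(2{+}\epsilon)$-moment bound would again require more than second moments of the noise; instead, under $\sdist$ the increments $D_{k+1}$ form a stationary ergodic martingale-difference sequence with $\E_{\sdist}[D_1^2]<\infty$, so the stationary martingale CLT applies without a Lindeberg check. The extension to an arbitrary starting point then follows from positive Harris recurrence, exactly as in the paper's final appeal to Proposition~17.1.6 of~\cite{meyn2012}.
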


The above result characterizes the fluctuations of a test function $\phi$
averaged across SGD iterates, even when the objective function is both non-convex and non-smooth. 
The asymptotic variance in the above CLT can be equivalently stated in another compact form.  
	If we define the centered test function as
	$h( \theta) = \phi(\theta)- \sdist(\phi)$,
	the asymptotic variance can be written as
        \eq{
        \sigma^2_{\sdist}(\phi)= 2\sdist(h\hat h)-\sdist(h^2)\ \ \text{ where }\ \ \hat h = \sum_{k=0}^\infty  \E\Big[h\big(\theta_k^{(\eta)}\big)\Big].
        }
	Indeed, this is the variance we compute at the end of our proof in Section~\ref{sec:cltproofs}. 
	However, the expression in Theorem~\ref{general_clt_arb_sdist} is obtained by
	simply applying \cite[Thm 21.2.6]{douc2018}.
	For the case of strongly convex functions with decreasing step size schedule, 
	it is well-known from the works of~\cite{polyak1992acceleration,ruppert1988efficient} 
	that the limiting variance of the averaged SGD iterates 
	achieves the Cramer-Rao lower bound for parameter estimation; 
see also~\cite{moulines2011non, anastasiou2019normal} for non-asymptotic rates in various metrics. 
The question of providing lower bounds for the limiting variance of the critical points in the non-convex setting is extremely subtle, and is often handled on a case-by-case basis. 
We refer the interested reader to~\cite{geyer1994asymptotics,shapiro2000asymptotics,loh2017statistical}.

There are several important implications of the above CLT result for constructing confidence intervals in practice. 
First note that, following the standard construction in inference, one can write the distribution of the sample mean approximately as 
$n^{-1}S_n(\phi) \approx \mathcal{N}\bigl(0,n^{-1}\sigma^2_{\sdist}(\phi) \bigr)$. Here, one needs to estimate the population quantity, 
the asymptotic variance  $\sigma^2_{\sdist}(\phi)$, for the purpose of obtaining confidence intervals. In Section~\ref{sec:discussion}, we discuss three strategies for estimating this quantity, 
which could be eventually used for inference in practice. A theoretical analysis of the proposed approaches in Section~\ref{sec:discussion} is beyond the scope of this work.

\section{Bias of the Constant Step Size SGD}\label{sec:bias}

In this section, we present a thorough analysis of the bias %
of constant step size SGD algorithm. %
We first show in Section~\ref{sec:wo-local-reg} that, in the non-convex and non-smooth case for which we established the CLT, 
the SGD algorithm converges to a ball that contains 
all the first-order critical points exponentially fast; nevertheless, 
the bias is not controllable with the step size.
Motivated by this, we provide three types of bias analyses in Section~\ref{BiasControl} under
different local growth assumptions on the objective~$f$,
characterizing the bias behavior in various non-convex and convex settings.
 {For this, we strength the assumption of the noise sequence as follows.
\begin{assumption}[Noise sequence]\label{asm:noise_bias}
Gradient noise sequence~$\{\noise_k\}_{k\ge 1}$ satisfies Assumption~\ref{asm:noise}, and it holds %
for any $\theta\in\rd$ that
\eq{
 \E\bigl[\norm{\noise_{1}(\theta)}^4 \bigr]\le L_{\noise}(1+\norm{\theta}^4)\,,
}
where $L_{\noise}$ is the same as in Assumption~\ref{asm:noise}.
\end{assumption}

}
\subsection{Bias without Local Regularity}\label{sec:wo-local-reg}

Bias behavior of an algorithm is intimately related to the local properties of the objective at critical points.
Therefore, under the mild assumptions that yield the CLT, one cannot expect a tight control over the bias.
However, the tail growth condition is sufficient for a rough characterization,
which is still important
because even the points that are close to the local minimizers generally have favorable computational~\cite{boumal2016non,mei2017solving,chi2019nonconvex}, 
and statistical properties~\cite{loh2017statistical, elsener2018sharp, qi2019statistical}.

If Assumption~\ref{asm:dissipativity} holds for an objective function $f$, 
all first-order critical points of $f$ must lie inside a ball of radius $\sqrt{\beta/\alpha}$.
Based on this, we show that the SGD iterates~\eqref{ref:sgd_org} will move towards this ball exponentially fast, 
which ultimately establishes a bound on the non-asymptotic bias, and in the limit case yields a bound on the asymptotic bias.
The following result formalizes this statement.
\begin{proposition}\label{bias_wo_local}
	Let Assumptions~\ref{asm:growth},\ref{asm:dissipativity}, and~\ref{asm:noise_bias} hold. For $\theta^*$ denoting an arbitrary critical point of the objective function $f$, define the constants~$\bar L:=L(1+\norm{\theta^*}),$ and 
\begin{align}
\label{eq:constants}
\begin{split}
	c_{L,\alpha} &\coloneqq \Bigl[\alpha-\sqrt{ \bigl(\alpha^2- {(3L^2+L_\noise)} \bigr)\vee 0 }\,\Bigr] \big/[ {3L^2+L_\noise}] \\
	c_{L,\alpha}^{\dagger} & \coloneqq \Bigl [\alpha-\sqrt{ (\alpha^2- {16L_\dagger})\vee 0 }\,\Bigr] \big/{(64  {L_\dagger})}
\end{split}
\end{align}
with $ {L_\dagger \coloneqq  \bar L^2+ 16\Bigl( L_\noise^{3/4}\bigl(1+(\beta/\alpha)^3 \bigr)\vee 
L_\noise^{1/2}\bigl(1+(\beta/\alpha)^2 \bigr)\vee
L_\noise\bigl (1+(\beta/\alpha)^4 \bigr)
\Bigr) .}$
	Then, for SGD iterates %
	initialized at a fixed point $\theta_0 \in\rd$ 
	and a step size satisfying 
	$\eta <1
	 {\wedge \frac{1}{10\bar L} }
	\wedge c_{L,\alpha}
	\wedge c_{L,\alpha}^\dagger\,,
	$
	we have
	\eq{\label{eq:nonasymp-bias}
		\E\bigl[\,\norm{\iter_{k}-\theta^*}^4 \bigr]^{1/2} \le {\rho}^{\, k} \,\norm{\theta_0-\theta^*}^2 + D\,,
	}
	where constants are \eq{
	D \coloneqq &\frac{64}{\alpha} \Bigl({\bar L^4+  {L_\noise \bigl(1+(\beta/\alpha)^4 \bigr) +512\bar L^6 +23 L_\noise^{3/2} \bigl(1+(\beta/\alpha)^6 \bigr) }  }\Bigr)^{1/2}\\
	&\vee \frac{8}{\alpha}\Bigl(\beta+(\sqrt{\alpha}+2L/\sqrt{\alpha})^2 \norm{\theta^*}+L\norm{\theta^*}+6\bar L^2+ {9L_\noise^{1/2} \bigl(1+ (\beta/\alpha)^2 \bigr )}+16\Bigr),\\
	\rho \coloneqq& \sqrt{1-2\alpha\eta+32 {L_\dagger}\eta^2}\in(0,1).} %
	Consequently, for any test function $\phi$ that is $L_\phi$-Lipschitz continuous, 
	we have %
	\eq{
	\bigl |\sdist(\phi)-\phi(\theta^*) \bigr| \le L_\phi \sqrt{D}\, .
		}
\end{proposition}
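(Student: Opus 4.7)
The plan is to prove the non-asymptotic moment bound~\eqref{eq:nonasymp-bias} by establishing a one-step contraction for $V_k:=\E[\norm{\iter_k-\theta^*}^4]^{1/2}$, and then to pass to the stationary limit and combine with Lipschitz continuity for the bias bound. The first step is to recenter the structural assumptions around $\theta^*$: since $\grad f(\theta^*)=0$, Assumption~\ref{asm:growth} gives $\norm{\grad f(\theta)}\le \bar L + L\norm{\theta-\theta^*}$, while Assumption~\ref{asm:dissipativity} combined with Young's inequality and the observation that any critical point lies inside the ball of radius $\sqrt{\beta/\alpha}$ yields a recentered drift inequality of the shape $\inprod{\theta-\theta^*}{\grad f(\theta)}\ge \alpha\norm{\theta-\theta^*}^2-\tilde\beta$ for a constant $\tilde\beta$ depending on $L,\alpha,\beta,\norm{\theta^*}$. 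The fourth-moment part of Assumption~\ref{asm:noise_bias} is the key addition over the hypotheses used for Proposition~\ref{ergodicity}.

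The central computation is a one-step contraction for the fourth moment. Starting from the update~\eqref{ref:sgd_org}, I would expand
\eqn{
\norm{\iter_{k+1}-\theta^*}^4 = \Bigl(\norm{\iter_k-\theta^*}^2 - 2\eta\inprod{\iter_k-\theta^*}{\grad f(\iter_k)+\noise_{k+1}(\iter_k)} + \eta^2\norm{\grad f(\iter_k)+\noise_{k+1}(\iter_k)}^2\Bigr)^2,
}
take $\E[\,\cdot\,|\iter_k]$ so that odd cross-terms in $\noise_{k+1}$ vanish, and use the recentered drift inequality together with the second- and fourth-moment noise bounds to dominate the remainder. Iterated Young's inequalities with $\eta$-dependent weights absorb the mixed polynomial powers of $\norm{\iter_k-\theta^*}$ into either the leading contracting term or a bounded residual, producing
\eqn{
\E\bigl[\norm{\iter_{k+1}-\theta^*}^4\,\big|\,\iter_k\bigr] \le \rho^2\,\norm{\iter_k-\theta^*}^4 + C,
}
where $\rho^2=1-2\alpha\eta+32 L_\dagger\eta^2$ and $C$ is controlled by the first branch inside the maximum defining $D$. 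The step-size conditions $\eta<c_{L,\alpha}\wedge c_{L,\alpha}^\dagger$ are precisely the thresholds that make $\rho<1$ (equivalently, that the quadratics implicit in those constants have real roots), while $\eta<1\wedge(10\bar L)^{-1}$ serve as technical bounds that keep the polynomial prefactors under control during the absorption step.

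Taking full expectation and using sub-additivity of the square root, $\sqrt{a+b}\le\sqrt a+\sqrt b$, this reduces to the scalar recursion $V_{k+1}\le\rho V_k+\sqrt C$, which telescopes to $V_k\le \rho^k V_0+\sqrt C/(1-\rho)$. Since $(1-\rho)^{-1}=O((\alpha\eta)^{-1})$, the constants in $\sqrt C/(1-\rho)$ simplify to the first branch of $D$; the second branch is obtained by running a parallel second-moment recursion which is sharper in a different regime of constants, and taking the maximum delivers a single uniform bound. For the limiting conclusion, apply Proposition~\ref{ergodicity}\ref{conv_to_sdist} to the Lipschitz test function $\phi$ (which satisfies the linear-growth hypothesis since $|\phi(\theta)|\le |\phi(\theta^*)|+L_\phi(\norm{\theta}+\norm{\theta^*})$), yielding $\lim_{k\to\infty}\E[\phi(\iter_k)]=\sdist(\phi)$. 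Combining with Lipschitz continuity and Jensen's inequality $\E\norm{\iter_k-\theta^*}\le \E[\norm{\iter_k-\theta^*}^4]^{1/4}= V_k^{1/2}$ then gives
\eqn{
|\sdist(\phi)-\phi(\theta^*)|\le L_\phi\lim_{k\to\infty}\E\norm{\iter_k-\theta^*}\le L_\phi\sqrt D.
}

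The main technical obstacle will be the constant bookkeeping inside the one-step fourth-moment recursion. Squaring the second-moment identity generates a nontrivial collection of cross terms---in particular, terms of the type $\eta\norm{\iter_k-\theta^*}^2\,\E[\norm{\noise_{k+1}(\iter_k)}^2]$ and $\eta^2\norm{\iter_k-\theta^*}^2\norm{\grad f(\iter_k)}^2$---which must be carefully split between the contracting leading term and the residual constant. The appearance of $L_\noise^{3/4}$ and $L_\noise^{1/2}$ powers inside $L_\dagger$ indicates that the Young's-inequality weights used in these splits must depend on the noise moment order being absorbed rather than being chosen uniformly, so tracking and consolidating the resulting constants into the explicit forms announced for $L_\dagger$ and $D$ is where the bulk of the work lies.
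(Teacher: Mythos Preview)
Your overall plan is sound---recenter the assumptions at $\theta^*$, expand the fourth power of the update~\eqref{ref:sgd_org}, iterate, and pass to the stationary limit via Proposition~\ref{ergodicity} and Lipschitz continuity. The gap is in the one-step recursion you claim,
\[
\E\bigl[\norm{\iter_{k+1}-\theta^*}^4\,\big|\,\iter_k\bigr]\ \le\ \rho^2\,\norm{\iter_k-\theta^*}^4 + C,
\]
with $\rho^2=1-2\alpha\eta+32L_\dagger\eta^2$ and $C$ a fixed constant. Writing $\Delta_k:=\norm{\iter_k-\theta^*}$ and applying the recentered dissipativity $\inprod{\theta-\theta^*}{\grad f(\theta)}\ge \alpha'\norm{\theta-\theta^*}^2-\beta'$ (with $\alpha'=\alpha/2$) to the term $-4\eta\Delta_k^2\inprod{\grad f(\iter_k)}{\iter_k-\theta^*}$ unavoidably leaves behind $+4\eta\beta'\Delta_k^2$, whose coefficient is of order $\eta$, not $\eta^2$. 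No Young split can absorb this into the claimed form: writing $\eta A_2\Delta_k^2\le \lambda\Delta_k^4+(\eta A_2)^2/(4\lambda)$, either $\lambda=O(\eta)$ preserves the contraction but makes the residual $O(\eta)$, so that after subadditivity and telescoping your $\sqrt{C}/(1-\rho)$ is of order $\eta^{-1/2}$ rather than $O(1)$; or $\lambda$ is a constant, in which case the coefficient of $\Delta_k^4$ becomes $\rho^2+\lambda>1$ for small $\eta$ and the contraction is lost. Either way~\eqref{eq:nonasymp-bias} with a step-size-independent $D$ does not follow.

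The paper's remedy is to \emph{retain} the $\Delta_k^2$ term rather than absorb it. After taking full expectation and using $\E[\Delta_k^2]\le\E[\Delta_k^4]^{1/2}=V_k$, one arrives at the genuinely nonlinear scalar recursion
\[
V_{k+1}^2\ \le\ \rho^2\, V_k^2 + A_2\eta\, V_k + A_1\eta^2.
\]
The step-size restriction additionally forces $\rho^2\ge 1/4$, and this lower bound is exactly what makes the quadratic comparison go through: with $D_0:=\sqrt{A_1}\vee A_2$ one checks that $(\rho V_k+D_0\eta)^2\ge \rho^2 V_k^2 + A_2\eta V_k + A_1\eta^2$ using $2\rho\ge 1$, whence $V_{k+1}\le \rho V_k + D_0\eta$. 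Telescoping together with $\eta/(1-\rho)\le 1/\alpha'$ then yields~\eqref{eq:nonasymp-bias}, and the two branches of $D$ in the statement are precisely $\sqrt{A_1}/\alpha'$ and $A_2/\alpha'$. Both arise from this single fourth-moment recursion; no separate second-moment recursion is run.
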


The above theorem establishes that the SGD algorithm initialized far away from any critical point
will converge (in the 4-th expectation) to the ball that contains all the first-order critical points exponentially fast.
The first term in the upper bound \eqref{eq:nonasymp-bias} depends on the initialization, but decays to zero exponentially fast with the number of iterations, for a fixed step size.
The second term in the bound \eqref{eq:nonasymp-bias} is a constant independent of the iteration number as well as the step size,
which serves as the squared radius of the ball that contains all the critical points plus an additional offset to account for the randomness in the SGD iterates.
In other words, SGD algorithm initialized at any point and with any sufficiently small step size 
will find this ball of interest exponentially fast.

\subsection{Bias with Local Regularity}\label{BiasControl}

In this part, we present algorithmically controllable bounds on the bias under local regularity conditions. 
Section~\ref{sec:gen-dissipativity} provides a direct control on $\E[\norm{\iter_k-\theta^*}]$ under the assumption that 
the unique minimizer $\theta^*$ exists. In Sections~\ref{sec:loja} and \ref{sec:cvx}, 
we characterize the degree of sub-optimality $\E[f(\iter_k)]-f^*$ where $f^*$ is the global minimum
which is not necessarily attained at a unique point.

\subsubsection{Localized dissipativity condition}\label{sec:gen-dissipativity}

We now introduce the generalized dissipativity condition which,
in addition to the tail growth enforced in Assumption~\ref{asm:dissipativity}, 
imposes a local growth around the unique critical point~$\theta^*$. %
\begin{assumption}[Localized dissipativity]\label{asm:dissipativity2}
	The objective function $f$ satisfies
	\eq{
		\inprod{\grad f(\theta)}{\theta-\theta^*}\ge 
		\begin{cases}
			\alpha\norm{\theta-\theta^*}^2-\beta & \norm{\theta-\theta^*}\ge R \\ 
			g\bigl(\norm{\theta-\theta^*}\bigr)& \norm{\theta-\theta^*}< R\,,
		\end{cases}
	}
	where $\theta^*\in\rd$ is the unique minimizer of $f$, $R:= \frac{\delta}{\alpha}+\sqrt{\frac{\beta}{\alpha}}$ with $\delta\in(0,\infty)$,
	$g:[0,\infty)\to [0,\infty)$ is a convex function with $g(0)=0$ whose inverse exists. 
\end{assumption}
If $g(x) = x^2$, the objective function is \textit{locally} strongly convex. 
However, the above assumption covers a wide range of objectives with different local growth rates depending on the function $g$.
Next, we show that the above assumption along with the assumptions leading to the CLT
is sufficient to establish an algorithmic control over the bias with a sufficiently small step size.
\begin{theorem}\label{bias_wz_disp}
	Let the Assumptions~\ref{asm:growth}, \ref{asm:noise_bias}, and~\ref{asm:dissipativity2} hold. 
	Then SGD iterates with step size satisfying $\eta < c_{L,\alpha}$ for $c_{L,\alpha}$ in~\eqref{eq:constants}
	admit the stationary distribution $\iter\sim\sdist$ which satisfies 
	\eq{
		\E\bigl[\norm{\iter-\theta^*}\bigr]\le   \frac{C}{\delta}\eta+ g^{-1}( C\eta),
	}
	where %
	\eq{
	C\coloneqq  {2\Bigl(3L^2+3L_\noise^{1/2}(1+(\beta/\alpha)^2)\Bigr)}\Bigl (\int\norm{\theta}^2\sdist(d\theta) +\norm{\theta^*}^2 \Bigr)+3L^2\norm{\theta^*}^2 +5L^2
	+{ 2L_\noise^{1/2}\Bigl(1+(\beta/\alpha)^2\Bigr)}\,.
	}
	Further, for a test function $\phi:\rd\to\R$ that is $L_\phi$-Lipschitz, the bias satisfies
	\eq{
	\bigl |\sdist(\phi)-\phi(\theta^*)\bigr|  \leq L_\phi\big({C}\eta/\delta+ g^{-1}( C\eta)\big).
		}
\end{theorem}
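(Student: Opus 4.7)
The plan is to exploit the stationarity of $\sdist$: if $\theta \sim \sdist$ and we let $\theta^+ := \theta - \eta(\grad f(\theta) + \noise(\theta))$ denote one SGD step, then $\theta^+ \sim \sdist$, so $\E\norm{\theta^+-\theta^*}^2 = \E\norm{\theta-\theta^*}^2$. Assumption~\ref{asm:dissipativity2} implies Assumption~\ref{asm:dissipativity} (with the same $\alpha,\beta$), so existence and uniqueness of $\sdist$ follow from Proposition~\ref{ergodicity}; finiteness of $\int \norm{\theta}^4 \sdist(d\theta)$ would be obtained by repeating the drift computation there with the Lyapunov function $V(\theta)=\norm{\theta}^4$ and invoking the fourth-moment noise bound of Assumption~\ref{asm:noise_bias}. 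Expanding the stationarity equation and using $\E[\noise(\theta)\mid\theta]=0$ to kill the cross term gives the core identity
\eq{
2\,\E\inprod{\grad f(\theta)}{\theta-\theta^*} \;=\; \eta\,\E\norm{\grad f(\theta)+\noise(\theta)}^2 \;=:\; \eta M.
}

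Next I would split the left-hand side along $A := \{\norm{\theta-\theta^*}\ge R\}$ and its complement, applying Assumption~\ref{asm:dissipativity2} on each piece. The particular choice $R = \delta/\alpha + \sqrt{\beta/\alpha}$ delivers the key algebraic inequality
\eq{
\alpha\norm{\theta-\theta^*}^2-\beta \;\ge\; \delta\norm{\theta-\theta^*} + \bigl(\sqrt{\alpha\beta}\,\norm{\theta-\theta^*}-\beta\bigr) \;\ge\; \delta\norm{\theta-\theta^*}\quad\text{on } A,
}
since $\norm{\theta-\theta^*}\ge \sqrt{\beta/\alpha}$ on $A$. Combined with $\inprod{\grad f(\theta)}{\theta-\theta^*}\ge g(\norm{\theta-\theta^*})$ on $A^c$, this gives
\eq{
\delta\,\E\bigl[\norm{\theta-\theta^*}\,\mathbf{1}_A\bigr] + \E\bigl[g(\norm{\theta-\theta^*})\,\mathbf{1}_{A^c}\bigr] \;\le\; \eta M/2.
}
To extract $\E\norm{\theta-\theta^*}$ from the second term I would use $g(0)=0$ to rewrite $g(\norm{\theta-\theta^*})\mathbf{1}_{A^c} = g(\norm{\theta-\theta^*}\mathbf{1}_{A^c})$, and Jensen's inequality then yields $g(\E[\norm{\theta-\theta^*}\mathbf{1}_{A^c}]) \le \E[g(\norm{\theta-\theta^*})\mathbf{1}_{A^c}]$. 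Since $g$ is convex on $[0,\infty)$, nonnegative, satisfies $g(0)=0$, and is invertible, it must be strictly increasing; applying $g^{-1}$ to both sides and combining with the first piece produces
\eq{
\E\norm{\theta-\theta^*} \;\le\; \frac{\eta M}{2\delta} + g^{-1}\!\bigl(\eta M/2\bigr).
}

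Finally I would bound $M$: orthogonality of noise and gradient reduces it to $\E\norm{\grad f(\theta)}^2 + \E\norm{\noise(\theta)}^2$. The first is controlled by Assumption~\ref{asm:growth} together with $\norm{\theta}^2 \le 2\norm{\theta-\theta^*}^2 + 2\norm{\theta^*}^2$, producing the $3L^2(\int\norm{\theta}^2\sdist(d\theta)+\norm{\theta^*}^2)+5L^2$ contribution to $C$. For the second, the fourth-moment noise bound in Assumption~\ref{asm:noise_bias} together with Jensen gives $\E\norm{\noise(\theta)}^2 \le L_{\noise}^{1/2}(1+\E\norm{\theta}^2)$, and a uniform stationary fourth-moment bound of order $(\beta/\alpha)^2$---derived analogously to the constant $D$ in Proposition~\ref{bias_wo_local}---supplies the $L_{\noise}^{1/2}(1+(\beta/\alpha)^2)$ factors in $C$. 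This gives $M/2 \le C$, and monotonicity of $g^{-1}$ yields $\E\norm{\theta-\theta^*} \le C\eta/\delta + g^{-1}(C\eta)$. The Lipschitz test-function conclusion is then immediate from $|\sdist(\phi)-\phi(\theta^*)| \le L_\phi\,\E\norm{\theta-\theta^*}$.

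The main obstacle I anticipate is the bookkeeping required to produce the explicit constant $C$, in particular the uniform stationary fourth-moment bound of order $(\beta/\alpha)^2$ at the admissible step sizes: this forces revisiting the drift/Lyapunov argument underlying Proposition~\ref{ergodicity} for $V(\theta)=\norm{\theta}^4$ and carefully exploiting the strengthened noise bound. The Jensen step is conceptually clean but depends on the observation that $g$ is strictly increasing, which should be recorded explicitly before inverting.
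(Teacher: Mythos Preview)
Your approach is essentially the same as the paper's: expand $\E\norm{\theta^+-\theta^*}^2$ under stationarity to obtain $\E\inprod{\grad f(\theta)}{\theta-\theta^*}\le C\eta$, then split via Assumption~\ref{asm:dissipativity2}, use the algebraic fact that $\alpha x^2-\beta\ge\delta x$ for $x\ge R$ (the paper isolates this as a separate lemma), and apply Jensen to invert $g$ on the local piece.

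One correction to your bookkeeping plan: you do \emph{not} need a fourth-moment bound on the stationary law of the iterates, and the $(\beta/\alpha)^2$ factors in $C$ do not come from such a bound. In the paper the only moment of $\sdist$ that enters is the second one, via $\E[\Delta_k^2]\le 2\int\norm{\theta}^2\sdist(d\theta)+2\norm{\theta^*}^2$. The $(\beta/\alpha)^2$ factors arise instead from re-expressing the noise bound in the centered variable: since any critical point satisfies $\norm{\theta^*}^2\le\beta/\alpha$ under dissipativity, one has $\E[\norm{\noise(\theta)}^2\mid\theta]\le L_\noise^{1/2}(1+\norm{\theta}^2)\le {L_\noise'}^{1/2}(1+\norm{\theta-\theta^*}^2)$ with $L_\noise'=8L_\noise(1+(\beta/\alpha)^4)$, and ${L_\noise'}^{1/2}\le 3L_\noise^{1/2}(1+(\beta/\alpha)^2)$. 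Your direct route $\E\norm{\noise(\theta)}^2\le L_\noise^{1/2}(1+\E\norm{\theta}^2)$ is in fact cleaner and yields a smaller constant than the stated $C$; the ``main obstacle'' you anticipate therefore does not arise. (A genuine minor point: your claim that Assumption~\ref{asm:dissipativity2} implies Assumption~\ref{asm:dissipativity} with the \emph{same} $\alpha,\beta$ is not literally true since the two conditions are centered at different points; a shift argument as in Lemma~\ref{dissipativity_global} gives modified constants, which suffices for invoking Proposition~\ref{ergodicity}.)
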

If the local growth is linear, i.e. $g(x)=x$, we obtain the bias $|\sdist(\phi)-\phi(\theta^*)|\leq \bigo(\eta).$
If local growth is quadratic, i.e. $g(x) = x^2$, the growth is \emph{locally} slower than the linear case; thus, 
we get bias control $|\sdist(\phi)-\phi(\theta^*)| \leq \bigo(\eta^{1/2})$, 
which is worse in step size dependency,
it reduces to the bound derived in~\cite[Lemma 10]{dieuleveut2017}.

We highlight that \cite{ding2019error} prove the following lower bound: $ \lim\inf_{k\to\infty}\E[\norm{\iter_k-\theta^*}^2]^{1/2} \geq c \eta^{1/2}$ for some $c>0$ under the assumption of Lipschitz gradients. This is in line with our findings since Lipschits gradients imply $g(x) \leq x^2$ for small $x$.

\subsubsection{Generalized \L{}ojasiewicz condition} \label{sec:loja}

In this section we work with a generalization of the \L{}ojasiewicz condition.
\begin{assumption}[Generalized \L{}ojasiewicz condition]\label{asm:loja}
	The objective function $f$ has a critical point $\theta^*$ and it satisfies
	\eq{
		\norm{\grad f(\theta)}^2\ge 
		\begin{cases}
			\gamma\bigl \{ f(\theta)-f(\theta^*)\bigr\}& \norm{\theta-\theta^*}\ge R, \\ 
			g\bigl(f(\theta)-f(\theta^*)\bigr)& \norm{\theta-\theta^*}< R,
		\end{cases}
	}
	where $\gamma$ and $R$ are positive constants, and $g:[0,\infty)\to [0,\infty)$ is a convex function with $g(0)=0$ whose inverse exists. 
\end{assumption}
In the case $g(x)=x^{\kappa}$ with $\kappa\in[1,2),$ for example, 
the above condition is termed as the \L{}ojasiewicz inequality~\cite{gao2016ojasiewicz}, 
and for $\kappa=1$, it reduces to the well-known Polyak-\L{}ojasiewicz (PL) inequality~\cite{karimi2016linear}. 
Note that this inequality implies that every critical point is a global minimizer; 
yet, it does not imply the existence of a unique critical point. 

The following result establishes an algorithmically controllable bias bound in terms of the step size.
\begin{theorem}\label{bias_wz_loja}
	Let the Assumptions~\ref{asm:growth},\ref{asm:dissipativity},~\ref{asm:noise_bias}, and~\ref{asm:loja} hold, and the Hessian
satisfies $\norm{\grad^2 f(\theta)}\le \tilde L(1+\norm{\theta}),$  $\forall \theta\in\rd\,$ and some $\tilde L$. 
Then, the SGD iterates with a step size satisfying $\eta < \frac{2}{\tilde L}\wedge c_{L,\alpha} \wedge  c_{L,\alpha}^{\dagger}\wedge 1$ for $c_{L,\alpha},c_{L,\alpha}^{\dagger}$ in~\eqref{eq:constants}
have the stationary distribution $\sdist$,
	\eq{
\sdist (f)-f(\theta^*) \le  g^{-1}\Bigl( \frac{2 M \eta}{2-\tilde L\eta} \Bigr) +\frac{2 M \eta}{2-\tilde L\eta}\,,
	}
	{
	where  \eq{
	 M\coloneqq&12\tilde L \bigl(L + L_\noise^{1/2}+L_\noise^{1/4}\bigr)^2\Bigl( 1 + m+m^{3/4}+\int\norm{\theta}^2\sdist(d\theta)\Bigr)\ \text{ with }\\
	m\coloneqq &\frac{8}{7\alpha}\Bigl[\bigl(\beta+6 L^2+3L_\noise^{1/2} +16 \bigr) \int\norm{\theta}^2\sdist(d\theta)
	                  +  16 L^4 + 2L_\noise + 128 L^6 + 8  L_\noise^{3/2} \Bigr]\,.
}}
	Additionally, if the test function is given as $\phi = \tilde \phi \circ f$ for a function $\tilde \phi$ that is 
	$L_{\tilde \phi}$-Lipschitz, it holds that  %
	\vspace{-.02in}
	\eq{
	\bigl|\sdist(\phi)-\phi(\theta^*)\bigr| \leq
	L_{\tilde \phi}\Biggl\{ g^{-1}\Bigl( \frac{2 M \eta}{2-\tilde L\eta} \Bigr) +\frac{2 M \eta}{2-\tilde L\eta} \Biggr\}.}

\end{theorem}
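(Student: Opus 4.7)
The plan is to combine a one-step descent inequality coming from the Hessian bound with the invariance of $\sdist$ to produce an upper bound on $\E_{\sdist}[\norm{\grad f(\theta)}^2]$, and then translate this gradient bound into the advertised bias bound using the generalized \L{}ojasiewicz condition of Assumption~\ref{asm:loja}.

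First I would apply Taylor's theorem with integral remainder, together with $\norm{\grad^2 f(\vartheta)}\le \tilde L(1+\norm{\vartheta})$. Since along the segment joining $\iter_k$ and $\iter_{k+1}$ the argument $\vartheta$ has norm at most $\norm{\iter_k}+\norm{\iter_{k+1}-\iter_k}$, writing $\tilde g_{k+1}:=\grad f(\iter_k)+\noise_{k+1}(\iter_k)$ gives
\eqn{
f(\iter_{k+1}) \le f(\iter_k) - \eta\inprod{\grad f(\iter_k)}{\tilde g_{k+1}} + \frac{\tilde L\eta^2}{2}\norm{\tilde g_{k+1}}^2 + \frac{\tilde L\eta^2}{2}\bigl(\norm{\iter_k}+\eta\norm{\tilde g_{k+1}}\bigr)\norm{\tilde g_{k+1}}^2.
}
Taking conditional expectation with respect to $\mathcal{F}_k$ kills the cross term in $\tilde g_{k+1}$ and $\E[\norm{\tilde g_{k+1}}^2\mid\mathcal{F}_k]=\norm{\grad f(\iter_k)}^2+\E[\norm{\noise_{k+1}}^2\mid\mathcal{F}_k]$; collecting the $\norm{\grad f(\iter_k)}^2$ contributions from the first-order and third terms yields the clean prefactor $-\eta(1-\tilde L\eta/2)\norm{\grad f(\iter_k)}^2$, while the noise second-moment piece and the state-dependent term $\E[(\norm{\iter_k}+\eta\norm{\tilde g_{k+1}})\norm{\tilde g_{k+1}}^2\mid\mathcal{F}_k]$ collapse into an $\eta^2$-order residual $R(\iter_k)$ whose coefficients depend on $L,\tilde L$ and the fourth-moment bound of Assumption~\ref{asm:noise_bias}.

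Taking total expectation under $\sdist$, the stationarity identity $\E_\sdist[f(\iter_{k+1})]=\sdist(f)=\E_\sdist[f(\iter_k)]$ collapses the $f$-terms and leaves
\eqn{
(1-\tilde L\eta/2)\,\E_\sdist[\norm{\grad f(\theta)}^2] \le \eta\,\E_\sdist[R(\theta)].
}
To bound $\E_\sdist[R(\theta)]$ I would (i) use the linear gradient growth of Assumption~\ref{asm:growth} to estimate $\norm{\tilde g}^2$ by $\norm{\theta}^2$ and $\norm{\noise}^2$, (ii) apply H\"older's inequality together with the fourth-moment control in Assumption~\ref{asm:noise_bias} to produce the $L_\noise^{1/2}$ and $L_\noise^{1/4}$ factors visible in $M$, and (iii) control the $\sdist$-moments of $\norm{\theta}$ by rerunning the Foster--Lyapunov computation of Proposition~\ref{ergodicity}: iterating $\E[\norm{\iter_{k+1}}^2\mid\mathcal{F}_k]\le (1-c_1\eta)\norm{\iter_k}^2+c_2\eta$ at stationarity yields the explicit second-moment constant $m=\int\norm{\theta}^2\sdist(d\theta)$ in $M$, while a parallel fourth-order recursion delivers the $m^{3/4}$-type contributions. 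Substituting these moment bounds then gives $\E_\sdist[\norm{\grad f(\theta)}^2] \le \frac{2M\eta}{2-\tilde L\eta}$.

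The final step converts the gradient bound into the claimed bias bound via Assumption~\ref{asm:loja}. Writing $V(\theta):=f(\theta)-f(\theta^*)\ge 0$ (the \L{}ojasiewicz condition forces every critical point of $f$ to be a global minimum), I split $\E_\sdist[V(\theta)]$ along $\{\norm{\theta-\theta^*}\ge R\}$ and $\{\norm{\theta-\theta^*}<R\}$: on the far region $\gamma V(\theta)\le \norm{\grad f(\theta)}^2$ gives a direct linear contribution, while on the close region $g(V(\theta))\le \norm{\grad f(\theta)}^2$, and since $g$ is convex with $g(0)=0$ its inverse $g^{-1}$ is concave with $g^{-1}(0)=0$, so Jensen's inequality yields $\E_\sdist[V(\theta)\1_{\norm{\theta-\theta^*}<R}]\le \E_\sdist[g^{-1}(\norm{\grad f}^2)]\le g^{-1}(\E_\sdist[\norm{\grad f}^2])$. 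Summing the two contributions (absorbing the $1/\gamma$ from the far region into $M$ if needed) produces the stated bound on $\sdist(f)-f(\theta^*)$; for the composite $\phi=\tilde\phi\circ f$ with $L_{\tilde\phi}$-Lipschitz $\tilde\phi$, the nonnegativity of $V$ gives $|\sdist(\phi)-\phi(\theta^*)|\le L_{\tilde\phi}\E_\sdist[V(\theta)]$ and the bound transfers verbatim. The hard part is the bookkeeping inside the descent step: the state-dependent Hessian factor $(1+\norm{\theta})$ must be split carefully so that one piece contributes the clean $(1-\tilde L\eta/2)$ prefactor while the other lands inside $R(\theta)$ and integrates against $\sdist$ to reproduce exactly the constant $M$ written in the theorem; the Foster--Lyapunov derivation of the moment constants $m$ and its fourth-order analogue is the other place where the most delicate computations concentrate.
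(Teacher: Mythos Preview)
Your proposal is correct and follows essentially the same route as the paper: a Taylor/descent inequality with the state-dependent Hessian bound, stationarity of $\sdist$ to collapse the $f$-terms into a bound on $\E_{\sdist}[\norm{\grad f}^2]$, moment control under $\sdist$ via the Foster--Lyapunov recursions, and finally the splitting along $\{\norm{\theta-\theta^*}\gtrless R\}$ combined with Jensen to invoke Assumption~\ref{asm:loja}.

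A few cosmetic differences worth flagging. First, the paper bounds the intermediate point on the Taylor segment by $1+\max\{\norm{\iter_k},\norm{\iter_{k+1}}\}$ rather than your $1+\norm{\iter_k}+\eta\norm{\tilde g_{k+1}}$; both work, but the paper's choice pushes the bookkeeping onto the \emph{two} stationary moments $\E_{\sdist}[\norm{\theta}^2]$ and $\E_{\sdist}[\norm{\theta}^4]$ via H\"older, avoiding an extra $\eta\norm{\tilde g}^3$ term. Second, in the Jensen step the paper applies Jensen to the convex $g$ (using $g(0)=0$ so that $\E[g(V)\1_A]=\E[g(V\1_A)]\ge g(\E[V\1_A])$), whereas you apply it to the concave $g^{-1}$; these are equivalent. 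Third, your identification of the constant $m$ is slightly off: in the theorem $m$ is the \emph{fourth}-moment bound (the paper's $\mu_{4,\eta}$ from Lemma~\ref{bound4mom}), while the second moment $\int\norm{\theta}^2\sdist(d\theta)$ appears separately in $M$; this does not affect your argument since you correctly note that both a second- and a fourth-order Lyapunov recursion are needed. Finally, your remark about the $1/\gamma$ factor from the far region is well taken---the paper silently drops it in the displayed bound.
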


For smooth objectives with Lipschitz gradient, \cite{karimi2016linear} 
provide a linear rate under the PL-inequality 
(see also~\cite[Lemma 2]{de2017automated}), which yields the asymptotic bias
$|\sdist(\phi)-\phi(\theta^*)|\leq\bigo(\eta).$ The above result recovers their findings as a special case, 
and provides a considerable generalization.%

\subsubsection{Convexity} \label{sec:cvx}

To make the analysis of constant step size SGD complete, 
we digress from the main theme of this paper and consider the constant step size SGD 
in the non-strongly convex regime,
for which there is no bias characterization known to authors. 
We show that,
under the convexity assumption,
one can achieve the same bias control as in the case of PL-inequality.
\begin{theorem}\label{bias_wz_convex}
	Let the Assumptions~\ref{asm:growth},\ref{asm:dissipativity}, and~\ref{asm:noise_bias} hold for a convex function $f$. 
	Then, the SGD iterates with a step size satisfying
	$\eta <c_{L,\alpha}$ for $c_{L,\alpha}$ in~\eqref{eq:constants} 
	admit the stationary distribution $\sdist$, which satisfies
	\eq{
	\sdist (f)-f^* \! \le C\eta\,,
	}
	where
	\eq{
	C\coloneqq  {2\Bigl(3L^2+3L_\noise^{1/2}(1+(\beta/\alpha)^2)\Bigr)}\Bigl (\int\norm{\theta}^2\sdist(d\theta) +\norm{\theta^*}^2 \Bigr)+3L^2\norm{\theta^*}^2 +5L^2
	+{ 2L_\noise^{1/2}\Bigl(1+(\beta/\alpha)^2\Bigr)}\,.
	}
	Additionally, if the test function is given as $\phi = \tilde \phi \circ f$ for a function $\tilde \phi$ that is 
	$L_{\tilde \phi}$-Lipschitz, then,
	\eq{
	\bigl|\sdist(\phi)-\phi(\theta^*) \bigr| \leq L_{\tilde \phi}C\eta.
	}
\end{theorem}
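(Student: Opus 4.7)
The strategy is to combine a one-step SGD identity at stationarity with convexity of $f$, converting a control on $\E_\sdist[\inprod{\grad f(\theta)}{\theta - \theta^*}]$ into a control on the suboptimality $\sdist(f) - f^*$. Let $\theta \sim \sdist$ and let $\theta^+ = \theta - \eta(\grad f(\theta) + \noise(\theta))$ be one SGD step with $\noise$ an independent copy of the noise field. By Proposition~\ref{ergodicity}, the step-size restriction $\eta < c_{L,\alpha}$ ensures $\sdist$ is invariant, so $\theta^+ \sim \sdist$ and $\E[\norm{\theta^+ - \theta^*}^2] = \E[\norm{\theta - \theta^*}^2] < \infty$. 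Expanding the square, taking expectations, and using $\E[\noise(\theta)\mid \theta]=0$ to kill the cross term yields
\eq{
0 = -2\eta\,\E_\sdist\bigl[\inprod{\grad f(\theta)}{\theta - \theta^*}\bigr] + \eta^2\Bigl(\E_\sdist\bigl[\norm{\grad f(\theta)}^2\bigr] + \E_\sdist\bigl[\norm{\noise(\theta)}^2\bigr]\Bigr).
}

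Combining the rearranged identity with the first-order convexity inequality $f(\theta) - f^* \le \inprod{\grad f(\theta)}{\theta - \theta^*}$ and integrating against $\sdist$ gives
\eq{
\sdist(f) - f^* \le \frac{\eta}{2}\Bigl(\E_\sdist\bigl[\norm{\grad f(\theta)}^2\bigr] + \E_\sdist\bigl[\norm{\noise(\theta)}^2\bigr]\Bigr).
}
To conclude $\sdist(f) - f^* \le C\eta$, I would bound each variance term separately. For the gradient, Assumption~\ref{asm:growth} combined with $\norm{\theta} \le \norm{\theta^*} + \norm{\theta-\theta^*}$ and $\E_\sdist[\norm{\theta-\theta^*}^2] \le 2\sdist(\norm{\cdot}^2) + 2\norm{\theta^*}^2$ produces a bound of the form $L^2\bigl(a_1 + a_2\norm{\theta^*}^2 + a_3\sdist(\norm{\cdot}^2)\bigr)$ with small integer constants. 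For the noise, Assumption~\ref{asm:noise_bias} and Jensen's inequality give $\E_\sdist[\norm{\noise(\theta)}^2] \le L_\noise^{1/2}\bigl(1+\sdist(\norm{\cdot}^4)\bigr)^{1/2}$, and a Lyapunov/drift argument under dissipativity (Assumption~\ref{asm:dissipativity}) yields the fourth-moment bound $\sdist(\norm{\cdot}^4)^{1/2} \lesssim (\beta/\alpha)^2$, which is the source of the factor $L_\noise^{1/2}(1+(\beta/\alpha)^2)$ in $C$. Grouping the resulting constants reproduces the stated expression for $C$. The extension to test functions $\phi = \tilde\phi\circ f$ is immediate from $f\ge f^*$ and Lipschitz continuity of $\tilde\phi$:
\eq{
\bigl|\sdist(\phi) - \phi(\theta^*)\bigr| \le L_{\tilde\phi}\int |f(\theta) - f^*|\,\sdist(d\theta) = L_{\tilde\phi}\bigl(\sdist(f) - f^*\bigr) \le L_{\tilde\phi}C\eta.
}

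\textbf{Main obstacle.} The conceptual proof (stationarity identity plus convexity) is short; the bulk of the work lies in the bookkeeping behind the constant $C$. In particular, the fourth-moment bound $\sdist(\norm{\cdot}^4) \lesssim (\beta/\alpha)^4$ requires a Lyapunov drift argument on $V(\theta) = \norm{\theta}^4$ analogous to the moment controls used in Proposition~\ref{bias_wo_local}, which is precisely why Assumption~\ref{asm:noise_bias} strengthens Assumption~\ref{asm:noise} to include a fourth-moment control on $\noise$. The step-size restriction $\eta < c_{L,\alpha}$ then has to simultaneously secure invariance (Proposition~\ref{ergodicity}) and the contractivity of this Lyapunov drift, so that the numerical constants fall into the exact grouping written in the statement of the theorem.
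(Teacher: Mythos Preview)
Your core argument---expand $\norm{\theta^+ - \theta^*}^2$, use stationarity to cancel the second-moment terms, and then invoke the convexity inequality $f(\theta)-f^*\le\inprod{\grad f(\theta)}{\theta-\theta^*}$---is exactly the paper's approach. In fact the paper simply quotes the intermediate bound $\E_\sdist[\inprod{\grad f(\theta)}{\theta-\theta^*}]\le C\eta$ already derived in the proof of Theorem~\ref{bias_wz_disp} (display~\eqref{bias_inq}) and then applies convexity in one line; your derivation reproduces that bound from scratch.

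The one place you diverge is the treatment of the noise term, and there your route does not quite match the stated hypotheses. You propose to bound $\E_\sdist[\norm{\noise(\theta)}^2]$ via a fourth-moment bound on $\sdist$, but the fourth-moment Lyapunov argument (Lemma~\ref{bound4mom}) requires the stronger step-size restriction $\eta\le c_{L,\alpha}\wedge c_{L,\alpha}^\dagger\wedge 1$, whereas the theorem only assumes $\eta<c_{L,\alpha}$. The paper avoids this entirely: from Assumption~\ref{asm:noise_bias} one has $\E[\norm{\noise(\theta)}^2\mid\theta]\le L_\noise^{1/2}(1+\norm{\theta}^4)^{1/2}\le L_\noise^{1/2}(1+\norm{\theta}^2)$, so only the \emph{second} moment of $\sdist$ is needed. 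The factor $L_\noise^{1/2}(1+(\beta/\alpha)^2)$ in $C$ then arises not from a drift bound on $\sdist(\norm{\cdot}^4)$ but from recentering $1+\norm{\theta}^2$ around $\theta^*$ and bounding $\norm{\theta^*}$ via dissipativity (this is Lemma~\ref{noise_local}). With that simpler bound in place, your argument coincides with the paper's and reproduces the stated constant $C$.
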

Convexity implies that any critical point $\theta^*$ is a global minimizer, which is similar to the PL-inequality; yet,
it does not imply a unique minimizer unlike strong convexity. The resulting
step size dependency of the bias is the same as in the case of PL-inequality, which is because
both of these conditions assert a similar gradient-based domination criterion on the sub-optimality. That is, we have
in the convex case $\inprod{\nabla f(\theta)}{ \theta- \theta^*}\geq f(\theta) - f(\theta^*)$, and in the case of PL-inequality $\gamma^{-1}\|f(\theta)\|^2 \geq f(\theta) - f(\theta^*)$.

\section{Examples and Numerical Studies}

We now demonstrate the asymptotic normality and bias in non-convex optimization with two examples arising in robust statistics for which our assumptions can be verified. We consider the online SGD setting with the update rule~\eqref{ref:sgd_org_online} and also the semi-stochastic setting, where the noise sequence $\{\noise_k(\theta)\}_{k\ge 1}$ is independent of $\theta$ and is simply a sequence of i.i.d. random vectors -- such a setting helps to demonstrate how to verify our assumptions explicitly.

\subsection{Regularized MLE for heavy-tailed linear regression}\label{mle1}

While the least-squares loss function is common in the context of linear regression,
it is well-documented that it suffers from robustness issues when the error distribution
of the model is heavy-tailed~\cite{huber2004robust}. Indeed in fields like finance,
oftentimes the Student's $t$-distribution is used to model the heavy-tailed error~\cite{fan2017elements}. In this case, defining the random vector $Z:=(X,Y)$,  the stochastic optimization problem in~\eqref{eq:onlinesgd_main} is given by the expectation of the function
\eq{
F(Z,\theta):=\log\bigl(1+(Y-\inprod{X}{\theta})^2\bigr)+\frac{\lambda}{2}\norm{\theta}^2,
}
which is non-convex (as a function of $\theta$) for small penalty levels $\lambda$. Correspondingly, given $n$ independent and identically distributed samples $(\vecx_i, y_i)$, the finite-sum version of the optimization problem corresponds to minimizing the following objective function
\eq{
	\label{eq:reg-mle}
		f(\theta)\coloneqq \frac{1}{2m}\sum_{i=1}^m \log\bigl(1+(y_i-\inprod{\vecx_i}{\theta})^2\bigr)+\frac{\lambda}{2}\norm{\theta}^2.
}
We consider the finite-sum setup as we would be able to verify our assumptions and empirically demonstrate the bias result in a clean manner in this setup, as we demonstrate next. 

\subsubsection{Semi-stochastic Gradient Descent}\label{sec:ex1_fullsgd}
In the experiments, $\mathbf{X}:= (\vecx_1,\dots,\vecx_m)^\top\in\R^{m\times d}$ represents a fixed design matrix
generated from $\mathbf{X}_{ij}\sim \text{Bernoulli}(\pm 1)/\sqrt{d}$, 
and $\vecy:=(y_1,\dots,y_m)^\top\in\R^m$ represents the response vector generated according to
the linear model $y_i = \inprod{\vecx_i}{\theta_\text{true}} + \varepsilon$ with $(\theta_\text{true})_i \overset{\op{iid}}{\sim}\op{Unif}(0,1),$
and $\varepsilon$
is Student-t ($\text{df}=10$) noise. 
We choose $m=5000$, $d=10$, and the Lipshitz test function~$\phi(\theta)=\norm{\theta}$ unless stated otherwise.\vspace{0.1in}
\begin{figure}[t]
	\centering
	\hspace*{-.1in}	\includegraphics[width=6.9in]{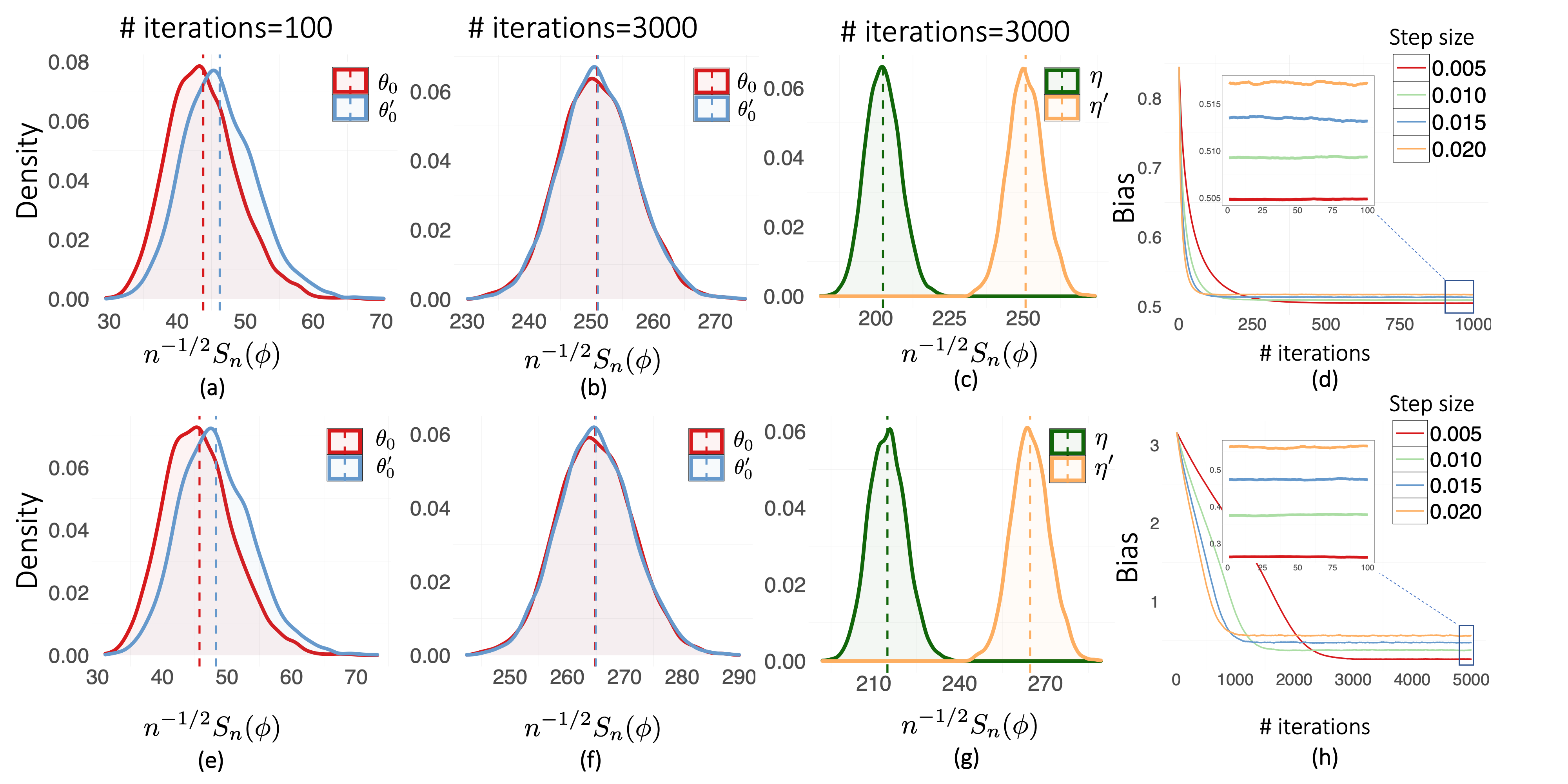}
	\caption{%
          First and second rows correspond to non-convex examples in Sections~\ref{sec:ex1_fullsgd} and~\ref{sec:ex2_fullsgd}, respectively.
          Figures (a,b), (e,f) show the density of $n^{-1/2}S_n(\phi)=n^{-1/2}\sum_{k=1}^n \phi(\iter_k)$ with different initializations (red, blue) for different number of iterations. Figures (c,g) show the same density with different step sizes. Figures (d,h) show the evolution of bias against the number of iterations.
	} 
	\label{fig:1}
\end{figure}
\begin{figure}[t]
	\centering
	\hspace*{-.1in}	\includegraphics[width=5.3in]{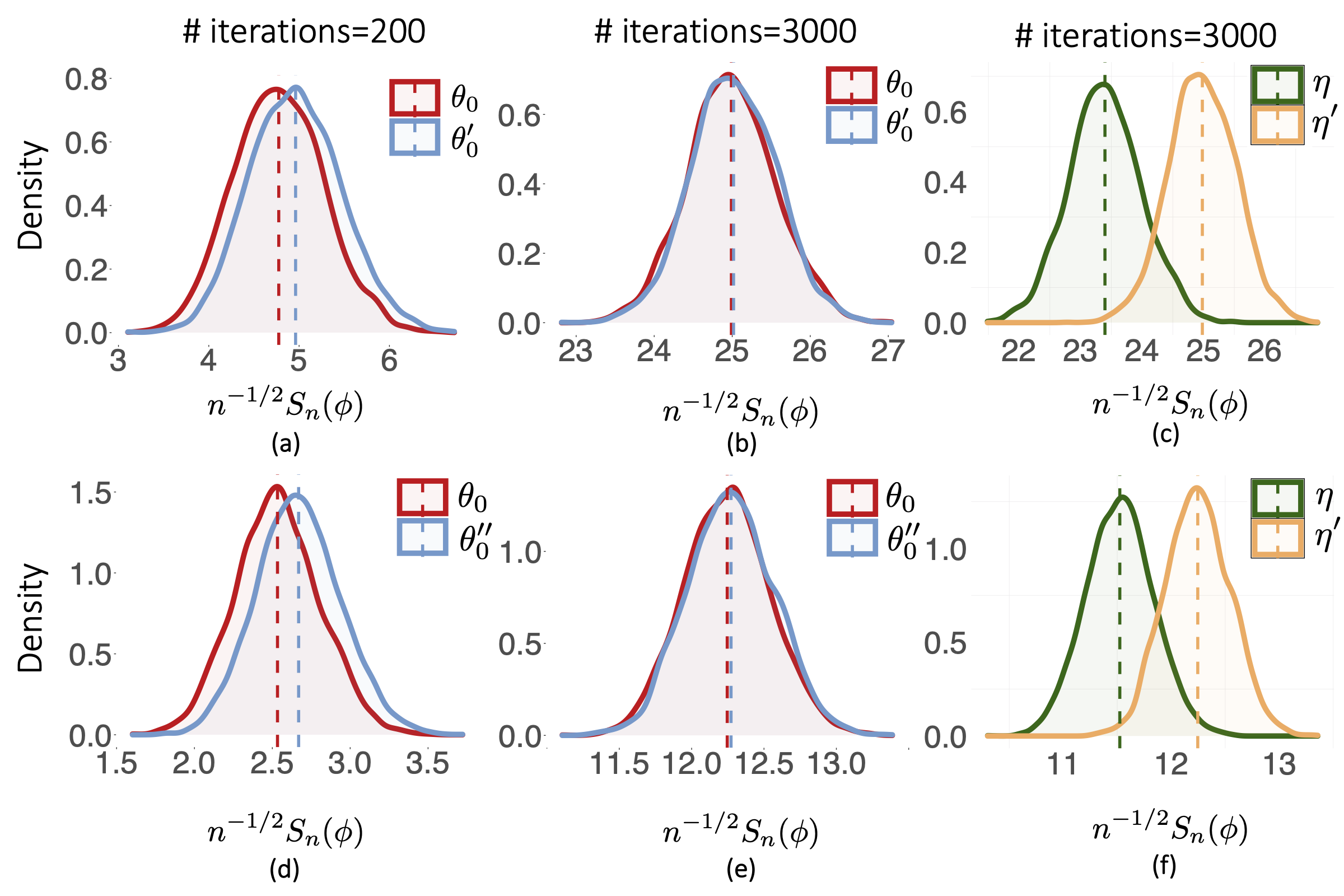}
	\caption{%
          First and second rows correspond to non-convex examples in Sections~\ref{sec:ex1_semisgd} and~\ref{sec:ex2_semisgd}, respectively.
          Figures (a,b), (e,f) show the density of $n^{-1/2}S_n(\phi)=n^{-1/2}\sum_{k=1}^n \phi(\iter_k)$ with different initializations (red, blue) for different number of iterations. 
          Figures (c,g) show the same density with different step sizes. 
	} 
	\label{fig:2}
\end{figure}

\noindent\textbf{Asymptotic normality:} Fig.~\ref{fig:1}-(a,b,c,d) demonstrates
the normality and the bias of SGD with heavy-tailed gradient noise distributed as Student-t ($\text{df}=5$).
Each plot has two density curves where red and blue curves in Fig.~\ref{fig:1}-(a,b) respectively correspond
to initializations with $\theta_0=(1,\dots,1)$
and $\theta'_0=(1.5,\dots,1.5)$ with step size $\eta = 0.3$;  green and orange curves in Fig.~\ref{fig:1}-c
correspond to step sizes $\eta=0.2$ and $\eta'=0.3$
with initialization~$\theta_0$. 
All experiments are based on 4000 Monte Carlo runs.
We observe in Fig.~\ref{fig:1}-a that different initializations have an early impact on the
normality when the number of iterations is moderate. However, when SGD is run for a longer time,
this effect is removed as in Fig.~\ref{fig:1}-b. Lastly, Fig.\ref{fig:1}-c demonstrates the
effect of step size on the normality,
where the means are different for different step sizes
as they depend on the stationary distribution $\pi_\eta$.
Indeed, the above results are not surprising.
One can verify that the objective function~\eqref{eq:reg-mle} satisfies Assumptions~\ref{asm:growth}
and \ref{asm:dissipativity}.
The above objective has the following gradient
\eqn{
	\nabla f(\theta) =\frac{1}{m}\sum_{i=1}^m\frac{\vecx_i(\inprod{\vecx_i}{\theta} - y_i)}{1+(y_i-\inprod{\vecx_i}{\theta})^2}
	+\lambda\theta.
}
Because
$
\norm{\nabla f(\theta)} \leq \left(\lambda_{\text{max}}(\tfrac{1}{m}\mathbf{X}^\top \mathbf{X}) + \lambda\right)\norm{\theta} +
\frac{1}{{m}}\norm{\mathbf{X}^\top\vecy}
$
by the triangle inequality and the fact that the denominator is lower bounded by 1, Assumption~\ref{asm:growth} holds.
For Assumption~\ref{asm:dissipativity}, we write
\eqn{
	\inprod{\nabla f(\theta)}{\theta} =\frac{1}{m}\sum_{i=1}^m\frac{(\inprod{\vecx_i}{\theta})^2-y_i\inprod{\vecx_i}{\theta}}{1+(y_i-\inprod{\vecx_i}{\theta})^2}
	+\lambda\norm{\theta}^2 \geq  - \big\|\frac{1}{m}\mathbf{X}^\top \vecy\big\| \|\theta\| + {\lambda}\|\theta\|^2,
}
by
Cauchy-Schwartz inequality.
Next, using Young's inequality
$
- \big\|\tfrac{1}{m}\mathbf{X}^\top \vecy\big\| \|\theta\| \!\geq \!-  \frac{1}{\lambda}\big\|\tfrac{1}{m}\mathbf{X}^\top \vecy\big\|^2 \!-\! \frac{\lambda}{4}\|\theta\|^2\,,
$
Assumption~\ref{asm:dissipativity} holds
for $\alpha\!=\!\lambda/4$ and 
$\beta \!=\! \frac{1}{\lambda}\big\|\tfrac{1}{m}\mathbf{X}^\top \vecy\big\|^2\!\!$.
Finally, the gradient noise has finite 4-th moment with support on $\mathbb{R}^d$; thus,
Assumption~\ref{asm:noise} is satisfied, and Theorem~\ref{general_clt_arb_sdist} is applicable.

\vspace{0.1in}
\noindent\textbf{Bias:} In order to demonstrate the bias behavior without speculation,
one needs the global minimum $\theta^*$ of the non-convex problem.
Therefore, we simplify the problem~\eqref{eq:reg-mle}
to another non-convex problem
\eq{\label{eq:reg-mle-simple}
	f(\theta)\coloneqq  \frac{1}{2} \log \bigl(1+\|\theta\|^2 \bigr)+\frac{\lambda}{2}\norm{\theta}^2\,.
}
Notice that the general structure is the same, with no data, and $\theta^*$ is known, i.e. $\theta^*=0$.

We choose the test function~$\phi(\theta)=\tilde\phi\circ f(\theta),$ where $\tilde\phi(x)=1/(1+e^{-x})$ is Lipschitz.
Fig.~\ref{fig:1}-(d) demonstrates how the bias $\pi_\eta(\phi) - \phi(\theta^*)$ changes over iterations, where
different curves correspond to different step sizes. We notice that larger step size provides fast initial
decrease; yet the resulting asymptotic bias is larger
which aligns with our theory. To verify assumptions, we compute the gradient and the Hessian respectively as

$$\nabla f(\theta) = \frac{\theta}{1+\|\theta\|^2 }+\lambda \theta, \qquad~\text{and}\qquad
\nabla^2 f(\theta) = \frac{I}{1+\|\theta\|^2} - \frac{2\theta\theta^\top}{(1+\|\theta\|^2)^2} + I\lambda,
$$
with $I$ denoting the identity matrix.
For small $\lambda$ the above function is clearly non-convex. To see this, choose $\lambda = 0.1, u = \theta/\|\theta\|$
and note that $\inprod{u}{\nabla^2 f(\theta)u} <0$ whenever $1.5 \leq \|\theta\|\leq 2$. 
Also, note that
$$
\|\nabla f(\theta)\|^2 = \|\theta\|^2 \Bigl(\lambda + 1/(1+\|\theta\|^2) \Bigr)^2 \geq \frac{2\lambda^2}{1+\lambda} \bigl\{f(\theta) - f(\theta^*)\bigr\}.
$$
Thus, Assumption~\ref{asm:loja} is satisfied for $\gamma=\frac{2\lambda^2}{1+\lambda}$ and $g(x)=\gamma x^2$.
Following the same steps in the regression setting, one can also verify Assumptions~\ref{asm:growth}-\ref{asm:noise}; hence, Theorem~\ref{bias_wz_loja} can be applied.

\subsubsection{Online Stochastic Gradient Descent}\label{sec:ex1_semisgd}
{
For our online SGD experiments, we use $b_k=2$, for all iterations $k$ to obtain the stochastic gradient. We also experimented with $m_k=10,50$ and observed similar behavior.  The distribution of the random vector $Z=(X,Y) \in \mathbb{R}^{d+1}$, is as follows: Each coordinate of the vector $X\in\rd$, is generated as $\text{Bernoulli}(\pm 1)/\sqrt{d}$ and given vector $X$, the response $Y\in\R$ is generated according to the linear model $Y= \inprod{X}{\theta_\text{true}} + \varepsilon$ with each coordinate of $\theta_\text{true}\in\rd$ generated from $\op{Unif}(0,1),$ and fixed, and $\varepsilon\in\R$ is Student-t ($\text{df}=10$) noise. 
We choose $d=10,$ and set a burn-in period of size 100.}\\

\noindent\textbf{Asymptotic normality:} Fig.~\ref{fig:2}-(a,b,c) demonstrates the normality of online SGD.
Each plot has two density curves where red and blue curves in Fig.~\ref{fig:2}-(a,b) respectively correspond
to initializations with $\theta_0=(1,\dots,1)$
and $\theta^{'}_0=(2.5,\dots,2.5)$ with step size $\eta = 0.3$;  
green and orange curves in Fig.~\ref{fig:2}-c
correspond to step sizes $\eta=0.2$ and $\eta'=0.3$
with initialization~$\theta_0$. 
All experiments are based on 4000 Monte Carlo runs.
We observe in Fig.~\ref{fig:2}-a that different initializations have an early impact on the
normality when the number of iterations is moderate. 
However, when SGD is run for a longer time,
this effect is removed as in Fig.~\ref{fig:2}-b. 
Lastly, Fig.\ref{fig:2}-c demonstrates the
effect of step size on the normality,
where the means are different for different step sizes
as they depend on the stationary distribution $\pi_\eta$.

\subsection{Regularized Blake-Zisserman MLE for corrupted linear regression}\label{mle2}

While the above example was based on linear-regression with heavy-tailed noise, we now consider the case of heavy-tailed regression with corrupted noise. In this setup, the noise model in linear regression is assumed to be Gaussian, but a fraction of the noise vectors are assumed to be corrupted in the sense that they are drawn from a uniform distribution. Such a scenario arises in visual reconstruction problems; see for example~\cite{blake1987visual} for details. In this case, defining the random vector $Z:=(X,Y)$,  the stochastic optimization problem in~\eqref{eq:onlinesgd_main} is given by the expectation of the function
\eq{
F(Z,\theta):= \log \Bigl(\nu + e^{-(Y-\inprod{X}{\theta})^2} \Bigr)+\frac{\lambda}{2}\norm{\theta}^2,~~~\nu>0.
}
Similar the previous case, we also consider the finite-sum version: Given $n$ independent and identically distributed samples $(\vecx_i, y_i)$, it corresponds to minimizing the following objective function
\eqn{
	f(\theta)=-\frac{1}{2m}\sum_{i=1}^m \log \Bigl(\nu + e^{-(y_i-\inprod{\vecx_i}{\theta})^2} \Bigr)+\frac{\lambda}{2}\norm{\theta}^2,~~~\nu>0\,.
}
\subsubsection{Semi-stochastic Gradient Descent}\label{sec:ex2_fullsgd}
In the experiments, we use the same setup and parameters as in Section~\ref{sec:ex1_fullsgd}.
\vspace{0.1in}

\noindent\textbf{Asymptotic normality:} Fig~\ref{fig:1}-(e,f,g) demonstrates
the asymptotic normality of the SGD with heavy-tailed gradient noise Student-t($\text{df}=6$).
The experimental setup is the same as the previous example with the same values for $\theta_0,\theta_0',\eta,\eta'$.
We observe the early impact of initialization in Fig~\ref{fig:1}-a,
the clear normality in Fig.~\ref{fig:1}-b, and the effect of step size on CLT in Fig.\ref{fig:1}-c.
These observations also align with our theory since this objective also satisfies our assumptions. Indeed, 
it has the gradient
\eqn{
	\grad f(\theta)&= -\frac{1}{m}\sum_{i=1}^m \frac{\vecx_i \bigl(y_i-\inprod{\vecx_i}{\theta} \bigr)e^{-(y_i-\inprod{\vecx_i}{\theta})^2}}{\nu+e^{-(y_i-\inprod{\vecx_i}{\theta})^2}}+\lambda\theta\,.
}
The triangle inequality yields 
$$
\norm{\grad f(\theta)}
\le \frac{1}{1+\nu}\big\|\tfrac{1}{m}\mathbf{X}^\top \vecy\big\|  + \Big( \frac{1}{1+\nu}\lambda_{\text{max}}(\tfrac{1}{m}\mathbf{X}^\top \mathbf{X})+\lambda\Big)\norm{\theta},
$$
which verifies Assumption~\ref{asm:growth}. 
To verify the dissipativity assumption, we can write
\eqn{
	\inprod{\grad f(\theta)}{\theta}
	= \inprod{-\frac{1}{m}\sum_{i=1}^m \frac{\vecx_i \bigl(y_i-\inprod{\vecx_i}{\theta}\bigr)e^{-(y_i-\inprod{\vecx_i}{\theta})^2}}{\nu+e^{-(y_i-\inprod{\vecx_i}{\theta})^2}}+\lambda\theta}{\theta}
	\ge  -\frac{1}{1+\nu}\big\|\tfrac{1}{m}\mathbf{X}^\top \vecy\big\| \norm{\theta}+\lambda\norm{\theta}^2\,.
}
The inequality follows from the triangle and Cauchy-Schwartz inequalities.
Using Young's inequality, we obtain
$$- \frac{1}{1+\nu}\big\|\tfrac{1}{m}\mathbf{X}^\top \vecy\big\| \|\theta\| \geq -  \frac{1}{\lambda(1+\nu)}\big\|\tfrac{1}{m}\mathbf{X}^\top \vecy\big\|^2 - \frac{\lambda}{4(1+\nu)}\|\theta\|^2,$$
which shows that the above function is dissipative for $\alpha=\lambda/2$ and 
$\beta = \frac{1}{2\lambda(1+\nu)^2}\big\|\tfrac{1}{m}\mathbf{X}^\top \vecy\big\|^2$; thus, Assumption~\ref{asm:dissipativity} holds.

\vspace{0.1in}
\noindent\textbf{Bias:} Similar to the previous example, we simplify the problem so that we can
compute the bias $\pi_\eta(\phi) - \phi(\theta^*)$. We consider the function
\eqn{
	f(\theta)\coloneqq -\frac{1}{2} \log\big(\nu + e^{-\|\theta\|^2}\big)+\frac{\lambda}{2}\norm{\theta}^2,~~~\nu>0\,.
}
We observe in Fig.\ref{fig:1}-h that smaller step sizes lead to smaller asymptotic bias. To verify that this can be predicted from our theory, we write the gradient and the Hessian respectively, as
$$\nabla f(\theta)=\frac{\theta}{1 + \nu e^{ \| \theta\|^2}} +\lambda \theta~\qquad\text{and}~\qquad
\nabla^2 f(\theta)=\frac{I}{1 + \nu e^{ \| \theta\|^2}}- \frac{2\nu e^{\|\theta\|^2}}{(1 + \nu e^{ \| \theta\|^2})^2}\theta\theta^\top +\lambda I.$$
First, note that the Hessian can have negative eigenvalues for small values of $\lambda$. For example, for $\nu=1$, $\lambda=0.1$, and the unit direction $u=\theta/\|\theta\|$, we have $\inprod{u}{\nabla^2 f(\theta)u} <0$
for $1 \leq \|\theta\|^2 \leq 2$; thus the function is non-convex.
But we also have
\eqn{
	\inprod{\nabla f(\theta)}{ \theta}= \|\theta\|^2 \Big(\lambda + 1/\big(1 + \nu e^{ \| \theta\|^2}\big)\Big) \geq \Big(\lambda + 1/\big(1 + \nu e^{ R^2}\big)\Big) \|\theta\|^2
}
for $\|\theta\| \leq R$ and $  \inprod{\nabla f(\theta)}{ \theta} \geq \lambda \|\theta\|^2$ for $\|\theta\|^2 >R$; thus, Assumption~\ref{asm:dissipativity2} is satisfied for $\alpha=\lambda$, and any
$\beta \ge 0$ and $g(x) = \Big(\lambda + 1/\big(1 + \nu e^{ R^2}\big)\Big) x^2$.
Following the same steps in the previous example, 
one can also verify Assumptions~\ref{asm:growth}-\ref{asm:noise}; therefore, Theorem~\ref{bias_wz_disp} follows.
\subsubsection{Online Stochastic Gradient Descent}\label{sec:ex2_semisgd}

In the experiments, we use the same setup as in Section~\ref{sec:ex1_semisgd}.
\vspace{0.1in}

\noindent\textbf{Asymptotic normality:} Fig.~\ref{fig:2}-(d,e,f) demonstrates the normality of online SGD.
Each plot has two density curves where red and blue curves in Fig.~\ref{fig:2}-(d,e) respectively correspond
to initializations with $\theta_0=(1,\dots,1)$
and $\theta^{''}_0=(1.5,\dots,1.5)$ with step size $\eta = 0.3$;  
green and orange curves in Fig.~\ref{fig:2}-c
correspond to step sizes $\eta=0.2$ and $\eta'=0.3$
with initialization~$\theta_0$. 
All experiments are based on 4000 Monte Carlo runs.
We observe in Fig.~\ref{fig:2}-d that different initializations have an early impact on the
normality when the number of iterations are moderate. 
However, when SGD is run for a longer time,
this effect is removed as in Fig.~\ref{fig:2}-e. 
Lastly, Fig.\ref{fig:2}-f demonstrates the
effect of step size on the normality,
where the means are different for different step sizes
as they depend on the stationary distribution $\pi_\eta$.

\section{Discussions}\label{sec:discussion}

By leveraging the connection between constant step size SGD and Markov chains~\cite{dieuleveut2017}, we provided theoretical results characterizing the bias and the fluctuations of constant step size SGD for non-convex and non-smooth optimization which arises frequently in modern statistical learning. 

\vspace{0.1in}
\noindent\textbf{Estimating the Asymptotic Variance:} As discussed in Section~\ref{sec:cltheory}, in order for using the established CLT to compute confidence intervals in practice, the population expectation $\sdist(\phi)$ and asymptotic variance $\sigma^2_{\sdist}(\phi)$ have to be estimated. We suggest the following three ways to do so:

\begin{itemize}%
	\item Estimate them based on sample average of a single trajectory of SGD iterates, i.e., the mean $\sdist(\phi)$ is estimated as $n^{-1}\sum_{k=0}^{n-1}\phi\big(\iter_{k}\big)$ and the variance $\sigma^2_{\sdist}(\phi)$ by adopting the online approach of~\cite{zhu2020fully} to the constant step size setting.%
	\item First run $N$ parallel SGD trajectories and compute the average of each trajectory, to obtain $N$ independent observations from the stationary distribution $\pi_\eta$. Next, use the $N$ observations to compute the sample mean and the sample variance estimators for $\sdist(\phi)$ and $\sigma^2_{\sdist}(\phi)$. %
	\item Leverage the online bootstrap and variance estimation approaches proposed in~\cite{fang2018online, su2018statistical,chen2020statistical} for the constant step size SGD setting in order to obtain estimates for $\sdist(\phi)$ and $\sigma^2_{\sdist}(\phi)$.  
\end{itemize}
A theoretical investigation on the relative merits of the above approaches is left as future work.

\section*{Acknowledgements}
MAE is partially funded by NSERC Grant [2019-06167], Connaught New Researcher Award, CIFAR AI Chairs program, and CIFAR AI Catalyst grant. KB is partially supported by a seed grant from the Center for Data Science and Artificial Intelligence Research, UC Davis. SV is partially supported by a discovery grant from NSERC of Canada and a Connaught New Researcher Award. 
{The authors thank Yichen Zhang for helpful comments on an earlier version of this manuscript.}

\bibliographystyle{amsalpha}
\bibliography{./bib}

\newcommand{\etalchar}[1]{$^{#1}$}
\providecommand{\bysame}{\leavevmode\hbox to3em{\hrulefill}\thinspace}
\providecommand{\MR}{\relax\ifhmode\unskip\space\fi MR }
\providecommand{\MRhref}[2]{%
  \href{http://www.ams.org/mathscinet-getitem?mr=#1}{#2}
}
\providecommand{\href}[2]{#2}
\begin{thebibliography}{CCAY{\etalchar{+}}18}

\bibitem[ABE19]{anastasiou2019normal}
Andreas Anastasiou, Krishnakumar Balasubramanian, and Murat~A Erdogdu,
  \emph{Normal approximation for stochastic gradient descent via non-asymptotic
  rates of martingale clt}, Conference on Learning Theory, 2019, pp.~115--137.

\bibitem[AMP00]{aguech2000perturbation}
Rafik Aguech, Eric Moulines, and Pierre Priouret, \emph{On a perturbation
  approach for the analysis of stochastic tracking algorithms}, SIAM Journal on
  Control and Optimization \textbf{39} (2000), no.~3, 872--899.

\bibitem[BDMP17]{brosse2017sampling}
Nicolas Brosse, Alain Durmus, {\'E}ric Moulines, and Marcelo Pereyra,
  \emph{Sampling from a log-concave distribution with compact support with
  proximal langevin monte carlo}, COLT, 2017.

\bibitem[BEL18]{bubeck2018sampling}
S{\'e}bastien Bubeck, Ronen Eldan, and Joseph Lehec, \emph{Sampling from a
  log-concave distribution with projected langevin monte carlo}, Discrete \&
  Computational Geometry (2018).

\bibitem[Ben96]{benaim1996dynamical}
Michel Benaim, \emph{A dynamical system approach to stochastic approximations},
  SIAM Journal on Control and Optimization \textbf{34} (1996), no.~2, 437--472.

\bibitem[BH18]{balles2018dissecting}
Lukas Balles and Philipp Hennig, \emph{Dissecting adam: The sign, magnitude and
  variance of stochastic gradients}, International Conference on Machine
  Learning, 2018, pp.~404--413.

\bibitem[BVB16]{boumal2016non}
Nicolas Boumal, Vlad Voroninski, and Afonso Bandeira, \emph{The non-convex
  burer-monteiro approach works on smooth semidefinite programs}, Advances in
  Neural Information Processing Systems, 2016, pp.~2757--2765.

\bibitem[BZ87]{blake1987visual}
Andrew Blake and Andrew Zisserman, \emph{Visual reconstruction}, 1987.

\bibitem[CCAY{\etalchar{+}}18]{cheng2018sharp}
Xiang Cheng, Niladri~S Chatterji, Yasin Abbasi-Yadkori, Peter~L Bartlett, and
  Michael~I Jordan, \emph{Sharp convergence rates for langevin dynamics in the
  nonconvex setting}, arXiv preprint arXiv:1805.01648 (2018).

\bibitem[CCBJ17]{cheng2017underdamped}
Xiang Cheng, Niladri~S Chatterji, Peter~L Bartlett, and Michael~I Jordan,
  \emph{Underdamped langevin mcmc: A non-asymptotic analysis}, arXiv preprint
  arXiv:1707.03663 (2017).

\bibitem[Chu54]{chung1954stochastic}
Kai~Lai Chung, \emph{On a stochastic approximation method}, The Annals of
  Mathematical Statistics (1954), 463--483.

\bibitem[CLC19]{chi2019nonconvex}
Yuejie Chi, Yue~M Lu, and Yuxin Chen, \emph{Nonconvex optimization meets
  low-rank matrix factorization: An overview}, IEEE Transactions on Signal
  Processing \textbf{67} (2019), no.~20, 5239--5269.

\bibitem[CLPC19]{cai2019nonconvex}
Changxiao Cai, Gen Li, H~Vincent Poor, and Yuxin Chen, \emph{Nonconvex low-rank
  tensor completion from noisy data}, Advances in Neural Information Processing
  Systems, 2019, pp.~1861--1872.

\bibitem[CLT{\etalchar{+}}20]{chen2020statistical}
Xi~Chen, Jason~D Lee, Xin~T Tong, Yichen Zhang, et~al., \emph{Statistical
  inference for model parameters in stochastic gradient descent}, The Annals of
  Statistics \textbf{48} (2020), no.~1, 251--273.

\bibitem[CT18]{chee2018convergence}
Jerry Chee and Panos Toulis, \emph{Convergence diagnostics for stochastic
  gradient descent with constant learning rate}, International Conference on
  Artificial Intelligence and Statistics, 2018, pp.~1476--1485.

\bibitem[Dal17]{dalalyan2017theoretical}
Arnak~S Dalalyan, \emph{Theoretical guarantees for approximate sampling from
  smooth and log-concave densities}, Journal of the Royal Statistical Society:
  Series B (Statistical Methodology) \textbf{79} (2017), no.~3, 651--676.

\bibitem[DCLZ19]{ding2019error}
Zhiyan Ding, Yiding Chen, Qin Li, and Xiaojin Zhu, \emph{Error lower bounds of
  constant step-size stochastic gradient descent}, arXiv preprint
  arXiv:1910.08212 (2019).

\bibitem[DCWY18]{dwivedi2018log}
Raaz Dwivedi, Yuansi Chen, Martin~J Wainwright, and Bin Yu, \emph{Log-concave
  sampling: Metropolis-hastings algorithms are fast!}, arXiv preprint
  arXiv:1801.02309 (2018).

\bibitem[DDB19]{dieuleveut2017}
Aymeric Dieuleveut, Alain Durmus, and Francis Bach, \emph{Bridging the gap
  between constant step size stochastic gradient descent and markov chains},
  The Annals of Statistics (to appear) (2019).

\bibitem[DK17]{dalalyan2017user}
Arnak~S Dalalyan and Avetik~G Karagulyan, \emph{User-friendly guarantees for
  the langevin monte carlo with inaccurate gradient}, arXiv preprint
  arXiv:1710.00095 (2017).

\bibitem[DM17]{durmus2017nonasymptotic}
Alain Durmus and Eric Moulines, \emph{Nonasymptotic convergence analysis for
  the unadjusted langevin algorithm}, The Annals of Applied Probability
  \textbf{27} (2017), no.~3, 1551--1587.

\bibitem[DMPS18]{douc2018}
Randal Douc, Eric Moulines, Pierre Priouret, and Philippe Soulier, \emph{Markov
  chains}, Springer, 2018.

\bibitem[DR18]{duchi2016local}
John Duchi and Feng Ruan, \emph{Asymptotic optimality in stochastic
  optimization}, Arxiv Preprint (2018).

\bibitem[DRD18]{dalalyan2018sampling}
Arnak~S Dalalyan and Lionel Riou-Durand, \emph{On sampling from a log-concave
  density using kinetic langevin diffusions}, arXiv preprint arXiv:1807.09382
  (2018).

\bibitem[DYJG17]{de2017automated}
Soham De, Abhay Yadav, David Jacobs, and Tom Goldstein, \emph{Automated
  inference with adaptive batches}, Artificial Intelligence and Statistics,
  2017, pp.~1504--1513.

\bibitem[EH20]{erdogdu2020convergence}
Murat~A Erdogdu and Rasa Hosseinzadeh, \emph{On the convergence of langevin
  monte carlo: The interplay between tail growth and smoothness}, arXiv
  preprint arXiv:2005.13097 (2020).

\bibitem[EMS18]{erdogdu2018global}
Murat~A Erdogdu, Lester Mackey, and Ohad Shamir, \emph{Global non-convex
  optimization with discretized diffusions}, Advances in Neural Information
  Processing Systems, 2018, pp.~9671--9680.

\bibitem[EvdG18]{elsener2018sharp}
Andreas Elsener and Sara van~de Geer, \emph{Sharp oracle inequalities for
  stationary points of nonconvex penalized m-estimators}, IEEE Transactions on
  Information Theory \textbf{65} (2018), no.~3, 1452--1472.

\bibitem[Fab68]{fabian1968asymptotic}
Vaclav Fabian, \emph{On asymptotic normality in stochastic approximation}, The
  Annals of Mathematical Statistics \textbf{39} (1968), no.~4, 1327--1332.

\bibitem[FD82]{freedman1982inconsistent}
DA~Freedman and P~Diaconis, \emph{On inconsistent $m$-estimators}, The Annals
  of Statistics \textbf{10} (1982), no.~2, 454--461.

\bibitem[FLLZ18]{fang2018spider}
Cong Fang, Chris~Junchi Li, Zhouchen Lin, and Tong Zhang, \emph{Spider:
  Near-optimal non-convex optimization via stochastic path-integrated
  differential estimator}, Advances in Neural Information Processing Systems,
  2018, pp.~689--699.

\bibitem[FP99]{fort1999asymptotic}
Jean-Claude Fort and Gilles Pages, \emph{Asymptotic behavior of a markovian
  stochastic algorithm with constant step}, SIAM journal on control and
  optimization \textbf{37} (1999), no.~5, 1456--1482.

\bibitem[FXY18]{fang2018online}
Yixin Fang, Jinfeng Xu, and Lei Yang, \emph{Online bootstrap confidence
  intervals for the stochastic gradient descent estimator}, The Journal of
  Machine Learning Research \textbf{19} (2018), no.~1, 3053--3073.

\bibitem[FY17]{fan2017elements}
Jianqing Fan and Qiwei Yao, \emph{The elements of financial econometrics},
  Cambridge University Press, 2017.

\bibitem[GBC16]{goodfellow2016deep}
Ian Goodfellow, Yoshua Bengio, and Aaron Courville, \emph{Deep learning}, MIT
  press, 2016.

\bibitem[Gey94]{geyer1994asymptotics}
Charles~J Geyer, \emph{On the asymptotics of constrained $ m $-estimation}, The
  Annals of Statistics \textbf{22} (1994), no.~4, 1993--2010.

\bibitem[GHJY15]{ge2015escaping}
Rong Ge, Furong Huang, Chi Jin, and Yang Yuan, \emph{Escaping from saddle
  points - online stochastic gradient for tensor decomposition}, Conference on
  Learning Theory, 2015, pp.~797--842.

\bibitem[GL13]{ghadimi2013stochastic}
Saeed Ghadimi and Guanghui Lan, \emph{Stochastic first-and zeroth-order methods
  for nonconvex stochastic programming}, SIAM Journal on Optimization
  \textbf{23} (2013), no.~4, 2341--2368.

\bibitem[GLCY16]{gao2016ojasiewicz}
Bin Gao, Xin Liu, Xiaojun Chen, and Ya-xiang Yuan, \emph{On the
  $\{$$\backslash$L$\}$ ojasiewicz exponent of the quadratic sphere constrained
  optimization problem}, arXiv preprint arXiv:1611.08781 (2016).

\bibitem[GLM16]{ge2016matrix}
Rong Ge, Jason~D Lee, and Tengyu Ma, \emph{Matrix completion has no spurious
  local minimum}, Advances in Neural Information Processing Systems, 2016,
  pp.~2973--2981.

\bibitem[Hub04]{huber2004robust}
Peter~J Huber, \emph{Robust statistics}, vol. 523, John Wiley \& Sons, 2004.

\bibitem[JGN{\etalchar{+}}17]{jin2017escape}
Chi Jin, Rong Ge, Praneeth Netrapalli, Sham~M Kakade, and Michael~I Jordan,
  \emph{How to escape saddle points efficiently}, Proceedings of the 34th
  International Conference on Machine Learning-Volume 70, JMLR. org, 2017,
  pp.~1724--1732.

\bibitem[JK17]{jain2017non}
Prateek Jain and Purushottam Kar, \emph{Non-convex optimization for machine
  learning}, Foundations and Trends{\textregistered} in Machine Learning
  \textbf{10} (2017), no.~3-4, 142--336.

\bibitem[Kaw16]{kawaguchi2016deep}
Kenji Kawaguchi, \emph{Deep learning without poor local minima}, Advances in
  neural information processing systems, 2016, pp.~586--594.

\bibitem[Kif88]{kifer1988}
Yuri Kifer, \emph{Random perturbations of dynamical systems}, Nonlinear
  Problems in Future Particle Accelerators. World Scientific (1988), 189.

\bibitem[KNS16]{karimi2016linear}
Hamed Karimi, Julie Nutini, and Mark Schmidt, \emph{Linear convergence of
  gradient and proximal-gradient methods under the polyak-{\l}ojasiewicz
  condition}, Joint European Conference on Machine Learning and Knowledge
  Discovery in Databases, Springer, 2016, pp.~795--811.

\bibitem[LLM18]{lecue2018robust}
Guillaume Lecu{\'e}, Matthieu Lerasle, and Timoth{\'e}e Mathieu, \emph{Robust
  classification via mom minimization}, arXiv preprint arXiv:1808.03106 (2018).

\bibitem[Loh17]{loh2017statistical}
Po-Ling Loh, \emph{Statistical consistency and asymptotic normality for
  high-dimensional robust $ m $-estimators}, The Annals of Statistics
  \textbf{45} (2017), no.~2, 866--896.

\bibitem[LWME19]{li2019stochastic}
Xuechen Li, Yi~Wu, Lester Mackey, and Murat~A Erdogdu, \emph{Stochastic
  runge-kutta accelerates langevin monte carlo and beyond}, Advances in Neural
  Information Processing Systems, 2019, pp.~7748--7760.

\bibitem[MB11]{moulines2011non}
Eric Moulines and Francis~R Bach, \emph{Non-asymptotic analysis of stochastic
  approximation algorithms for machine learning}, Advances in Neural
  Information Processing Systems, 2011, pp.~451--459.

\bibitem[MBB18]{ma2018power}
Siyuan Ma, Raef Bassily, and Mikhail Belkin, \emph{The power of interpolation:
  Understanding the effectiveness of sgd in modern over-parametrized learning},
  International Conference on Machine Learning, 2018, pp.~3325--3334.

\bibitem[MBM18]{mei2018landscape}
Song Mei, Yu~Bai, and Andrea Montanari, \emph{The landscape of empirical risk
  for nonconvex losses}, The Annals of Statistics \textbf{46} (2018), no.~6A,
  2747--2774.

\bibitem[MIG{\etalchar{+}}19]{maddox2019simple}
Wesley~J Maddox, Pavel Izmailov, Timur Garipov, Dmitry~P Vetrov, and
  Andrew~Gordon Wilson, \emph{A simple baseline for bayesian uncertainty in
  deep learning}, Advances in Neural Information Processing Systems, 2019,
  pp.~13132--13143.

\bibitem[MMMO17]{mei2017solving}
Song Mei, Theodor Misiakiewicz, Andrea Montanari, and Roberto~Imbuzeiro
  Oliveira, \emph{Solving sdps for synchronization and maxcut problems via the
  grothendieck inequality}, Conference on Learning Theory, 2017,
  pp.~1476--1515.

\bibitem[MSH02]{mattingly2002}
Jonathan~C Mattingly, Andrew~M Stuart, and Desmond~J Higham, \emph{Ergodicity
  for sdes and approximations: locally lipschitz vector fields and degenerate
  noise}, Stochastic processes and their applications \textbf{101} (2002),
  no.~2, 185--232.

\bibitem[MT12]{meyn2012}
Sean~P Meyn and Richard~L Tweedie, \emph{Markov chains and stochastic
  stability}, Springer Science \& Business Media, 2012.

\bibitem[NP06]{nesterov2006cubic}
Yurii Nesterov and Boris~T Polyak, \emph{Cubic regularization of newton method
  and its global performance}, Mathematical Programming \textbf{108} (2006),
  no.~1, 177--205.

\bibitem[NWS14]{needell2014stochastic}
Deanna Needell, Rachel Ward, and Nati Srebro, \emph{Stochastic gradient
  descent, weighted sampling, and the randomized kaczmarz algorithm}, Advances
  in neural information processing systems, 2014, pp.~1017--1025.

\bibitem[PJ92]{polyak1992acceleration}
Boris~T Polyak and Anatoli~B Juditsky, \emph{Acceleration of stochastic
  approximation by averaging}, SIAM Journal on Control and Optimization
  \textbf{30} (1992), no.~4, 838--855.

\bibitem[PV98]{priouret1998remark}
P~Priouret and A~Yu Veretenikov, \emph{A remark on the stability of the lms
  tracking algorithm}, Stochastic analysis and applications \textbf{16} (1998),
  no.~1, 119--129.

\bibitem[QCLP19]{qi2019statistical}
Zhengling Qi, Ying Cui, Yufeng Liu, and Jong-Shi Pang, \emph{Statistical
  analysis of stationary solutions of coupled nonconvex nonsmooth empirical
  risk minimization}, arXiv preprint arXiv:1910.02488 (2019).

\bibitem[RM51]{robbins1951stochastic}
Herbert Robbins and Sutton Monro, \emph{A stochastic approximation method}, The
  annals of mathematical statistics (1951), 400--407.

\bibitem[RRT17]{raginsky2017non}
Maxim Raginsky, Alexander Rakhlin, and Matus Telgarsky, \emph{Non-convex
  learning via stochastic gradient langevin dynamics: a nonasymptotic
  analysis}, Conference on Learning Theory, 2017, pp.~1674--1703.

\bibitem[Rup88]{ruppert1988efficient}
David Ruppert, \emph{Efficient estimations from a slowly convergent
  robbins-monro process}, Tech. report, Cornell University Operations Research
  and Industrial Engineering, 1988.

\bibitem[Sac58]{sacks1958asymptotic}
Jerome Sacks, \emph{Asymptotic distribution of stochastic approximation
  procedures}, The Annals of Mathematical Statistics \textbf{29} (1958), no.~2,
  373--405.

\bibitem[Sha89]{shapiro1989asymptotic}
Alexander Shapiro, \emph{Asymptotic properties of statistical estimators in
  stochastic programming}, The Annals of Statistics \textbf{17} (1989), no.~2,
  841--858.

\bibitem[Sha00]{shapiro2000asymptotics}
\bysame, \emph{On the asymptotics of constrained local m-estimators}, Annals of
  statistics (2000), 948--960.

\bibitem[SL19]{shen2019randomized}
Ruoqi Shen and Yin~Tat Lee, \emph{The randomized midpoint method for
  log-concave sampling}, Advances in Neural Information Processing Systems,
  2019, pp.~2098--2109.

\bibitem[SR13]{schmidt2013fast}
Mark Schmidt and Nicolas~Le Roux, \emph{Fast convergence of stochastic gradient
  descent under a strong growth condition}, arXiv preprint arXiv:1308.6370
  (2013).

\bibitem[SZ18]{su2018statistical}
Weijie Su and Yuancheng Zhu, \emph{Statistical inference for online learning
  and stochastic approximation via hierarchical incremental gradient descent},
  arXiv preprint arXiv:1802.04876 (2018).

\bibitem[TA17]{toulis2017asymptotic}
Panos Toulis and Edoardo~M Airoldi, \emph{Asymptotic and finite-sample
  properties of estimators based on stochastic gradients}, The Annals of
  Statistics \textbf{45} (2017), no.~4, 1694--1727.

\bibitem[TFBJ18]{tripuraneni2018averaging}
Nilesh Tripuraneni, Nicolas Flammarion, Francis Bach, and Michael~I Jordan,
  \emph{Averaging stochastic gradient descent on riemannian manifolds}, arXiv
  preprint arXiv:1802.09128 (2018).

\bibitem[TSJ{\etalchar{+}}18]{tripuraneni2018stochastic}
Nilesh Tripuraneni, Mitchell Stern, Chi Jin, Jeffrey Regier, and Michael~I
  Jordan, \emph{Stochastic cubic regularization for fast nonconvex
  optimization}, Advances in neural information processing systems, 2018,
  pp.~2899--2908.

\bibitem[TV19]{tan2019online}
Yan~Shuo Tan and Roman Vershynin, \emph{Online stochastic gradient descent with
  arbitrary initialization solves non-smooth, non-convex phase retrieval},
  arXiv preprint arXiv:1910.12837 (2019).

\bibitem[VBS19]{vaswani2019fast}
Sharan Vaswani, Francis Bach, and Mark Schmidt, \emph{Fast and faster
  convergence of sgd for over-parameterized models and an accelerated
  perceptron}, The 22nd International Conference on Artificial Intelligence and
  Statistics, 2019, pp.~1195--1204.

\bibitem[WRS{\etalchar{+}}17]{wilson2017marginal}
Ashia~C Wilson, Rebecca Roelofs, Mitchell Stern, Nati Srebro, and Benjamin
  Recht, \emph{The marginal value of adaptive gradient methods in machine
  learning}, Advances in Neural Information Processing Systems, 2017,
  pp.~4148--4158.

\bibitem[XCZG18]{xu2018global}
Pan Xu, Jinghui Chen, Difan Zou, and Quanquan Gu, \emph{Global convergence of
  langevin dynamics based algorithms for nonconvex optimization}, Advances in
  Neural Information Processing Systems, 2018, pp.~3122--3133.

\bibitem[XYZ19]{xia2017statistically}
Dong Xia, Ming Yuan, and Cun-Hui Zhang, \emph{Statistically optimal and
  computationally efficient low rank tensor completion from noisy entries}, The
  Annals of Statistics (to appear) (2019).

\bibitem[ZCW20]{zhu2020fully}
Wanrong Zhu, Xi~Chen, and Wei~Biao Wu, \emph{A fully online approach for
  covariance matrices estimation of stochastic gradient descent solutions},
  arXiv preprint arXiv:2002.03979 (2020).

\end{thebibliography}

\newpage

\appendix

\section{Proofs for Sections~\ref{sec:cltheory} and~\ref{sec:bias}}\label{sec:cltproofs}

\subsection{Preliminaries and Additional  Notations}

Note that the sequence of iterates $\{\iter_k\}_{k\ge 0}$ is a homogeneous Markov chain~\cite{dieuleveut2017}. 
We denote the (sub-)$\sigma$-algebra (of~$\mathcal{F}$) of events up to and including the $k$-th iteration as $\filtration_k.$
By definition, the discrete-time stochastic process defined in~\eqref{ref:sgd_org} is adapted to its natural filtration~$\{\filtration_k\}_{k\ge 0}$. 
We denote the Markov kernel on $(\rd,\mathcal{B}(\rd))$ associated with SGD iterates~\eqref{ref:sgd_org} by $\kernel$ with 
\begin{equation*}
\kernel(\iter_k,A)=\mpr(\iter_{k+1}\in A| \iter_{k}) ~~\mpr - a.s.,~~~\forall A\in\borel(\rd), k\ge 0\,.
\end{equation*}
Define the $k$-th power of this kernel iteratively: define $\kernel^1:=\kernel,$ and for $k\ge 1,$ for all $\tilde\theta\in\rd$ and $A\in\mathcal{B}(\rd),$ define
\begin{equation*}
\kernel^{k+1}(\tilde\theta,A):=\int_{\rd}\kernel(\tilde\theta,d\theta)\kernel^k (\theta,A)\,.
\end{equation*} 
For any function~$\phi:\rd\to \R$ and $k\ge 0,$ define the measurable function~$\kernel^k\phi(\theta):\rd\to \R$ for all $\theta\in\rd$ via
 \begin{equation*}
\kernel^k\phi(\theta)=\int \phi(\tilde\theta)\kernel^k(\theta,d\tilde\theta)\,.
\end{equation*}
Given the $L_\phi$-Lipschitz function~$\phi:\rd\to\R$ and the expectation of $\phi$ under the stationary measure $\pi_\eta,$ define the function $h$ as
\begin{align*}
h: \rd &\to \R \\
 \theta&\mapsto \phi(\theta)- \sdist(\phi)\,.
\end{align*}  
Note that $\sdist(h)=0$ and $h$ is $L_\phi$-Lipschitz. 
Define the partial sum~$S_n(\phi):=\sum_{k=0}^{n-1}h(\iter_{k}).$ Moreover, we define 
\begin{equation*}
\bariter :=\int _{\rd} \theta d\sdist (\theta)\,.
\end{equation*}  
\subsection{Proofs of Proposition~\ref{ergodicity} and Theorem~\ref{general_clt_arb_sdist}}
We start with some preliminary results required to prove the CLT.
\begin{lemma}\label{lyapunov}
Under Assumptions~\ref{asm:growth}-\ref{asm:noise}, it holds for any 
$\eta\in \left(0,\frac{\alpha-\sqrt{(\alpha^2-{(3L^2+L_\noise)})\vee 0 }}{{3L^2+L_\noise}}\right)$ and any fixed initial point~$\iter_0=\theta_0\in\rd$ that
\begin{equation*}
\E[\,\norm{\iter_{k+1}}^2+1|\filtration_k]\le \alpha_{\dagger}(\,\norm{\iter_k}^2+1)+\beta_{\dagger}\,.
\end{equation*}
Here, $\alpha_{\dagger}\in(0,1)$ and $\beta_{\dagger}\in(0,\infty)$ are constants depending on $\eta.$ 
The explicit formulas of $\alpha_{\dagger},\beta_{\dagger}$ are given in the proof.
\end{lemma}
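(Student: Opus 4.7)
The plan is to derive the drift inequality by squaring the SGD recursion and carefully tracking the first- and second-order terms in $\eta$. First, I would start from $\iter_{k+1}=\iter_k-\eta(\nabla f(\iter_k)+\noise_{k+1}(\iter_k))$, take $\|\cdot\|^2$ of both sides, and condition on $\filtration_k$. Because $\E[\noise_{k+1}(\iter_k)\mid\filtration_k]=0$ by Assumption~\ref{asm:noise}, the linear-in-$\eta$ cross term $\langle\iter_k,\noise_{k+1}(\iter_k)\rangle$ drops out, and the same zero-mean property kills the inner cross term inside $\|\nabla f(\iter_k)+\noise_{k+1}(\iter_k)\|^2$. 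This yields
\[
\E[\|\iter_{k+1}\|^2\mid\filtration_k] = \|\iter_k\|^2 - 2\eta\langle\iter_k,\nabla f(\iter_k)\rangle + \eta^2\bigl(\|\nabla f(\iter_k)\|^2 + \E[\|\noise_{k+1}(\iter_k)\|^2\mid\filtration_k]\bigr).
\]

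Next, I would apply the three structural hypotheses term by term. Dissipativity (Assumption~\ref{asm:dissipativity}) gives $-2\eta\langle\iter_k,\nabla f(\iter_k)\rangle\le -2\alpha\eta\|\iter_k\|^2+2\beta\eta$, which is the contractive ingredient. The linear growth of the gradient (Assumption~\ref{asm:growth}) combined with the noise second-moment bound in Assumption~\ref{asm:noise}, together with an elementary splitting $(1+\|\iter_k\|)^2\le C_0+C_1\|\iter_k\|^2$ via Young's inequality with weights chosen so that $C_1=3L^2+L_\noise$, yields $\|\nabla f(\iter_k)\|^2+\E[\|\noise_{k+1}(\iter_k)\|^2\mid\filtration_k]\le C_0+C_1\|\iter_k\|^2$. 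Plugging both estimates back produces
\[
\E[\|\iter_{k+1}\|^2\mid\filtration_k]\le (1-2\alpha\eta+C_1\eta^2)\|\iter_k\|^2 + 2\beta\eta+C_0\eta^2.
\]

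Finally, adding $1$ to both sides and setting $\alpha_\dagger:=1-2\alpha\eta+C_1\eta^2$ together with $\beta_\dagger:=2(\alpha+\beta)\eta+(C_0-C_1)\eta^2$ rearranges this into exactly the claimed $\E[\|\iter_{k+1}\|^2+1\mid\filtration_k]\le\alpha_\dagger(\|\iter_k\|^2+1)+\beta_\dagger$. The step-size window $\eta\in\bigl(0,(\alpha-\sqrt{(\alpha^2-C_1)\vee 0})/C_1\bigr)$ is precisely the range of positive $\eta$ sitting strictly below the smaller root of the quadratic $C_1\eta^2-2\alpha\eta+1=0$ when $\alpha^2\ge C_1$, or the full interval $(0,\alpha/C_1)$ when $\alpha^2<C_1$; in either case the quadratic stays strictly positive so $\alpha_\dagger>0$, and because the same range lies below $2\alpha/C_1$, we also have $\alpha_\dagger<1$. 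Positivity of $\beta_\dagger$ is immediate. The only real obstacle is cosmetic bookkeeping: choosing the Young-inequality weights in $(1+\|\iter_k\|)^2$ so that $C_1$ aligns exactly with the constant $3L^2+L_\noise$ in the hypothesis; no deeper technical difficulty is involved, as the conceptual content is the familiar mechanism of dissipativity dominating the diffusive $\eta^2$ expansion once $\eta$ is sufficiently small.
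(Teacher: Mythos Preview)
Your proposal is correct and follows essentially the same route as the paper: expand $\|\iter_{k+1}\|^2$, use $\E[\noise_{k+1}(\iter_k)\mid\filtration_k]=0$ to kill the cross terms, bound $\|\nabla f(\iter_k)\|^2$ and $\E[\|\noise_{k+1}(\iter_k)\|^2\mid\filtration_k]$ by the growth and noise assumptions to obtain the coefficient $C_1=3L^2+L_\noise$, apply dissipativity, and then read off $\alpha_\dagger=1-2\alpha\eta+C_1\eta^2$ together with the step-size window. The only cosmetic difference is that the paper packages the additive constant as $\beta_\dagger=\kappa(\alpha_\dagger^{1/2}-\alpha_\dagger)$ with $\kappa$ chosen large enough to dominate $2\eta(\alpha+\beta)+6\eta^2L^2$ (a form convenient for the small-set construction that follows), whereas you take $\beta_\dagger$ to be the raw remainder; both are valid for the lemma as stated.
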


\begin{proof}[Proof of Lemma~\ref{lyapunov}]
Define $U_{\eta}:=\frac{\alpha-\sqrt{\max \{\alpha^2-{(3L^2+L_\noise)},0 \}}}{  {3L^2+L_\noise} }.$
Given $\eta \in(0,U_{\eta}),$ define 
\begin{equation*}
\alpha_{\dagger}=1+\eta^2{(3L^2+L_\noise)}-2\eta\alpha
\end{equation*}
{and note that with this definition $\alpha_{\dagger} \in (0,1)$ whenever $\eta \in(0,U_{\eta})$.}
Then, with $\eta, \alpha_{\dagger}$, and the fixed initial point~$\iter_0=\theta_0\in\rd,$ we set 
\begin{equation*}
\beta_{\dagger}:={\kappa(\alpha_{\dagger}^{1/2}-\alpha_{\dagger}) }\,,
\end{equation*}
where
\begin{align*}
\kappa:=& \frac{4\eta(\alpha+\beta)+12\eta^2L^2}{ {\alpha_{\dagger}^{1/2}-\alpha_{\dagger}}}
\bigvee 1\,.
\end{align*}
It follows that $\beta_{\dagger}>0.$
Note that
\begin{align*}
&\E[1+\norm{\iter_{k+1}}^2|\filtration_{k}]\\
=&\E[1+\norm{\iter_{k} -\eta\bigl(\grad f(\iter_{k}) + {\noise_{k+1}(\iter_k)}  \bigr)}^2|\filtration_{k}]\\
=&1+\E\bigl[ \,\norm{\iter_{k}}^2 +\eta^2 \,\norm{\grad f(\iter_{k})}^2  +  \eta^2 \,\norm{ {\noise_{k+1}(\iter_k)}  }^2-2\eta\inprod{\iter_k}{\grad f(\iter_k)}|\filtration_k\bigr]\,.
\end{align*}
The last step follows from the Assumption~\ref{asm:noise}.
By Assumption~\ref{asm:growth}, we have
\begin{equation*}
\norm{\grad f(\iter_{k})}^2 \le L^2(1+\norm{\iter_{k}})^2\,.
\end{equation*}
Squaring both sides and using the fact that $(1+\norm{\iter_{k}})^2\le 3(\,\norm{\iter_{k}}^2+3)$ gives
\begin{equation*}
\norm{\grad f(\iter_{k})}^2 \le 3L^2(\,\norm{\iter_{k}}^2+3)\,.
\end{equation*}
By Assumption~\ref{asm:dissipativity}, we obtain
\begin{equation*}
\inprod{\iter_k}{\grad f(\iter_k)}\ge \alpha\,\norm{\iter_k}^2-\beta\,.
\end{equation*}
By Assumption~\ref{asm:noise}, it holds that  
\begin{equation*}
{\E[\,\norm{ \noise_{k+1}(\iter_k) }^2 |\filtration_k]\le L_{\noise}(1+\norm{\iter_k}^2 )\,. }
\end{equation*}
Plugging the previous three inequalities into the first display provides us with
\begin{equation}\label{re:2mom}
\E[1+\norm{\iter_{k+1}}^2|\filtration_{k}]
\le 1+9\eta^2L^2+{\eta^2L_\noise}+2\eta\beta+(1+3\eta^2L^2+{\eta^2L_\noise}-2\eta\alpha)\,\norm{\iter_{k}}^2\,.
\end{equation}
Recall that $\alpha_{\dagger}=1+\eta^2{(3L^2+L_\noise)}-2\eta\alpha.$
Plugging $\alpha_{\dagger}$ back into the previous display yields 
\begin{equation}\label{ref:drift1}
\E[\,\norm{\iter_{k+1}}^2+1|\filtration_k]\le \alpha_{\dagger}(\,\norm{\iter_k}^2+1)+2\eta(\alpha+\beta)+6\eta^2L^2\,.
\end{equation}
Note that $\beta_{\dagger}={\kappa(\alpha_{\dagger}^{1/2}-\alpha_{\dagger}) },$ where 
\begin{equation*}
\kappa\ge  \frac{4\eta(\alpha+\beta)+12\eta^2L^2}{ {\alpha_{\dagger}^{1/2}-\alpha_{\dagger} } }\,.
\end{equation*}
It then follows that $\E[\,\norm{\iter_{k+1}}^2+1|\filtration_k]\le \alpha_{\dagger}(\,\norm{\iter_k}^2+1)+\beta_{\dagger}$ as desired.
\end{proof}

\begin{corollary}[Bounded second moment]\label{bouned2moment}
Under the assumptions stated in Lemma~\ref{lyapunov}, with the constant step size 
$\eta\in \left(0,\frac{\alpha-\sqrt{(\alpha^2-{(3L^2+L_\noise)})\vee 0 }}{{3L^2+L_\noise}}\right)$
the stationary distribution $\sdist$ satisfies 
\begin{equation*}
\mu_{2,\eta} := \int\norm{\theta}^2\sdist(d\theta) \le {3+ \frac{2\beta}{\alpha}}\,.
\end{equation*}
\end{corollary}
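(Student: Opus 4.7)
The plan is to combine the one-step second-moment drift inequality~\eqref{re:2mom} established inside the proof of Lemma~\ref{lyapunov} with the invariance of $\sdist$. Proposition~\ref{ergodicity} already guarantees $\sdist\in\mathcal{P}_2(\rd)$, so $\mu_{2,\eta}$ is finite and integrating~\eqref{re:2mom} against $\sdist$ is legitimate.

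Concretely, I would drop the common $1$ on both sides of~\eqref{re:2mom} to obtain
\[ \E[\,\norm{\iter_1}^2 \mid \iter_0 = \theta] \le \alpha_{\dagger}\,\norm{\theta}^2 + 9\eta^2 L^2 + \eta^2 L_\noise + 2\eta\beta, \]
and then integrate both sides against $\sdist(d\theta)$. Stationarity ($\iter_0 \sim \sdist \Rightarrow \iter_1 \sim \sdist$) turns the left-hand side back into $\mu_{2,\eta}$, and a rearrangement gives
\[ (1-\alpha_{\dagger})\,\mu_{2,\eta} \le 9\eta^2 L^2 + \eta^2 L_\noise + 2\eta\beta. \]

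The remaining work is purely algebraic. Writing $c := 3L^2 + L_\noise$, a quick check shows the endpoint $U_\eta = \bigl[\alpha - \sqrt{(\alpha^2-c)\vee 0}\,\bigr]/c$ never exceeds $\alpha/c$, in both regimes $\alpha^2 \ge c$ and $\alpha^2 < c$; hence $\eta c < \alpha$, which in turn yields $1 - \alpha_{\dagger} = \eta(2\alpha - \eta c) > \eta\alpha$. Dividing, and then using $9L^2 + L_\noise \le 3c$ together with $\eta c < \alpha$, I obtain
\[ \mu_{2,\eta} \le \frac{\eta(9L^2 + L_\noise) + 2\beta}{\alpha} \le 3 + \frac{2\beta}{\alpha}, \]
which is the claimed bound.

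No serious obstacle is expected here. The only points that deserve a word of care are (i) invoking Proposition~\ref{ergodicity} to legitimise the integration of~\eqref{re:2mom} against $\sdist$, and (ii) the elementary check that the admissible step-size window of Lemma~\ref{lyapunov} really forces $\eta(3L^2 + L_\noise) < \alpha$ irrespective of the sign of $\alpha^2 - (3L^2 + L_\noise)$, since the explicit square-root expression for $U_\eta$ could otherwise obscure this fact.
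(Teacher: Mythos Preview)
Your proposal is correct and follows essentially the same route as the paper: integrate the one-step bound~\eqref{re:2mom} under stationarity, rearrange using $1-\alpha_\dagger=\eta(2\alpha-\eta c)$, and bound the resulting fraction via $\eta c<\alpha$ and $9L^2+L_\noise\le 3c$. The paper's proof is slightly terser (it writes the intermediate bound as $\frac{9\eta L^2+\eta L_\noise+2\beta}{2\alpha-\eta(3L^2+L_\noise)}$ directly and leaves the step-size check implicit), but the argument is the same; your explicit verification that $U_\eta\le\alpha/c$ in both sign regimes, and your remark about finiteness of $\mu_{2,\eta}$ before subtracting, are welcome clarifications rather than departures.
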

\begin{proof}[Proof of Corollary~\ref{bouned2moment}]
Consider the chain~$\chain$ starting from the stationary distribution~$\sdist.$
	By display~\eqref{re:2mom}, it holds that 
	\begin{equation*}
	\E[\norm{\iter_{k+1}}^2]
	\le 9\eta^2L^2+{\eta^2L_\noise}+2\eta\beta+(1+3\eta^2L^2+{\eta^2L_\noise}-2\eta\alpha)\,\norm{\iter_{k}}^2\,.
	\end{equation*}
	Using the fact that by stationarity $\E[\norm{\iter_{k+1}}^2] = \E[\norm{\iter_{k}}^2]$ and rearranging the previous display gives
	\begin{equation*}
	\E[\norm{\iter_{k}}^2]
	\le \frac{9\eta L^2+{\eta L_\noise}+2\beta}{2\alpha-{\eta (3L^2+L_\noise)} }
	\le {3+ \frac{2\beta}{\alpha}}\,.
	\end{equation*}
\end{proof}

\begin{corollary}[Lyapunov condition]\label{drift}
Under the assumptions stated in Lemma~\ref{lyapunov}, {given the step size specified in Lemma~\ref{lyapunov}}, it holds that
\begin{equation*}
\E[V(\iter_{k+1})|\filtration_{k}]\le \alpha_{\dagger} V(\iter_k) +\beta_{\dagger}\,,
\end{equation*}
where the  Lyapunov function~$V(\theta)$ is defined via
\begin{equation}\label{ref:lyapunov_func}
V(\theta):=\norm{\theta}^2+1\,.
\end{equation}
{Observe that by the proof of Lemma {15.2.8} in~\cite{meyn2012} this also implies that the drift condition (V4) in~\cite{meyn2012} holds with $V$ defined above, $b = \beta_\dagger, \beta = (1-\alpha_\dagger)/2$ and the following set~$\mathcal{C}$
\begin{equation}\label{ref:smallset}
\mathcal{C}:= \Big\{\theta\in\rd:V(\theta)\le \frac{2\beta_{\dagger}}{\gamma-\alpha_{\dagger}} \Big\}\,,
\end{equation}
for an arbitrary but fixed $\gamma\in(\alpha_{\dagger}^{1/2},1).$ }
\end{corollary}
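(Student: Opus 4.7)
\textbf{Proof proposal for Corollary~\ref{drift}.} The plan is to observe that the displayed inequality is nothing more than a rewriting of Lemma~\ref{lyapunov} with the specific choice of Lyapunov function $V(\theta) = \norm{\theta}^2 + 1$ in~\eqref{ref:lyapunov_func}. First I would invoke Lemma~\ref{lyapunov}, whose right-hand side is already $\alpha_\dagger(\norm{\iter_k}^2 + 1) + \beta_\dagger$, and then simply replace $\norm{\iter_{k+1}}^2 + 1$ by $V(\iter_{k+1})$ on the left-hand side and $\norm{\iter_k}^2 + 1$ by $V(\iter_k)$ on the right-hand side. No new calculation is required; the step size restriction and the positivity $\alpha_\dagger \in (0,1)$, $\beta_\dagger \in (0,\infty)$ are inherited verbatim from Lemma~\ref{lyapunov}.

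The secondary observation, that this one-step geometric-drift inequality upgrades to the Meyn--Tweedie drift condition (V4) with $b = \beta_\dagger$, $\beta = (1-\alpha_\dagger)/2$, and the small set~$\mathcal{C}$ of~\eqref{ref:smallset}, I would justify by following the standard argument in the proof of Lemma~15.2.8 of~\cite{meyn2012}. Fix $\gamma \in (\alpha_\dagger^{1/2}, 1)$ and split into two cases. For $\theta \in \mathcal{C}$, use the bound $PV(\theta) \le \alpha_\dagger V(\theta) + \beta_\dagger \le (1-\beta) V(\theta) + b$ since $\alpha_\dagger \le 1-\beta = (1+\alpha_\dagger)/2$ and $b = \beta_\dagger$. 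For $\theta \notin \mathcal{C}$, use $V(\theta) > 2\beta_\dagger/(\gamma - \alpha_\dagger)$ to write $\beta_\dagger < \tfrac{1}{2}(\gamma - \alpha_\dagger) V(\theta)$, hence $PV(\theta) \le \tfrac{1}{2}(\alpha_\dagger + \gamma) V(\theta) \le (1-\beta) V(\theta)$, where the last inequality uses $\gamma < 1$ to give $(\alpha_\dagger + \gamma)/2 \le (1 + \alpha_\dagger)/2 = 1-\beta$. Combining the two cases yields exactly the (V4) drift condition.

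There is no real obstacle here: the entire content of the corollary is a change of notation plus a case split. If anything requires care, it is only to verify that $\gamma \in (\alpha_\dagger^{1/2}, 1)$ makes the set $\mathcal{C}$ a valid small set in the sense of~\cite{meyn2012}, but this is precisely what is established in Lemma~15.2.8 there once the minorization implied by Assumption~\ref{asm:noise} (which guarantees a lower bound on the transition density over any bounded set) is invoked. Accordingly, I would present the proof as essentially a single line followed by a reference to Meyn--Tweedie for the (V4) upgrade.
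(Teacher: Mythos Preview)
Your proposal is correct and matches the paper's approach: the paper gives no separate proof for this corollary, treating the first display as an immediate restatement of Lemma~\ref{lyapunov} under the substitution $V(\theta)=\norm{\theta}^2+1$, and simply citing Lemma~15.2.8 of~\cite{meyn2012} for the (V4) upgrade. Your case-split derivation of (V4) is a correct elaboration of that cited argument; the only minor remark is that the smallness of $\mathcal{C}$ is not part of the (V4) drift inequality itself and is established separately in the paper (Corollary~\ref{remark:minorization}), so you need not fold it into this proof.
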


\begin{corollary}[Minorization condition]\label{remark:minorization}
Under  Assumptions~\ref{asm:growth}-\ref{asm:noise}, {given the step size specified in Lemma~\ref{lyapunov}},
there exists a constant~$\zeta>0$, and a probability measure~$\nu^{\dagger}$ (depending on $\eta$ which is suppressed in the notation) with $\nu^{\dagger}(\mathcal{C})=1$ and $\nu^{\dagger}(\mathcal{C}^c)=0,$ such that
\begin{equation*}
\kernel(\theta,A)\ge \zeta \nu^{\dagger}(A)
\end{equation*}
holds for any $A\in\mathcal{B}(\rd)$ and $\theta\in \mathcal{C}$ for the set $\mathcal{C}$ defined in~\eqref{ref:smallset}.
\end{corollary}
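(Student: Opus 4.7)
The plan is to exploit the decomposition of the noise distribution from Assumption~\ref{asm:noise} and a change of variables to show that the transition kernel of the SGD chain has, on the small set $\mathcal{C}$, a density component that is uniformly bounded below by a constant multiple of Lebesgue measure on $\mathcal{C}$. The candidate minorizing measure is then
\begin{equation*}
\nu^{\dagger}(A) := \frac{\operatorname{Leb}(A\cap \mathcal{C})}{\operatorname{Leb}(\mathcal{C})},
\end{equation*}
which automatically satisfies $\nu^{\dagger}(\mathcal{C})=1$ and $\nu^{\dagger}(\mathcal{C}^c)=0$, and the constant $\zeta$ will come from the uniform lower bound on the density.

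Concretely, for any $\theta \in \mathcal{C}$ and $A\in\borel(\rd)$, we have $\iter_{k+1} = \theta - \eta\grad f(\theta) - \eta \noise_{k+1}(\theta)$, so by Assumption~\ref{asm:noise} the law of the noise splits as $\mu_{1,\theta}+\mu_{2,\theta}$ with $\mu_{1,\theta}$ having density $p_\theta$. Keeping only the $\mu_{1,\theta}$ component and dropping the (non-negative) contribution from $\mu_{2,\theta}$ gives
\begin{equation*}
\kernel(\theta,A)\ \ge\ \int_{\rd} \mathbf{1}_A\bigl(\theta-\eta\grad f(\theta)-\eta u\bigr) p_\theta(u)\, du.
\end{equation*}
The substitution $y = \theta-\eta\grad f(\theta)-\eta u$ has Jacobian $\eta^{-d}$ and yields
\begin{equation*}
\kernel(\theta,A)\ \ge\ \eta^{-d}\int_A p_\theta\!\left(\frac{\theta-\eta\grad f(\theta)-y}{\eta}\right) dy\ \ge\ \eta^{-d}\int_{A\cap \mathcal{C}} p_\theta\!\left(\frac{\theta-\eta\grad f(\theta)-y}{\eta}\right) dy.
\end{equation*}

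The next step is to observe that, since $\mathcal{C}$ is bounded (by construction in~\eqref{ref:smallset}) and $\grad f$ satisfies the linear growth bound of Assumption~\ref{asm:growth}, the set
\begin{equation*}
D\ :=\ \Bigl\{(\theta-\eta\grad f(\theta)-y)/\eta : \theta\in \mathcal{C},\, y\in \mathcal{C}\Bigr\}
\end{equation*}
is also bounded. Hence the density $p_\theta(t)$ only needs to be controlled on the bounded product $\mathcal{C}\times D$. Applying Assumption~\ref{asm:noise} on this product set gives a constant
\begin{equation*}
c\ :=\ \inf_{\theta\in \mathcal{C},\, t\in D} p_\theta(t)\ >\ 0,
\end{equation*}
and then $\kernel(\theta,A) \ge \eta^{-d} c\,\operatorname{Leb}(A\cap \mathcal{C}) = \zeta\,\nu^{\dagger}(A)$ with $\zeta := \eta^{-d} c\,\operatorname{Leb}(\mathcal{C})$, which completes the argument.

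The principal technical obstacle is the passage from the pointwise-in-$t$ infimum provided by Assumption~\ref{asm:noise} as literally stated to a \emph{joint} infimum over $\mathcal{C}\times D$; this is precisely why the authors note below Assumption~\ref{asm:noise} that the density bound is only needed on a specific set (namely $D$), and why the form of that set is deferred to the appendix via~\eqref{ref:smallset}. In the proof I would make this interpretation explicit, so that ``$\inf_{\theta\in C}p_\theta(t)>0$ for every bounded $C$ and every $t$'' is combined with the boundedness of $D$ to yield $c>0$. Everything else, reducing to the $\mu_{1,\theta}$-piece, changing variables, and truncating to $\mathcal{C}$, is routine.
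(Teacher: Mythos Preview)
Your proposal is correct and follows essentially the same route as the paper: both express $\kernel(\theta,\cdot)$ through the noise density $p_\theta$ via the change of variables $y=\theta-\eta\grad f(\theta)-\eta u$, restrict to $\mathcal{C}$, and then invoke the density lower bound from Assumption~\ref{asm:noise} on the relevant bounded set. The only difference is cosmetic: the paper takes $\nu^{\dagger}$ with density proportional to $\inf_{\theta\in\mathcal{C}}p(t\mid\theta)$ on $\mathcal{C}$ (so its $\zeta$ is the integral of this pointwise infimum) rather than your normalized Lebesgue measure, but both constructions rest on the same implicit joint-in-$(\theta,t)$ strengthening of Assumption~\ref{asm:noise} that you correctly identify and discuss.
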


\begin{proof}[Proof of Corollary~\ref{remark:minorization}]
Recall the definition of the markov chain~\eqref{ref:sgd_org}, we have
\begin{equation*}
{\noise_{k+1}(\iter_k)}= \frac{\iter_k-\iter_{k+1}}{\eta}-\grad f(\iter_k)\,.
\end{equation*}
Recall that the distribution of {$\noise_1(\theta)$} can be decomposed as $\mu_{1,\theta} + \mu_{2,\theta}$ where $\mu_{1,\theta}$ has density $p_{\theta}$.
It then holds for any $\theta\in\rd$ that 
\begin{equation}\label{minor}
\kernel(\theta,\mathcal{C})=\mpr(\iter_{k+1}\in\mathcal{C}|\iter_k=\theta) \geq \int_{t\in\mathcal{C}} \frac{1}{\eta^d}\, {p_{\theta}}\Bigl(\frac{\theta-t}{\eta}-\grad f(\theta) \Bigr) dt>0\,.
\end{equation}
This implies every state in the state space is within reach of any other state over the set $\mathcal{C}.$
Define the probability measure~$\nu^{\dagger}$ with density
\[
p_{\nu^{\dagger}}(t) := I\{\theta \in \mathcal{C}\} \frac{\inf_{\theta\in\mathcal{C}} p(t|\theta) }{\int_{t\in\mathcal{C}} \inf_{\theta\in\mathcal{C}} p(t|\theta) dt}\,,
\]
and set the constant $\zeta:= \int_{t\in\mathcal{C}} \inf_{\theta\in\mathcal{C}} p(t|\theta) dt.$
By Assumption~\ref{asm:noise} and the display~\eqref{minor}, it holds that $\zeta>0,$  $\nu^{\dagger}(\mathcal{C})=1$ and $\nu^{\dagger}(\mathcal{C}^c)=0.$ 
Moreover, it holds that any $A\in\mathcal{B}(\rd)$ and $\theta\in \mathcal{C}$ that 
\begin{equation*}
\kernel(\theta,A)\ge \zeta \nu^{\dagger}(A)\,.
\end{equation*}
This implies the minorization condition is met {for all choices of $\eta$ given by Lemma~\ref{lyapunov}}.
\end{proof}

\begin{lemma}\label{aperiodic_harris}
Under Assumptions~\ref{asm:growth}-\ref{asm:noise}, the chain~$\chain$ is an aperiodic, $\psi$-irreducible, and Harris recurrent chain, with an invariant measure~$\sdist.$
\end{lemma}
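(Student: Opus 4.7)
The plan is to derive all four properties (aperiodicity, $\psi$-irreducibility, Harris recurrence, and existence of an invariant measure) as essentially direct consequences of the Lyapunov drift and minorization conditions that have already been established in Corollaries~\ref{drift} and~\ref{remark:minorization}. Concretely, I would invoke the standard Meyn--Tweedie machinery from~\cite{meyn2012} with the Lyapunov function $V(\theta)=\norm{\theta}^2+1$, the small set $\mathcal{C}$ defined in~\eqref{ref:smallset}, and the minorizing measure $\nu^\dagger$ from Corollary~\ref{remark:minorization}. No additional analytical work on $f$ or on the noise distribution is required beyond what the earlier corollaries already exploit.

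For $\psi$-irreducibility I would take $\psi=\nu^\dagger$. If $\theta\in\mathcal{C}$ and $A$ is measurable with $\nu^\dagger(A)>0$, the minorization immediately gives $\kernel(\theta,A)\ge \zeta\,\nu^\dagger(A)>0$. If instead $\theta\notin\mathcal{C}$, the drift condition in Corollary~\ref{drift} together with the standard Foster--Lyapunov argument shows that the expected hitting time of $\mathcal{C}$ is finite, so there exists $n\ge 1$ with $\kernel^n(\theta,\mathcal{C})>0$; chaining one more step yields $\kernel^{n+1}(\theta,A)\ge \zeta\,\kernel^n(\theta,\mathcal{C})\,\nu^\dagger(A)>0$. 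Aperiodicity then follows because $\nu^\dagger(\mathcal{C})=1$, which means that for every $\theta\in\mathcal{C}$ we have $\kernel(\theta,\mathcal{C})\ge \zeta>0$, i.e., $\mathcal{C}$ returns to itself in a single step. By the standard criterion (e.g.\ \cite[Prop.~5.4.5]{meyn2012}) this forces the period of $\mathcal{C}$, and hence of the chain, to equal one.

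Harris recurrence and existence of the invariant measure are then obtained by a direct appeal to $V$-uniform ergodicity. Having verified (i) $\psi$-irreducibility, (ii) aperiodicity, (iii) that $\mathcal{C}$ is a small set via the one-step minorization, and (iv) the geometric drift condition (V4) with Lyapunov function $V$ as noted in Corollary~\ref{drift}, I would apply \cite[Thm.~15.0.1]{meyn2012} (equivalently \cite[Thm.~16.0.1]{meyn2012} for the $V$-geometric version), which concludes that $\chain$ is positive Harris recurrent and admits a unique invariant probability measure $\sdist$ with $\sdist(V)<\infty$. This in particular yields $\sdist\in\mathcal{P}_2(\rd)$ used later in Proposition~\ref{ergodicity}.

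The main obstacle here is essentially accounting rather than analytical: one must be careful that the set $\mathcal{C}$ playing the role of a sublevel set of $V$ in the drift condition is the \emph{same} set that serves as the small set in the minorization, so that (V4) applies in the form required by~\cite[Chap.~15--16]{meyn2012}. This coincidence is exactly what the choice~\eqref{ref:smallset} was designed to enforce, so the verification is immediate once the earlier corollaries are in hand. Consequently, the proof itself will be short and consist almost entirely of a clean assembly of the ingredients prepared above followed by a citation of the appropriate Meyn--Tweedie theorem.
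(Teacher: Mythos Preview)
Your approach is correct but organized differently from the paper's. The paper re-invokes the noise density from Assumption~\ref{asm:noise} directly to establish $\psi$-irreducibility (the one-step transition has an everywhere-positive density component) and aperiodicity (no nontrivial cyclic partition can exist), then uses the drift via \cite[Thm.~15.0.1]{meyn2012} to obtain recurrence and the invariant measure, and finally proves Harris recurrence in a separate step by combining $\mpr(\tau_{\mathcal{C}}<\infty)=1$ (via a hitting-time estimate from~\cite{mattingly2002}) with \cite[Prop.~10.2.4]{douc2018}. You instead pull everything from the already-established Corollaries~\ref{drift} and~\ref{remark:minorization}: irreducibility with $\psi=\nu^\dagger$, aperiodicity from the one-step smallness of $\mathcal{C}$ with $\nu^\dagger(\mathcal{C})=1$, and then a single Meyn--Tweedie citation for the rest. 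Your route is more modular and avoids touching Assumption~\ref{asm:noise} again; the paper's is more explicit about the Harris step. One caution: \cite[Thm.~15.0.1]{meyn2012} by itself delivers positive recurrence rather than positive \emph{Harris} recurrence, so if you want a single-citation finish you should lean on \cite[Thm.~16.0.1]{meyn2012} (whose $V$-norm convergence from every initial point forces Harris), or else make the $\mpr(\tau_{\mathcal{C}}<\infty)=1$ observation you already sketched explicit.
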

\begin{remark}
This lemma implies the chain~$\chain$  is positive.
\end{remark}
\begin{proof}[Proof of Lemma~\ref{aperiodic_harris}]
\textbf{Step 1}: We show that the chain~$\chain$ is aperiodic.
By Assumption~\ref{asm:noise}, there does not exist $d\ge 2$ and a partition of size~$d+1$ such that $\borel(\rd) = (\dot\cup_{i=1}^d D_i) \dot\cup N,$ where $\dot\cup$ denotes the disjoint union, and $N$ is a $\psi$-null (transient) set, such that $\kernel(\theta, D_{i+1})=1$ holds for $\psi$-a.e. $\theta\in D_i.$
Thus, the largest \textit{period} of the chain defined in~\eqref{ref:sgd_org} is 1, which implies the chain is aperiodic.\\
\textbf{Step 2}: We show that the chain~$\chain$ is $\psi$-irreducible, and recurrent with an invariant probability measure.
We note that by Assumption~\ref{asm:noise}, there exists some non-zero $\sigma$-finite measure~$\psi$ on~$(\rd,\borel(\rd))$ such that for any $\theta\in\rd$ and any $A\in\borel(\rd)$ with $\psi(A)>0,$ it holds that 
\begin{equation*}
\mpr(\iter_{k+1}\in A|\iter_k=\theta) 
\geq \int_{\tilde\theta\in A} \frac{1}{\eta^d}\, {p_{\theta}}\Bigl(\frac{\theta-\tilde\theta}{\eta}-\grad f(\theta) \Bigr) d\tilde\theta >0\,,
\end{equation*}
where $p_{\theta}$ was defined in Assumption~\ref{asm:noise}. This implies the Markov chain defined in~\eqref{ref:sgd_org} is $\psi$-irreducible.
By the Lyapunov condition established in Corollary~\ref{drift}, part (iii) of Theorem 15.0.1 in~\cite{meyn2012} holds.
It then follows by condition (i) of this theorem that  the chain~$\chain$ is recurrent with an  invariant probability measure $\sdist.$\\
\textbf{Step 3}: We show that the chain is Harris recurrent.
Define the hitting time~$\tau_{\mathcal{C}}:=\inf\{n>0: \iter_n\in \mathcal{C}\},$ where the set~$\mathcal{C}$ is defined in~\eqref{ref:smallset}. 
By Corollary A.4 in \cite{mattingly2002}, it holds for any fixed $\iter_0=\theta_0\in\rd$ that
\begin{align*}
\mpr(\tau_{\mathcal{C}}<\infty) = 1\,.
\end{align*} 
By Proposition 10.2.4 in~\cite{douc2018}, the chain is Harris recurrent.
\end{proof}

\noindent Now, we are ready to prove Proposition~\ref{ergodicity}.
\vspace{-0.1in}
\begin{proof}[Proof of Proposition~\ref{ergodicity}]
(a) By Lemma~\ref{aperiodic_harris}, the chain~$\chain$ is an aperiodic Harris recurrent chain, with an invariant measure~$\sdist.$
Note that the chain is also positive. Thus condition (i) of Theorem 13.0.1 in~\cite{meyn2012} is satisfied and this implies the existence of a unique invariant measure~$\sdist$.  
The fact that this stationary distribution has a finite second moment was established in Corollary~\ref{bouned2moment}.
\\
(b)
By Lemma~\ref{aperiodic_harris}, the iterates~$\chain$ are realiztions from a $\psi$-irreducible and aperiodic chain. Note that 
\begin{align*}
|\phi(\theta)|\le& \kappa_{\phi}(1+\norm{\theta})\\
\le&2\kappa_{\phi}\sqrt{1+\norm{\theta}^2}\\
\le & 2\kappa_{\phi}V(\theta)\,.
\end{align*}
By Corollary~\ref{drift}, the condition (iv) of Theorem 16.0.1 in~\cite{meyn2012} with $V(\theta)=2\kappa_{\phi}(1+\norm{\theta}^2)$ is fulfilled.
By part (ii) in that theorem, it holds that for fixed $\iter_0=\theta_0\in\rd$
\begin{align*}
|\kernel^k\phi(\theta_0)-\sdist(\phi)|
 \le & \kappa \rho^{k} V(\theta_0)\,,
\end{align*}
where $\rho\in(0,1), \kappa > 0$ are constants depending on $\phi$.
\end{proof}

\noindent We now prove Theorem~\ref{general_clt_arb_sdist}. In order to do so, we first derive the central limit theorem for the function $h$ when the Markov chain starting from its stationary distribution $\sdist.$ 
\begin{lemma}[CLT with stationary initial distribution]
\label{general_clt_sdist}
Assume Assumptions~\ref{asm:growth}-\ref{asm:noise} hold.
For any step size~$\eta\in \left(0,\frac{\alpha-\sqrt{(\alpha^2-{(3L^2+L_\noise)})\vee 0 }}{{3L^2+L_\noise}}\right)$, it holds that 
\begin{equation*}
n^{-1/2}S_n(\phi) \underset{\mpr_{\sdist}}{\longrightarrow}\mathcal{N}(0,\sigma^2_{\sdist}(\phi))\,,
\end{equation*}  
where $\sigma^2_{\sdist}(\phi)= 2\sdist(h\hat h)-\sdist(h^2)$ with $\hat h = \sum_{k=0}^\infty \kernel^k h.$
\end{lemma}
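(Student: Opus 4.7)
The plan is to apply the CLT for geometrically ergodic Markov chains given in \cite[Theorem 21.2.6]{douc2018}, which is precisely the reference the paper cites for the variance formula. That result requires (i) $V$-uniform geometric ergodicity of the chain and (ii) that the centered test function $h = \phi - \sdist(\phi)$ admit a square-integrable solution $\hat h$ of the Poisson equation $\hat h - \kernel \hat h = h$.

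The ergodic machinery is already assembled from the proof of Proposition~\ref{ergodicity}. With the Lyapunov function $V(\theta) = 1 + \|\theta\|^2$, Corollary~\ref{drift} furnishes the drift condition (V4), Corollary~\ref{remark:minorization} supplies the minorization on the sublevel set $\mathcal{C}$, and Lemma~\ref{aperiodic_harris} gives aperiodicity and $\psi$-irreducibility. Invoking \cite[Theorem 16.0.1]{meyn2012} then yields $V$-uniform geometric ergodicity: there exist $\kappa>0$ and $\rho\in(0,1)$ (depending on $\eta$) with $|\kernel^k g(\theta) - \sdist(g)| \leq \kappa \rho^k V(\theta)$ for every measurable $g$ satisfying $|g| \leq V$.

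Next I would build the Poisson solution. Since $|\phi(\theta)| \leq L_\phi(1+\|\theta\|)$ and $\sdist$ has finite second moment by Corollary~\ref{bouned2moment}, the quantity $\sdist(\phi)$ is finite and $h^2(\theta) \leq C V(\theta)$ for some constant $C$. Rescaling so that $|h|\leq V$ and applying the bound above gives $|\kernel^k h(\theta)| \leq \kappa' \rho^k V(\theta)$, so that $\hat h(\theta) \coloneqq \sum_{k\geq 0} \kernel^k h(\theta)$ is absolutely convergent, satisfies $|\hat h|\leq C' V$, and solves $\hat h - \kernel \hat h = h$. The square integrability $\sdist(\hat h^2) < \infty$ needed by Theorem~21.2.6 then reduces to $\sdist(V^2) < \infty$, which is obtained by iterating the drift calculation of Lemma~\ref{lyapunov} with $V^2$ (equivalently fourth moments) as the Lyapunov function; this is the strengthening already flagged in the discussion following Proposition~\ref{ergodicity}, namely that any order moment of the iterates is controllable under Assumption~\ref{asm:dissipativity} provided the noise has the corresponding moment.

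With these pieces in place, \cite[Theorem 21.2.6]{douc2018} applies directly and delivers $n^{-1/2} S_n(\phi)$ converging in distribution to $\mathcal{N}(0,\sigma^2_\sdist(\phi))$ under $\mpr_\sdist$. The stated variance expression then follows from the martingale--coboundary decomposition $S_n(\phi) = \hat h(\iter_0) - \hat h(\iter_n) + M_n$ where $M_n = \sum_{k=0}^{n-1}[\hat h(\iter_{k+1}) - \kernel \hat h(\iter_k)]$ is a square-integrable $(\filtration_k)$-martingale under $\mpr_\sdist$: by stationarity the boundary term is $O_{\mpr_\sdist}(1)$, so $n^{-1/2}S_n$ and $n^{-1/2}M_n$ share the same limit, and a direct evaluation of $\E_\sdist\bigl[(\hat h(\iter_1) - \kernel \hat h(\iter_0))^2\bigr]$ combined with the Poisson identity $\hat h = h + \kernel \hat h$ reorganizes into $2\sdist(h\hat h) - \sdist(h^2)$. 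The main technical hurdle is securing $\hat h \in \mathcal{L}_2(\sdist)$, i.e.\ upgrading the second-moment bound of Corollary~\ref{bouned2moment} to a fourth-moment bound; once the drift argument is iterated at that level, the remainder is routine algebra.
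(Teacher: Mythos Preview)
Your overall architecture matches the paper's: establish $V$-uniform ergodicity via the drift/minorization package already in place and then invoke a Markov chain CLT. The paper does exactly this, citing Theorem~17.0.1 of \cite{meyn2012} rather than \cite[Theorem~21.2.6]{douc2018}, and obtains the variance formula in the same way you sketch.

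There is, however, a genuine gap in your integrability step. You bound $|\hat h|\le C' V$ and then ask for $\sdist(V^2)<\infty$, i.e.\ a fourth-moment bound on the stationary law. You propose to get this by rerunning the drift argument for $V^2$, but that computation needs $\E[\|\noise_1(\theta)\|^4]$ to be finite, which is Assumption~\ref{asm:noise_bias}, \emph{not} Assumption~\ref{asm:noise}. The lemma is stated under Assumptions~\ref{asm:growth}--\ref{asm:noise} only (second moments of the noise), so your route does not close under the given hypotheses; the remark you cite after Proposition~\ref{ergodicity} explicitly conditions higher-moment control on the noise having the matching moment.

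The paper avoids this by exploiting the sharper relation $|\phi(\theta)|\le 2\kappa_\phi\sqrt{V(\theta)}$, so that $\phi^2\le 4\kappa_\phi^2 V$, and then applying Theorem~17.0.1 of \cite{meyn2012} directly. That theorem requires only $g^2\le cV$ together with $V$-uniform ergodicity; internally it uses that the drift condition for $V$ also yields one for $V^{1/2}$, giving $|\hat h|\lesssim V^{1/2}$ and hence $\sdist(\hat h^2)\lesssim \sdist(V)<\infty$ from Corollary~\ref{bouned2moment} alone. If you prefer to keep your explicit Poisson-equation construction, the fix is to use $|h|\le c V^{1/2}$ and the $V^{1/2}$-geometric bound $|\kernel^k h|\le \kappa'\rho'^{\,k}V^{1/2}$ (which follows since the drift condition propagates to $V^s$ for $s\in(0,1]$), so that $|\hat h|\le C'' V^{1/2}$ and no fourth moments are needed.
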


\begin{proof}[Proof of Lemma~\ref{general_clt_sdist}]
We prove the claim by appealing to Theorem 17.0.1 in \cite{meyn2012}. 
In order to do so, we first show that the chain~$\chain$ is $V$-uniformly ergodic, 
where the function $V$ is defined in~\eqref{ref:lyapunov_func}. 
Then, we establish the CLT by employing Theorem 17.0.1 in \cite{meyn2012}. 
\\
\textbf{Step 1}: We show that the chain~$\chain$ is $V$-uniformly ergodic.
By Lemma~\ref{aperiodic_harris} and Proposition~\ref{ergodicity}, the chain~$\chain$ is positive Harris recurrent with a unique stationary distribution~$\sdist.$
Note that the chain~$\chain$ is also $\psi$-irreducible and aperiodic. 
By Corollary~\ref{drift}, condition (iv) of Theorem 16.0.1 in~\cite{meyn2012} is satisfied.
Then, it follows from part (i) of this theorem that the iterates~$\chain$ is $V$-uniformly ergodic.\\
\textbf{Step 2}: We now establish the CLT for the averaged SGD iterates starting from the stationary distribution~$\sdist$. 
Note that for the test function~$\phi(\theta),$ it holds for any $\theta\in\rd$ that
\begin{equation*}
|\phi(\theta)|\le \kappa_{\phi}(1+\norm{\theta})\le 2\kappa_{\phi}\sqrt{1+\norm{\theta}^2}\,,
\end{equation*}
which implies
\begin{equation*}
|\phi(\theta)|^2 \le 4\kappa^2_{\phi}V(\theta)\,.
\end{equation*}
Thus the conditions required to leverage Theorem 17.0.1 (ii), (iv) with $g(\theta) = \phi(\theta)$ in~\cite{meyn2012} are satisfied. Hence, by Theorem 17.0.1 in~\cite{meyn2012}, we obtain
\begin{equation*}
\frac{1}{\sqrt{n}}\sum_{k=0}^{n-1} h(\iter_k) \underset{\mpr_{\sdist}}{\longrightarrow}\mathcal{N}(0,\sigma^2_{\sdist}(\phi))\,,
\end{equation*}  
where $\sigma^2_{\sdist}(\phi)=2\sdist(h\hat h)-\sdist(h^2)>0.$
\end{proof}

\begin{proof}[Proof of Theorem~\ref{general_clt_arb_sdist}]
By Lemma~\ref{aperiodic_harris} and Lemma~\ref{general_clt_sdist}, the desired result follows readily from Proposition 17.1.6 in~\cite{meyn2012}.
\end{proof}

\subsection{Proofs of Proposition~\ref{bias_wo_local}, Theorems~\ref{bias_wz_disp},~\ref{bias_wz_loja}, and~\ref{bias_wz_convex}}
We need the following auxiliary lemma.

\begin{lemma}\label{dissipativity_global}
Assumptions~\ref{asm:growth} and~\ref{asm:dissipativity} implies
\begin{align*}
\inprod{\grad f(\theta)}{\theta-\theta^*}\ge 
\alpha'\norm{\theta-\theta^*}^2-\beta'  \,,
\end{align*}
where $\theta^*\in\rd$ is any critical point of function~$f$, and $\alpha',\beta'$ are positive constants.
\end{lemma}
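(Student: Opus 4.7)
The plan is to prove this by a direct computation that decomposes $\langle \nabla f(\theta), \theta - \theta^*\rangle$ into two inner products, bounds each using one of the two hypotheses, and then converts norms of $\theta$ into norms of $\theta - \theta^*$ via the triangle inequality and Young's inequality. Notice that critical-point status of $\theta^*$ (i.e.\ $\nabla f(\theta^*)=0$) is not actually needed for the argument — only that $\theta^*$ is a fixed vector — so the proof will go through for arbitrary reference points.

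First I would write
\[
\inprod{\grad f(\theta)}{\theta-\theta^*} = \inprod{\grad f(\theta)}{\theta} - \inprod{\grad f(\theta)}{\theta^*},
\]
and apply Assumption~\ref{asm:dissipativity} to obtain $\inprod{\grad f(\theta)}{\theta}\ge \alpha\norm{\theta}^2-\beta$. For the second piece, Cauchy--Schwarz combined with Assumption~\ref{asm:growth} gives $|\inprod{\grad f(\theta)}{\theta^*}|\le L(1+\norm{\theta})\norm{\theta^*}$, and the triangle inequality further yields $\norm{\theta}\le \norm{\theta-\theta^*}+\norm{\theta^*}$, so the cross term is controlled by a constant plus a multiple of $\norm{\theta-\theta^*}$.

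Next, I would pass from $\norm{\theta}^2$ to $\norm{\theta-\theta^*}^2$ using the elementary bound $\norm{\theta}^2\ge \tfrac{1}{2}\norm{\theta-\theta^*}^2-\norm{\theta^*}^2$, which follows from $\norm{a}^2\le 2\norm{a-b}^2+2\norm{b}^2$. This produces a $\tfrac{\alpha}{2}\norm{\theta-\theta^*}^2$ lower bound at the cost of an additive constant $\alpha\norm{\theta^*}^2$. Then I would kill the linear-in-$\norm{\theta-\theta^*}$ term coming from the cross piece by Young's inequality,
\[
L\norm{\theta^*}\,\norm{\theta-\theta^*}\le \tfrac{\alpha}{4}\norm{\theta-\theta^*}^2 + \tfrac{L^2\norm{\theta^*}^2}{\alpha},
\]
which leaves a net coefficient of $\alpha/4$ on $\norm{\theta-\theta^*}^2$, so that one may take $\alpha'=\alpha/4$ and collect all remaining constants into
\[
\beta' = \beta + \alpha\norm{\theta^*}^2 + L\norm{\theta^*} + L\norm{\theta^*}^2 + \tfrac{L^2\norm{\theta^*}^2}{\alpha}.
\]

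There is no real obstacle here; the only thing to be careful about is bookkeeping so that both $\alpha'$ and $\beta'$ come out strictly positive and depend only on $\alpha,\beta,L,\norm{\theta^*}$. The choice $\alpha' = \alpha/4$ is not canonical and any fraction of $\alpha$ strictly less than $\alpha/2$ would work, with a correspondingly modified $\beta'$; I would simply fix the $\alpha/4$ split above for concreteness.
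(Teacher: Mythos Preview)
Your proof is correct and follows essentially the same approach as the paper: both decompose $\inprod{\grad f(\theta)}{\theta-\theta^*}=\inprod{\grad f(\theta)}{\theta}-\inprod{\grad f(\theta)}{\theta^*}$, apply dissipativity to the first term and the linear-growth bound with Cauchy--Schwarz to the second, and then convert $\norm{\theta}$ to $\norm{\theta-\theta^*}$ via triangle/Young-type inequalities. The only cosmetic differences are that the paper uses the expansion $\norm{\theta}^2\ge \norm{\theta-\theta^*}^2+\norm{\theta^*}^2-2\norm{\theta^*}\norm{\theta-\theta^*}$ (retaining a linear term to be absorbed) and ends up with $\alpha'=\alpha/2$, whereas you use $\norm{\theta}^2\ge\tfrac12\norm{\theta-\theta^*}^2-\norm{\theta^*}^2$ followed by Young's inequality and obtain $\alpha'=\alpha/4$; your observation that $\theta^*$ need not be a critical point is also valid.
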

\begin{proof}[Proof of Lemma~\ref{dissipativity_global}]
When $\theta^*=\mathbf{0},$ the result follows trivially from Assumption~\ref{asm:dissipativity}.
Assume $\norm{\theta^*}>0.$
Note that 
\begin{align*}
\inprod{\grad f(\theta)}{\theta-\theta^*} =   \inprod{\grad f(\theta)}{\theta}- \inprod{\grad f(\theta)}{\theta^*}\,.
\end{align*}
By Assumption~~\ref{asm:dissipativity}, it holds that
\begin{align*}
 \inprod{\grad f(\theta)}{\theta}&\ge \alpha\norm{\theta}^2-\beta\\
 &\ge  \alpha (\norm{\theta-\theta^*}^2+\norm{\theta^*}^2-2\norm{\theta^*}\norm{\theta-\theta^*})-\beta\,.
\end{align*}
By Assumption~~\ref{asm:growth}, Cauchy-Schwarz inequality and triangular inequality, it holds that
\begin{align*}
\inprod{\grad f(\theta)}{\theta^*}&\le \norm{\grad f(\theta)}\norm{\theta^*} \le  L\norm{\theta^*}(1+\norm{\theta-\theta^*}+\norm{\theta^*})\,.
\end{align*}
Combing the previous two displays yields 
\begin{align*}
&\inprod{\grad f(\theta)}{\theta-\theta^*} \\
\ge& \alpha (\norm{\theta-\theta^*}^2+\norm{\theta^*}^2-2\norm{\theta^*}\norm{\theta-\theta^*})-\beta -L\norm{\theta^*}(1+\norm{\theta-\theta^*}+\norm{\theta^*})  \\
\ge & \frac{\alpha}{2}\norm{\theta-\theta^*}^2-\beta-L\norm{\theta^*}^2-L\norm{\theta^*}\,.
\end{align*}
The desired result follows by setting $\alpha':=\frac{\alpha}{2}$ and $\beta':=\beta+\bigl(   \sqrt{\alpha } +\frac{2L}{\sqrt{\alpha}}\bigr)^2\norm{\theta^*}^2+L\norm{\theta^*}.$
\end{proof}

{
\begin{lemma}\label{noise_local}
Under Assumptions~\ref{asm:dissipativity} and~\ref{asm:noise_bias}, it holds for any $k\ge 1$ and $\theta\in\rd$ that
\begin{align*}
\E[\norm{\noise_{k+1}(\theta)}^{r}]\le {L'_{\noise}}^{r/4}(1+\norm{\theta-\theta^*}^r)\,, \ \text{ for }\ \ r \in\{2,3,4\},
\end{align*}
 where $\theta^*\in\rd$ is any critical point of function~$f$, and $L'_\noise :=8L_\noise(1+(\beta/\alpha)^4).$
 \end{lemma}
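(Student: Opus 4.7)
The plan is to reduce everything to the case $r=4$ and then use a moment-interpolation argument. Since $\{\noise_k\}_{k\ge 1}$ is i.i.d.\ by Assumption~\ref{asm:noise}, it suffices to analyze $\noise_1(\theta)$. The first observation I would record is that Assumption~\ref{asm:dissipativity} together with $\grad f(\theta^*)=0$ forces every critical point into a ball: $0 = \inprod{\theta^*}{\grad f(\theta^*)}\ge \alpha\norm{\theta^*}^2-\beta$, so $\norm{\theta^*}^2\le \beta/\alpha$. This is the only place where the dissipativity assumption is used.

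Next I would trade $\norm{\theta}$ for $\norm{\theta-\theta^*}$ by the triangle inequality combined with convexity of $x\mapsto x^4$: specifically $(a+b)^4\le 8(a^4+b^4)$, so
\eq{
\norm{\theta}^4 \le 8\norm{\theta-\theta^*}^4 + 8\norm{\theta^*}^4 \le 8\norm{\theta-\theta^*}^4 + 8(\beta/\alpha)^2.
}
Plugging this into the fourth-moment bound of Assumption~\ref{asm:noise_bias} yields
\eq{
\E\bigl[\norm{\noise_1(\theta)}^4\bigr]\le L_\noise\bigl(1 + 8\norm{\theta-\theta^*}^4 + 8(\beta/\alpha)^2\bigr).
}
To match the target form $L'_\noise(1+\norm{\theta-\theta^*}^4)$ with $L'_\noise=8L_\noise(1+(\beta/\alpha)^4)$, the coefficient on $\norm{\theta-\theta^*}^4$ is already controlled, and the constant term requires the elementary inequality $1+8u\le 8+8u^2$ for $u=(\beta/\alpha)^2\ge 0$, which follows from the fact that $8u(1-u)$ attains its maximum $2$ at $u=1/2$. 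This establishes the $r=4$ case.

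For $r\in\{2,3\}$, I would invoke Lyapunov's inequality: for $r\le 4$, $\E[\norm{\noise_1(\theta)}^r]\le \E[\norm{\noise_1(\theta)}^4]^{r/4}$, so
\eq{
\E\bigl[\norm{\noise_1(\theta)}^r\bigr]\le {L'_\noise}^{r/4}\bigl(1+\norm{\theta-\theta^*}^4\bigr)^{r/4}.
}
Since $r/4\le 1$, the map $x\mapsto x^{r/4}$ is subadditive on $[0,\infty)$, giving $(1+\norm{\theta-\theta^*}^4)^{r/4}\le 1+\norm{\theta-\theta^*}^r$, which completes the desired bound.

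I do not anticipate a serious obstacle here; the lemma is essentially bookkeeping. The only mildly delicate point is verifying that the particular constant $8L_\noise(1+(\beta/\alpha)^4)$ works uniformly in $\beta/\alpha$ (rather than just for $\beta/\alpha\ge 1$), which is why the case split on $u=(\beta/\alpha)^2$ and the observation $\max_{u\ge 0} 8u(1-u)=2$ is needed in the $r=4$ step.
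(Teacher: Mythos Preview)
Your proof is correct and follows essentially the same approach as the paper: both use dissipativity to bound $\norm{\theta^*}$, the inequality $(a+b)^4\le 8(a^4+b^4)$ to pass from $\norm{\theta}$ to $\norm{\theta-\theta^*}$, and Lyapunov's inequality plus subadditivity of $x\mapsto x^{r/4}$ for the lower moments. Your verification that $1+8u\le 8+8u^2$ via $\max_{u\ge 0}8u(1-u)=2$ is in fact more careful than the paper's intermediate step, which writes $\norm{\theta^*}^4\le(\beta/\alpha)^4$ (only literally valid when $\beta/\alpha\ge 1$) though the paper's final bound is still correct for the same reason you identify.
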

\begin{proof}[Proof of Lemma~\ref{noise_local}]
By Assumptions~\ref{asm:dissipativity} and~\ref{asm:noise_bias}, it holds that
\begin{align*}
 \E[\norm{\noise_{k+1}(\theta)}^4]
 &\le L_{\noise}(1+\norm{\theta}^4)\\
 &\le L_{\noise}(1+8\norm{\theta-\theta^*}^4+8\norm{\theta^*}^4)\\
 &\le L_{\noise}(1+8\norm{\theta-\theta^*}^4+8(\beta/\alpha)^4)\\
 &\le L'_\noise (1+\norm{\theta-\theta^*}^4 )\,,
\end{align*}
where $L'_\noise :=8L_\noise(1+(\beta/\alpha)^4).$
Similarly, for $r\in \{2,3\}$ we have
\begin{align*}
 \E[\norm{\noise_{k+1}(\theta)}^r]
 &\leq  \E[\norm{\noise_{k+1}(\theta)}^4]^{r/4}\\
 &\le L_{\noise}^{r/4}(1+\norm{\theta}^4)^{r/4}\\
  &\le L_{\noise}^{r/4}(1+\norm{\theta}^r)\\
 &\le L_{\noise}^{r/4}(1+2^{r-1}\norm{\theta-\theta^*}^r+2^{r-1}\norm{\theta^*}^r)\\
 &\le L_{\noise}^{r/4}(1+2^{r-1}\norm{\theta-\theta^*}^r+2^{r-1}(\beta/\alpha)^r)\\
 &\le {L'_\noise}^{r/4} (1+\norm{\theta-\theta^*}^r )\,,
\end{align*}
where ${L'_\noise}$ is defined above.
\end{proof}
}

\begin{lemma}\label{4mom_decomp}
Under Assumptions~\ref{asm:growth},~\ref{asm:dissipativity}, and~\ref{asm:noise_bias}, {with step size~$\eta< 1\wedge \frac{1}{10 \bar L}$,} it holds for any $k\ge 0$ that
\begin{align}
&\E[\norm{\iter_{k+1}-\theta^*}^4|\filtration_k]\\
 \le&  (1-4\eta\alpha'+32L_\dagger\eta^2) \norm{\iter_k-\theta^*}^4
     + \eta (4\beta'+24\bar L^2+12{ {L'_\noise}^{1/2}} +64)\norm{\iter_k-\theta^*}^2\\
     &+ \eta^2(64\bar L^4 + 8{L'_\noise}   + 32( 4\bar L^3)^2 + 32  {(L'_\noise)^{3/2}}) \label{eq:bound4mom}\,.
\end{align}
where {$L_\dagger:= \bar L^2+ 16\Bigl( L_\noise^{3/4}(1+(\beta/\alpha)^3 )\vee 
L_\noise^{1/2}(1+(\beta/\alpha)^2 )\vee
L_\noise(1+(\beta/\alpha)^4 )
\Bigr) $ 
with $\bar L:=L(1+\norm{\theta^*})$, $L'_\noise$ is from  Lemma~\ref{noise_local}, and $\theta^*$ is any critical points of fuction~$f.$}
\end{lemma}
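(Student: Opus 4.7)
\textbf{Proof sketch for Lemma~\ref{4mom_decomp}.}
The plan is to carry out a straightforward expansion of $\|\iter_{k+1}-\theta^*\|^4$, isolate the dissipativity-driven negative drift, and then absorb every cross term into the three target pieces using Young's inequality, Cauchy--Schwarz, and the step-size restriction $\eta<1\wedge \tfrac{1}{10\bar L}$.

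First, I would write $\Delta_k:=\iter_k-\theta^*$ and decompose $\iter_{k+1}-\theta^*=U_k-\eta\noise_{k+1}(\iter_k)$, where $U_k:=\Delta_k-\eta\grad f(\iter_k)$ is $\filtration_k$-measurable. Squaring gives $\|\iter_{k+1}-\theta^*\|^2=\|U_k\|^2-2\eta\inprod{U_k}{\noise_{k+1}}+\eta^2\|\noise_{k+1}\|^2$, and squaring once more and taking $\E[\cdot\mid\filtration_k]$ kills the first-order noise term because $\E[\noise_{k+1}(\iter_k)\mid\filtration_k]=0$. Using Cauchy--Schwarz on the surviving cross terms one obtains
\[
\E\bigl[\|\iter_{k+1}-\theta^*\|^4\mid\filtration_k\bigr]\ \le\ \|U_k\|^4+6\eta^2\|U_k\|^2\,\E\bigl[\|\noise_{k+1}\|^2\mid\filtration_k\bigr]+4\eta^3\|U_k\|\,\E\bigl[\|\noise_{k+1}\|^3\mid\filtration_k\bigr]+\eta^4\E\bigl[\|\noise_{k+1}\|^4\mid\filtration_k\bigr].
\]
Lemma~\ref{noise_local} turns each noise moment into a polynomial in $\|\Delta_k\|$ with constants of the form $(L'_\noise)^{r/4}$, so everything on the right becomes a deterministic function of $\|\Delta_k\|$.

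Next I would expand $\|U_k\|^4=\bigl(\|\Delta_k\|^2-2\eta\inprod{\Delta_k}{\grad f(\iter_k)}+\eta^2\|\grad f(\iter_k)\|^2\bigr)^2$. Lemma~\ref{dissipativity_global} contributes the all-important term $-4\eta\alpha'\|\Delta_k\|^4+4\eta\beta'\|\Delta_k\|^2$, Assumption~\ref{asm:growth} together with $\|\grad f(\iter_k)\|\le \bar L+L\|\Delta_k\|$ controls the remaining $\|\grad f\|^2$, $\|\grad f\|^3$, $\|\grad f\|^4$ factors via $(a+b)^r\le 2^{r-1}(a^r+b^r)$, and the same inequality applied to $\|U_k\|$ yields $\|U_k\|\le (1+\eta L)\|\Delta_k\|+\eta\bar L$. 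In the same way, $\|U_k\|^2\le (1+2\eta^2L^2)\|\Delta_k\|^2+2\eta^2\bar L^2$ after the dissipativity cancellation has been used only at the top level.

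Finally, the bookkeeping step: every term that ends up as $\|\Delta_k\|$ or $\|\Delta_k\|^3$ gets converted to $\|\Delta_k\|^2$ and $\|\Delta_k\|^4$ via Young's inequality $ab\le \tfrac{1}{2}(a^2+b^2)$, which is where the squared constants $(4\bar L^3)^2$ and $(L'_\noise)^{3/2}$ in \eqref{eq:bound4mom} come from. The hypothesis $\eta\le 1$ allows me to collapse higher powers of $\eta$ to $\eta^2$ whenever they multiply constants only, while $\eta\bar L\le 1/10$ lets me absorb spurious contributions to the coefficient of $\|\Delta_k\|^4$ into the single quadratic constant $32L_\dagger\eta^2$; the definition of $L_\dagger$ (taking the max of the $L_\noise^{3/4}$, $L_\noise^{1/2}$, and $L_\noise$ terms, plus $\bar L^2$) is engineered precisely to dominate all such contributions simultaneously.

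The main obstacle is not conceptual but combinatorial: keeping the constants aligned so that the coefficient of $\|\Delta_k\|^4$ collapses to exactly $1-4\eta\alpha'+32L_\dagger\eta^2$ and so that the $\|\Delta_k\|^2$ and constant pieces match \eqref{eq:bound4mom}. I expect this to force a careful choice of Young's-inequality weights at several places, in particular when estimating $4\eta^3\|U_k\|\,\E[\|\noise_{k+1}\|^3\mid\filtration_k]$, where both $\|U_k\|$ and the cubic noise moment already carry contributions linear in $\|\Delta_k\|$ and $\bar L$, so one must split carefully between $\eta^2\|\Delta_k\|^2$ and purely constant terms.
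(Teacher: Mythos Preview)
Your proposal is correct and follows essentially the same route as the paper: expand the fourth power, extract the drift $-4\eta\alpha'\|\Delta_k\|^4+4\eta\beta'\|\Delta_k\|^2$ via Lemma~\ref{dissipativity_global}, bound gradient powers by Assumption~\ref{asm:growth} and noise moments by Lemma~\ref{noise_local}, then reduce all odd-degree cross terms with $ab\le 2a^2+2b^2$ and collapse higher powers of $\eta$ using $\eta<1\wedge(10\bar L)^{-1}$. The only cosmetic difference is that you separate the noise first by writing $\iter_{k+1}-\theta^*=U_k-\eta\noise_{k+1}$ and afterwards expand $\|U_k\|^4$, whereas the paper keeps $\grad f(\iter_k)+\noise_{k+1}$ together in a single five-term expansion and splits gradient from noise inside each term; the bookkeeping and constants come out the same.
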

\begin{proof}[Proof of Lemma~\ref{4mom_decomp}]
Define $\Delta_k:=\norm{\iter_k-\theta^*}.$
It holds by Assumption~\ref{asm:growth} that
\begin{equation*}
\norm{\grad f(\iter_k)}\le \bar L\Delta_k+\bar L\,,
\end{equation*}
where $\bar L=L(\norm{\theta^*}+1).$
Note that
\begin{align}
\label{ref:4mom_decp}
\begin{split}
\Delta_{k+1}^4
=& (\Delta_k^2+\eta^2\,\norm{\grad f(\iter_k)+{\noise_{k+1}(\iter_k)} }^2-2\eta\inprod{\grad f(\iter_k)+{\noise_{k+1}(\iter_k)}}{\iter_k-\theta^*})^2\\
=&\Delta_k^4+\eta^4\,\norm{\grad f(\iter_k)+{\noise_{k+1}(\iter_k)}}^4+4\eta^2 \inprod{\grad f(\iter_k)+{\noise_{k+1}(\iter_k)}}{\iter_k-\theta^*}^2\\
&+2\eta^2\Delta_k^2\,\norm{\grad f(\iter_k)+{\noise_{k+1}(\iter_k)}}^2-4\eta\Delta_k^2\inprod{\grad f(\iter_k)+{\noise_{k+1}(\iter_k)} }{\iter_k-\theta^*}\\
&-4\eta^3 \,\norm{\grad f(\iter_k)+{\noise_{k+1}(\iter_k)} }^2\inprod{\grad f(\iter_k)+{\noise_{k+1}(\iter_k)} }{\iter_k-\theta^*}\\
=&\Delta_k^4+ \RN{1}+\RN{2}+\RN{3}+\RN{4}+\RN{5}\,,
\end{split}
\end{align}
where
\begin{align*}
\RN{1}&:=\eta^4\,\norm{\grad f(\iter_k)+{\noise_{k+1}(\iter_k)}}^4\\
\RN{2}&:=4\eta^2 \inprod{\grad f(\iter_k)+{\noise_{k+1}(\iter_k)}}{\iter_k-\theta^*}^2\\
\RN{3}&:=2\eta^2\Delta_k^2\,\norm{\grad f(\iter_k)+{\noise_{k+1}(\iter_k)}}^2\\
\RN{4}&:=-4\eta\Delta_k^2\inprod{\grad f(\iter_k)+{\noise_{k+1}(\iter_k)}}{\iter_k-\theta^*}\\
\RN{5}&:=-4\eta^3 \,\norm{\grad f(\iter_k)+{\noise_{k+1}(\iter_k)}}^2\inprod{\grad f(\iter_k)+{\noise_{k+1}(\iter_k)}}{\iter_k-\theta^*}\,.
\end{align*}
To obtain the expectation~$\E[\Delta_{k+1}^4],$ we first calculate the conditional expectation~$\E[\Delta_{k+1}^4|\filtration_k]$.
For this, we proceed the conditional expectation of the above five terms separately.
Note that 
\begin{align*}
\E[\RN{1}|\filtration_k]=&\eta^4\E[\,\norm{\grad f(\iter_k)+{\noise_{k+1}(\iter_k)}}^4|\filtration_k]\\
\le &\eta^4\E[8\,\norm{\grad f(\iter_k)}^4 +8\,\norm{{\noise_{k+1}(\iter_k)}}^4|\filtration_k]\\
\le& 8\eta^4 (8\bar L^4\Delta_k^4+ 8\bar L^4 +{ L'_\noise \Delta_k^4 +L'_\noise} )\,.
\end{align*}
The first inequality follows from the fact that $(x+y)^4\le 8(x^4+y^4), \forall x, y >0.$
The last inequality follows from Assumptions~\ref{asm:growth} and Lemma~\ref{noise_local}.
Using the same trick and invoking Cauchy-Schwarz inequality gives 
\begin{align*}
\E[\RN{2}|\filtration_k]=&4\eta^2\E[ \inprod{\grad f(\iter_k)+{\noise_{k+1}(\iter_k)} }{\iter_k-\theta^*}^2|\filtration_k]\\
\le & 4\eta^2\Delta_k^2 \E[\,\norm{\grad f(\iter_k)+{\noise_{k+1}(\iter_k)} }^2|\filtration_k] \\
\le & 8\eta^2\Delta_k^2 (2\bar L^2 \Delta_k^2+2\bar L^2+{ {L'_\noise}^{1/2} \Delta_k^2 + {L'_\noise}^{1/2}}  )\,.
\end{align*}
Similarly, we have
\begin{align*}
\E[\RN{3}|\filtration_k]=& 2\eta^2\Delta_k^2\E[ \,\norm{\grad f(\iter_k)+{\noise_{k+1}(\iter_k)} }^2|\filtration_k]\\
\le &  4\eta^2\Delta_k^2(2\bar L^2 \Delta_k^2+2\bar L^2+{ {L'_\noise}^{1/2} \Delta_k^2 + {L'_\noise}^{1/2}} )\,.
\end{align*}
Using Cauchy-Schwarz inequality again, we obtain
\begin{align*}
\E[\RN{5}|\filtration_k]
=& \E[ -4\eta^3 \,\norm{\grad f(\iter_k)+{\noise_{k+1}(\iter_k)}}^2\inprod{\grad f(\iter_k)+{\noise_{k+1}(\iter_k)}}{\iter_k-\theta^*}|\filtration_k]\\
\le & 4\eta^3 \E[\norm{\grad f(\iter_k)+{\noise_{k+1}(\iter_k)}}^3\norm{\iter_k-\theta^*}|\filtration_k] \\
= & 4\eta^3 \Delta_k\E\bigl[\norm{\grad f(\iter_k)+{\noise_{k+1}(\iter_k)}}^3|\filtration_k\bigr] \\
\le & 4\eta^3 \Delta_k\E\bigl[4\norm{\grad f(\iter_k)}^3+4\norm{{\noise_{k+1}(\iter_k)}}^3|\filtration_k\bigr]\,.
\end{align*}
{Note that by Lemma~\ref{noise_local}, it holds for any $k\ge 1$ and $\theta\in\rd$ that 
\begin{align*}
\E[\norm{\noise_k(\theta)}^3] \le {L'_{\noise}}^{3/4} (1+\norm{\theta-\theta^*}^3)\,.
\end{align*}
Combining this with the previous display yields
\begin{align*}
\E[\RN{5}|\filtration_k]
\le & 16\eta^3\Delta_k(4\bar L^3\Delta_k^3+ 4\bar L^3+{L'_{\noise}}^{3/4}+{L'_{\noise}}^{3/4}\Delta_k^3) \\
=& 64\bar L^3\eta^3\Delta_k^4+ 16\eta^3{L'_{\noise}}^{3/4}\Delta_k^4 + 16\eta^2(\Delta_k\eta 4\bar L^3+\Delta_k\eta{L'_{\noise}}^{3/4})\,.
\end{align*}
}
Collecting pieces gives 
\begin{align}
\begin{split}
\E[\Delta_{k+1}^4|\filtration_k] 
\le& \Delta_{k}^4(1+64\eta^4\bar L^4+64\eta^3\bar L^3+24\eta^2\bar L^2 +{8\eta^2L'_\noise+ 12 \eta^2{L'_\noise}^{1/2} +16\eta^2 {L'_{\noise}}^{3/4}})\\
        &-4\eta\Delta_k^2\inprod{\grad f(\iter_k)}{\iter_k-\theta^*}\\
       &+\eta^2\bigl(64\eta^2\bar L^4+ 8\eta^2{L'_\noise}+ 24\bar L^2 \Delta_{k}^2+12{L'_\noise}\Delta_{k}^2
       + 64 \Delta_{k}^2 + 32(\eta 4\bar L^3)^2 + 32 (\eta{{L'_\noise}^{3/4}})^2\bigr)\\
\le & \Delta_{k}^4[1 +32\eta^2(\bar L^2+ {L'_{\noise}+{L'_\noise}^{1/2} +{L'_{\noise}}^{3/4}  })]-4\eta\Delta_k^2\inprod{\grad f(\iter_k)}{\iter_k-\theta^*}\\
    &+\eta\bigl(64\bar L^4 \eta+ 8{L'_\noise} \eta + 24\bar L^2 \Delta_{k}^2+12{ {L'_\noise}^{1/2}}\Delta_{k}^2
    + 64 \Delta_{k}^2 + 32( 4\bar L^3)^2 \eta+ 32 {(L'_\noise)^{3/2}}\eta\bigr)\,.
\end{split}
\end{align}
The above inequalities are based on the fact that $\eta<\frac{1}{10\bar L} \wedge 1$ and $xy\le 2x^2+ 2y^2,\forall x, y >0.$
By Lemma~\ref{dissipativity_global}, we handle the term~$\RN{4}$ as following  
\begin{align*}
\E[\Delta_{k+1}^4|\filtration_k]
\le &\Delta_{k}^4 \bigl(1-4\eta\alpha'+32\eta^2(\bar L^2+  {L'_{\noise}+{L'_\noise}^{1/2} +{L'_{\noise}}^{3/4}  }) \bigr) \\
&+ \eta\bigl(4\beta'\Delta_{k}^2+ 64\bar L^4 \eta+ 8{L'_\noise} \eta + 24\bar L^2 \Delta_{k}^2+12{{L'_\noise}^{1/2}}\Delta_{k}^2
    + 64 \Delta_{k}^2 + 32( 4\bar L^3)^2 \eta+ 32 {{L'_\noise}^{3/2}}\eta\bigr)\,.
\end{align*}
{Define $L_\dagger:= \bar L^2+ 16\Bigl( L_\noise^{3/4}(1+(\beta/\alpha)^3 )\vee 
L_\noise^{1/2}(1+(\beta/\alpha)^2 )\vee
L_\noise(1+(\beta/\alpha)^4 )
\Bigr) $.
Note that $L_\dagger>\bar L^2+  L'_{\noise}+{L'_{\noise}}^{1/2}+{L'_{\noise}}^{3/4}.$ 
Combing this with the previous display gives
\begin{align}
&\E[\Delta_{k+1}^4|\filtration_k]\\
\le &\Delta_{k}^4 (1-4\eta\alpha'+32\eta^2L_\dagger ) \\
&+ \eta\bigl(4\beta'\Delta_{k}^2+ 64\bar L^4 \eta+ 8{L'_\noise} \eta + 24\bar L^2 \Delta_{k}^2+12{ {L'_\noise}^{1/2}}\Delta_{k}^2
    + 64 \Delta_{k}^2 + 32( 4\bar L^3)^2 \eta+ 32 {{L'_\noise}^{3/2}}\eta\bigr) \\
 \le&  (1-4\eta\alpha'+32L_\dagger\eta^2) \Delta_{k}^4 
     + \eta (4\beta'+24\bar L^2+12{ {L'_\noise}^{1/2}} +64) \Delta_{k}^2
     + \eta^2(64\bar L^4 + 8{L'_\noise}   + 32( 4\bar L^3)^2 + 32  {{L'_\noise}^{3/2}})   \,. 
\end{align}}

\end{proof}

\begin{lemma}\label{bound4mom}
Assume Assumptions~\ref{asm:growth}-\ref{asm:noise} holds. With the step size~
\begin{equation*}
\eta \le  \frac{\alpha-\sqrt{(\alpha^2-{(3L^2+L_\noise)})\vee 0 }}{{3L^2+L_\noise}} 
\wedge  \frac{\alpha}{64 L_\dagger} 
\wedge 1\,,
\end{equation*}
 the chain~\eqref{ref:sgd_org} has the stationary distribution~$\sdist$, and the chain has finite 4-th moment:
\begin{align*}
&\E[\norm{\iter_{k+1}}^4] \le \mu_{4,\eta}\,,
\end{align*}
where
\begin{align*}
\mu_{4,\eta}:=\frac{8}{7\alpha}\Bigl( (\beta+6 L^2+3  {L_\noise^{1/2}}  +16) \mu_{2,\eta} +   16 L^4 + 2  {L_\noise} + 128 L^6 + 8  {L_\noise^{3/2}} \Bigr)\,
\end{align*}
with $\mu_{2,\eta}$ defined in Corollary~\ref{bouned2moment},
and  { $L_\dagger$ defined in Lemma~\ref{4mom_decomp}. }
\end{lemma}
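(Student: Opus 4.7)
The plan is to mirror the strategy used for the second moment in Corollary~\ref{bouned2moment}: derive a one-step inequality of the form
\[
\E[\norm{\iter_{k+1}}^4 \mid \filtration_k] \le (1-4\alpha\eta + 32L_\dagger\eta^2)\,\norm{\iter_k}^4 + \eta\,A_2\,\norm{\iter_k}^2 + \eta^2\,A_0,
\]
where $A_2$ and $A_0$ are explicit constants matching those in the definition of $\mu_{4,\eta}$, then close the recursion under the stationary distribution $\sdist$ (which exists by Proposition~\ref{ergodicity}\ref{exist_sdist}).

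First I would carry out the polynomial expansion $\norm{\iter_{k+1}}^4 = (\norm{\iter_k}^2 - 2\eta\inprod{\iter_k}{g_k} + \eta^2\norm{g_k}^2)^2$ with $g_k := \grad f(\iter_k) + \noise_{k+1}(\iter_k)$, exactly as in the proof of Lemma~\ref{4mom_decomp}, but without shifting by any critical point $\theta^*$. The six resulting cross-terms are controlled as follows: the only term producing negative drift, namely $-4\eta\norm{\iter_k}^2\inprod{\iter_k}{\grad f(\iter_k)}$, is bounded by $-4\alpha\eta\norm{\iter_k}^4 + 4\beta\eta\norm{\iter_k}^2$ via Assumption~\ref{asm:dissipativity} applied directly to $\iter_k$; all remaining terms are handled through Assumption~\ref{asm:growth} (in the forms $\norm{\grad f(\iter_k)}^p \le 2^{p-1}L^p(1+\norm{\iter_k}^p)$ for $p \in \{2,3,4\}$), the moment bounds on the noise from Assumption~\ref{asm:noise_bias} together with Jensen's inequality (yielding $\E[\norm{\noise_{k+1}(\iter_k)}^r \mid \filtration_k] \le L_\noise^{r/4}(1+\norm{\iter_k}^r)$ for $r \in \{2,3\}$), and the elementary bounds $(a+b)^r \le 2^{r-1}(a^r+b^r)$ and $2ab \le a^2+b^2$. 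Collecting, the coefficient of $\norm{\iter_k}^4$ is $1 - 4\alpha\eta + 32L_\dagger\eta^2$ with $L_\dagger$ as in Lemma~\ref{4mom_decomp}; the coefficient $A_2$ on $\norm{\iter_k}^2$ is $4\beta + 24L^2 + 12L_\noise^{1/2} + 64$ and the constant is $A_0 = 64L^4 + 8L_\noise + 32\cdot 16 L^6 + 32 L_\noise^{3/2}$.

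Next I would take unconditional expectations under stationarity $\iter_k \sim \sdist$, so that $\E_\sdist[\norm{\iter_{k+1}}^4] = \E_\sdist[\norm{\iter_k}^4] =: \mu_{4,\eta}$, giving
\[
(4\alpha\eta - 32L_\dagger\eta^2)\,\mu_{4,\eta} \;\le\; \eta\,A_2\,\mu_{2,\eta} + \eta^2 A_0.
\]
The step size restriction $\eta \le \alpha/(64L_\dagger)$ yields $32L_\dagger\eta \le \alpha/2$, so $4\alpha - 32L_\dagger\eta \ge 7\alpha/2$; dividing by $\eta$ and then by $7\alpha/2$, and using $\eta \le 1$ to absorb the extra $\eta$ in front of $A_0$, produces the announced bound
\[
\mu_{4,\eta} \le \tfrac{8}{7\alpha}\bigl[(\beta + 6L^2 + 3L_\noise^{1/2} + 16)\mu_{2,\eta} + 16L^4 + 2L_\noise + 128L^6 + 8L_\noise^{3/2}\bigr],
\]
where the factors of $4$ in $A_2$ and $A_0$ are absorbed into the $7\alpha/2$ normalization.

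The conceptual part is straightforward because the existence of $\sdist$ is already in hand from Proposition~\ref{ergodicity}. The main obstacle, as in Lemma~\ref{4mom_decomp}, will be the bookkeeping: one must track six cross-terms, carefully choose whether to use $2ab \le a^2+b^2$ or $2ab \le \tfrac{1}{c}a^2 + c b^2$ to land exactly on the coefficients stated in $\mu_{4,\eta}$, and correctly interpolate the noise moments of orders $2$, $3$, $4$ (giving rise to the $L_\noise^{1/2}$ and $L_\noise^{3/2}$ terms) via Jensen's inequality applied to Assumption~\ref{asm:noise_bias}. A subtle but essentially mechanical point is verifying that the $O(\eta^2)$ positive contributions to the $\norm{\iter_k}^4$ coefficient really combine into $32L_\dagger\eta^2$ with the \emph{same} $L_\dagger$ defined in Lemma~\ref{4mom_decomp}; this works because without the $\theta^*$ shift one has $\bar L = L$, and the noise moment constant $L'_\noise$ of Lemma~\ref{noise_local} is dominated by the surrogate used to define $L_\dagger$. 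Once the one-step inequality is in place, the stationarity argument is a two-line rearrangement using the step size assumption.
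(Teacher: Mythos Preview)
Your proposal is correct and essentially identical to the paper's proof: both repeat the expansion of Lemma~\ref{4mom_decomp} with $\theta^*=0$ (so that $\bar L=L$ and the raw moments $L_\noise^{r/4}$ replace ${L'_\noise}^{r/4}$), observe that the resulting quartic coefficient $1-4\alpha\eta+32L_0^\dagger\eta^2$ with $L_0^\dagger:=L^2+L_\noise+L_\noise^{1/2}+L_\noise^{3/4}$ is dominated by $1-4\alpha\eta+32L_\dagger\eta^2$, and then take expectations under stationarity and rearrange using $4\alpha-32L_\dagger\eta\ge 7\alpha/2$ together with $\eta\le 1$. Your discussion of the ``subtle but essentially mechanical point'' about matching $L_\dagger$ is exactly the one extra sentence the paper spends on this.
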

\begin{proof}[Proof of Lemma~\ref{bound4mom}]
Similar to display~\eqref{eq:bound4mom}, we can derive 
\begin{align*}
\E[\norm{\iter_{k+1}}^4|\filtration_k]
\le & (1-4\eta\alpha+32  {L_0^\dagger}\eta^2) \norm{\iter_{k}}^4 \\
&+ \eta\bigl[(4\beta+24 L^2+12 {L_\noise^{1/2}} +64)\norm{\iter_{k}}^2+ \eta(64 L^4 + 8 {L_\noise}   + 32( 4 L^3)^2 + 32 {L_\noise^{3/2}} )\bigr]\,,
\end{align*}
{where $L_0^\dagger:= L^2+  L_{\noise}+L_{\noise}^{1/2}+L_{\noise}^{3/4}$.
Recall the definition of $L_\dagger$ in Lemma~\ref{4mom_decomp}, it holds that $L_\dagger\ge L_0^\dagger,$ which implies  
\begin{align*}
\E[\norm{\iter_{k+1}}^4|\filtration_k]
\le & (1-4\eta\alpha+32 {L_\dagger}\eta^2) \norm{\iter_{k}}^4 \\
&+ \eta\bigl[(4\beta+24 L^2+12{L_\noise^{1/2}} +64)\norm{\iter_{k}}^2+ \eta(64 L^4 + 8 {L_\noise}   + 32( 4 L^3)^2 + 32 {L_\noise^{3/2}} )\bigr]\,,
\end{align*}
 }
Note that the chain starts from the stationary distribution~$\sdist$, taking the expectation on both sides gives 
\begin{align*}
&(4\eta\alpha-32 {L_\dagger}\eta^2) \E[\norm{\iter_{k}}^4]\\
\le &  \eta(4\beta+24 L^2+12 {L_\noise^{1/2}} +64)\E[\norm{\iter_{k}}^2]+ \eta^2 (64 L^4 + 8 {L_\noise}   + 32( 4 L^3)^2 + 32 {L_\noise^{3/2}}  )\,.
\end{align*}
We also note that $\E[\norm{\iter_{k}}^2] = \mu_{2,\eta}$ for $\mu_{2,\eta}$ from Corollary~\ref{bouned2moment}. 
Plugging this into the previous display and rearranging the inequality gives
\begin{align*}
&\E[\norm{\iter_{k+1}}^4]\\
\le & \frac{\eta}{4\eta\alpha-32 {L_\dagger} \eta^2}(4\beta+24 L^2+12 {L_\noise^{1/2}} +64) \mu_{2,\eta}+ \frac{\eta^2}{4\eta\alpha-32 {L_\dagger} \eta^2} (64 L^4 + 8 {L_\noise} 
 + 32( 4 L^3)^2 + 32  {L_\noise^{3/2}} )\\
 \le &  \frac{\eta}{4\eta\alpha-32 {L_\dagger}\eta^2}(4\beta+24 L^2+12 {L_\noise^{1/2}} +64) \mu_{2,\eta}+ \frac{\eta}{4\eta\alpha-32 {L_\dagger}\eta^2} (64 L^4 + 8 {L_\noise}
 + 32( 4 L^3)^2 + 32 {L_\noise^{3/2}} )\\
 \le & \frac{2}{7\alpha}\Bigl[(4\beta+24 L^2+12 {L_\noise^{1/2}}  +64) \mu_{2,\eta} +   (64 L^4 + 8 {L_\noise} + 32( 4 L^3)^2 + 32 {L_\noise^{3/2}} ) \Bigr]
\end{align*}
as desired.
\end{proof}

We are now ready to prove Proposition~\ref{bias_wo_local}.
\begin{proof}[Proof of Proposition~\ref{bias_wo_local}]
Define $\Delta_k:=\norm{\iter_k-\theta^*}.$
By Lemma~\ref{4mom_decomp}, we have
\begin{align}
&\E[\Delta_{k+1}^4|\filtration_k]\\
 \le&  (1-4\eta\alpha'+32L_\dagger\eta^2) \Delta_{k}^4
     + \eta (4\beta'+24\bar L^2+12{ {L'_\noise}^{1/2}} +64)\Delta_{k}^2\\
     &+ \eta^2(64\bar L^4 + 8{L'_\noise}   + 32( 4\bar L^3)^2 + 32  {{L'_\noise}^{3/2}}) \,.
\end{align}
Taking expectation on both sides then gives
\begin{align} 
&\E[\Delta_{k+1}^4]\\
\le & (1-4\eta\alpha'+32 {L_\dagger}\eta^2) \E[\Delta_{k}^4]
      + \eta(4\beta'+24\bar L^2+12{ {L'_\noise}^{1/2}} +64)\E[\Delta_{k}^2]\\
      &+ \eta^2(64\bar L^4 + 8{L'_\noise}  + 32( 4\bar L^3)^2 + 32 {{L'_\noise}^{3/2}})\,.
\end{align}
Set 
\begin{align*}
\varrho&:= 1-4\eta\alpha'+32{L_\dagger}\eta^2\\
A_1&:= 64\bar L^4+8{L'_\noise}+32(4\bar L^3)^2 + 32{{L'_\noise}^{3/2}} \\
A_2&:= 4\beta'+24\bar L^2+12{ {L'_\noise}^{1/2}}+64\,.
\end{align*}
By Cauchy-Schwatz inequality, we then have
\begin{align*}
\E[\Delta_{k+1}^4]
\le \varrho\E[\Delta_{k}^4]+ A_1\eta^2 + A_2 \E^{1/2}[\Delta_{k}^4]\eta\,.
\end{align*}
Note that when   $0<\eta< \frac{\alpha'-\sqrt{(\alpha'^2-4{L_\dagger})}}{16{L_\dagger}}\1(\alpha'^2> 8 {L_\dagger})
+\frac{\alpha'}{32{L_\dagger}}\1(\alpha'^2\le 8 {L_\dagger}),$ it follows that 
\begin{equation*}
\varrho > \frac{1}{2}\1 (\alpha'^2\ge 8{L_\dagger}) + (1-\frac{3\alpha'^2}{32{L_\dagger^2}})\1 (\alpha'^2< 8{L_\dagger})\ge\frac{1}{4}\,.
\end{equation*}
Set $D:=\sqrt{A_1}\vee A_2.$
We then find 
\begin{align*}
\E^{1/2}[\Delta_{k+1}^4] \le \sqrt{\varrho}\, \E^{1/2}[\Delta_{k}^4] + D\eta\,.
\end{align*}
By a straightforward induction, we have
\begin{align*}
\E^{1/2}[\Delta_{k}^4] \le \varrho^{k/2}\E^{1/2}[\Delta_{0}^4] + \frac{D\eta}{1-\sqrt{\varrho}}\,.
\end{align*}
Notice that $\eta\le \frac{\alpha'}{16{L_\dagger}},$ it then follows that
\begin{align*}
\varrho &= 1-4\eta\alpha'+32{L_\dagger}\eta^2 \le 1-2\eta\alpha'\,,
\end{align*}
which implies
\begin{align*}
\frac{1}{1-\sqrt{\varrho}}\le  \frac{1}{1-\sqrt{1-2\eta\alpha'}} \le  \frac{1}{\eta\alpha'}\,.
\end{align*}
Combining this with previous display gives 
\begin{align*}
\E^{1/2}[\Delta_{k}^4] \le \varrho^{k/2}\E^{1/2}[\Delta_{0}^4] + \frac{D}{\alpha'}\,.
\end{align*}
By Proposition~\ref{ergodicity}, there exists a unique stationary distribution~$\sdist$.

Consider the chain starting from the stationary distribution~$\sdist.$
Note that $\E[\Delta_{0}^4]\le 8(\E[\norm{\iter_0}^4]+\norm{\theta^*}^4).$
By Lemma~\ref{bound4mom}, it follows that 
\begin{equation*}
\E[\Delta_{0}^4]\le 8\mu_{4,\eta} + 8\norm{\theta^*}^4\,,
\end{equation*}
where the constant~$\mu_{4,\eta}$ is defined in Lemma~\ref{bound4mom}.
Plugging this into previous display provides us with

\begin{equation*}
\Bigl(\int\norm{\theta-\theta^*}^4\sdist(d\theta)\Bigr)^{1/4}=\mathcal{O}(1)\,.
\end{equation*}
Note that it holds for the $L_\phi$-Lipschitz continuous test function $\phi$ that
\begin{align*}
|\sdist(\phi)-\phi(\theta^*)|&\le L_{\phi}\int\norm{\theta-\theta^*}\sdist(d\theta) \\
&\le L_{\phi}\Bigl[\int\norm{\theta-\theta^*}^4\sdist(d\theta)\Bigr]^{1/4}\,,
\end{align*}
Thus, we obtain
\begin{align*}
|\sdist(\phi)-\phi(\theta^*)|=\bigo(1)
\end{align*}
as desired.

\end{proof}

\begin{lemma}\label{truncation}
For any $a,b,\delta>0,$ it holds for any $x\ge \frac{\delta}{a}+\sqrt{\frac{b}{a}}$ that 
\begin{equation*}
ax^2-b\ge \delta x\,.
\end{equation*}
\end{lemma}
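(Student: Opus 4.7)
The claim is a purely algebraic quadratic inequality, namely that $q(x) := ax^2 - \delta x - b \ge 0$ for every $x \ge x_0$, where $x_0 := \frac{\delta}{a} + \sqrt{\frac{b}{a}}$. My plan is to first verify the inequality at the threshold $x_0$ by direct computation, and then extend it to all $x \ge x_0$ via monotonicity of $q$.

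For the threshold value, expanding the square yields
\eq{
a x_0^2 - b = a\Bigl(\tfrac{\delta}{a} + \sqrt{\tfrac{b}{a}}\Bigr)^2 - b = \tfrac{\delta^2}{a} + 2\delta\sqrt{\tfrac{b}{a}} + b - b = \tfrac{\delta^2}{a} + 2\delta\sqrt{\tfrac{b}{a}},
}
while $\delta x_0 = \tfrac{\delta^2}{a} + \delta\sqrt{\tfrac{b}{a}}$. Subtracting gives $q(x_0) = \delta\sqrt{b/a} \ge 0$, which settles the boundary case.

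For $x > x_0$, note that $q'(x) = 2ax - \delta$ is positive whenever $x > \delta/(2a)$. Since $x_0 = \delta/a + \sqrt{b/a} > \delta/(2a)$, the function $q$ is strictly increasing on $[x_0, \infty)$, so $q(x) \ge q(x_0) \ge 0$ for all $x \ge x_0$, which is the desired inequality $ax^2 - b \ge \delta x$.

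There is no real obstacle here; the only mild subtlety is choosing to argue via a boundary-plus-monotonicity decomposition rather than solving the quadratic $ax^2 - \delta x - b = 0$ explicitly, which would require the additional check that $x_0$ dominates the larger root $(\delta + \sqrt{\delta^2 + 4ab})/(2a)$. Both routes are straightforward, but the monotonicity argument avoids manipulating the square root and keeps the computation fully elementary.
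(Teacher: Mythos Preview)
Your proof is correct. The computation at the threshold $x_0$ is clean and the monotonicity argument is valid since $x_0 > \delta/(2a)$ trivially.

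The paper takes precisely the alternative route you describe in your final paragraph: it defines the same quadratic $h(x)=ax^2-b-\delta x$, notes that $h(x)\ge 0$ whenever $x$ exceeds the larger root $(\delta+\sqrt{\delta^2+4ab})/(2a)$, and then uses the subadditivity bound $\sqrt{\delta^2+4ab}\le \delta+2\sqrt{ab}$ to conclude that $x_0=\frac{\delta}{a}+\sqrt{\frac{b}{a}}=\frac{2\delta+2\sqrt{ab}}{2a}$ dominates that root. Your boundary-plus-monotonicity decomposition avoids handling the discriminant and the square-root inequality, at the cost of a short derivative check; the paper's route avoids the derivative but needs the elementary estimate on $\sqrt{\delta^2+4ab}$. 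Both are one-line arguments, and neither has any real advantage over the other.
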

\begin{proof}[Proof of Lemma~\ref{truncation}]
Define the function~$h(x):= ax^2-b-\delta x.$
When $x\ge \frac{\delta+\sqrt{\delta^2+4ab}}{2a},$ it holds that $h(x)\ge 0.$
Note that $\sqrt{\delta^2+4ab}\le \delta+\sqrt{4ab},$ it follows that when 
\begin{equation*}
x\ge \frac{\delta +\delta+\sqrt{4ab}}{2a}\,,
\end{equation*}
it holds that $h(x)\ge 0$.
The desired result then follows readily.
\end{proof}

\begin{proof}[Proof of Theorem~\ref{bias_wz_disp}]
  Consider the chain~$\chain$ starting from the stationary distribution~$\sdist.$
  Define~$\Delta_{k}:= \norm{\iter_{k}-\theta^*}.$
  Note that under Assumptions~\ref{asm:dissipativity2} and~\ref{asm:noise_bias}, Lemma~\ref{noise_local} still holds.
  By Assumptions~\ref{asm:growth},~\ref{asm:noise_bias}, and Lemma~\ref{noise_local} , we have
\begin{align*}
&\E\bigl[\Delta_{k+1}^2|\filtration_{k}\bigr]\\
=&\E\bigl[\Delta_{k}^2 +\eta^2 \,\norm{\grad f(\iter_{k})}^2  +  \eta^2 \,\norm{{\noise_{k+1}(\iter_k)} }^2-2\eta\inprod{\grad f(\iter_k)}{\iter_k-\theta^*}|\filtration_k\bigr]\\
\le &   \Delta_{k}^2 +\eta^2\bigl(3L^2(2\Delta_k^2 +2\norm{\theta^*}^2+3) + {{L'_\noise}^{1/2}(1+\Delta_{k}^2)} \bigr)-2\eta\inprod{\grad f(\iter_k)}{\iter_k-\theta^*}\\
=&  \Delta_{k}^2 +6L^2\eta^2\Delta_k^2 +{{L'_\noise}^{1/2} \eta^2 \Delta_k^2  }+\eta^2C_1-2\eta\inprod{\grad f(\iter_k)}{\iter_k-\theta^*}  
\end{align*}
where $C_1:= 6\norm{\theta^*}^2L^2+9L^2+{{L'_\noise}^{1/2}}.$
Note that the chain starts from the stationary distribution~$\sdist,$ which implies $\E[\Delta_{k+1}^2]=\E[\Delta_k^2]$ for all $k\ge 0.$
Taking the expectation on both sides and rearranging the inequality yields
\begin{equation}
\E[\inprod{\grad f(\iter_k)}{\iter_k-\theta^*}]\le {\eta (3L^2+{L'_\noise}^{1/2})}\E[\Delta_{k}^2]+\frac{\eta}{2}C_1\,.
\end{equation}
By Corollary~\ref{bouned2moment}, it follows that
\begin{equation}\label{bias_inq}
\E[\inprod{\grad f(\iter_k)}{\iter_k-\theta^*}]\le  C_2\eta\,,
\end{equation}
where $C_2 := { 2(3L^2+{L'_\noise}^{1/2})(\mu_{2,\eta} +\norm{\theta^*}^2)}+ C_1/2$ and $\mu_{2,\eta}$ is defined in Corollary~\ref{bouned2moment}.
Moreover, by Assumption~\ref{asm:dissipativity2}, Lemma~\ref{truncation}, and Jensen's inequality, we have
  \begin{equation*}
    \E[\inprod{\grad f(\iter_k)}{\iter_k-\theta^*}]\ge \delta\E[\Delta_{k}\1 (\Delta_{k}\ge R)]+ g(\E[\Delta_{k}\1 (\Delta_{k}< R)])\,.
  \end{equation*}
  Combining this with previous display provides us with
  \begin{align*}
    \E[\Delta_{k}\1 (\Delta_{k}\ge R)]\le \frac{C_2}{\delta}\eta \,,
  \end{align*}
  and 
  \begin{align*}
    \E[\Delta_{k}\1 (\Delta_{k}< R)]\le g^{-1}(C_2\eta)\,.
  \end{align*}
  Collecting pieces then gives
  \begin{align*}
    \E\bigl[\Delta_{k}\bigr]=&\E\bigl[\Delta_{k}\1 (\norm{\iter_k-\theta^*}<R)\bigr]+\E\bigl[\Delta_{k}\1 (\norm{\iter_k-\theta^*}\ge R)\bigr]\\
    \le &  \frac{C_2}{\delta}\eta+ g^{-1}( C_2\eta)\,.
  \end{align*}
  Thus, it holds for the $L_\phi$-Lipschitz continuous test function $\phi$ that
  \begin{equation*}
    |\sdist(\phi)-\phi(\theta^*)|\le L_{\phi}\int\norm{\theta-\theta^*}\sdist(d\theta)
    \leq L_{\phi} \Big( \frac{C_2}{\delta}\eta+ g^{-1}( C_2\eta) \Big)\,.
  \end{equation*}
\end{proof}

\begin{proof}[Proof of Theorem~\ref{bias_wz_loja}]

Consider the chain~$\chain$ starting from the stationary distribution~$\sdist.$
Note that by the assumption that~$\norm{\grad^2 f(\theta)}\le \tilde L(1+\norm{\theta})$ and Taylor expansion, we have
\begin{align*}
f(\iter_{k+1})
=& f(\iter_k)+\inprod{\grad f(\iter_k)}{\iter_{k+1}-\iter_k}+\frac{1}{2} (\iter_{k+1}-\iter_k)^\top\grad^2 f(\tilde\theta) (\iter_{k+1}-\iter_k)\\
\le &  f(\iter_k)+\inprod{\grad f(\iter_k)}{\iter_{k+1}-\iter_k} +\frac{1}{2}\tilde L\norm{ \iter_{k+1}-\iter_k}^2 (1+\norm{\tilde\theta})\,,
\end{align*}
where $\tilde\theta\in\rd$ is a convex combination between $\iter_{k+1}$ and $\iter_{k}.$
By definition of SGD iterates in~$\eqref{ref:sgd_org},$ it follows that 
\begin{align*}
f(\iter_{k+1})
\le &  f(\iter_{k})-\eta\inprod{\grad f(\iter_{k})}{\grad f(\iter_{k})+{\noise_{k+1}(\iter_k)} }+\frac{\tilde L}{2}\eta^2\norm{\grad f(\iter_{k})+{\noise_{k+1}(\iter_k)} }^2(1+\norm{\tilde\theta})\\
=&  f(\iter_{k})-\eta\inprod{\grad f(\iter_{k})}{\grad f(\iter_{k})+{\noise_{k+1}(\iter_k)} }\\
    &+\frac{\tilde L}{2}\eta^2 \bigl(\norm{\grad f(\iter_{k}}^2+\norm{{\noise_{k+1}(\iter_k)} }^2+2\inprod{\grad f(\iter_{k})}{{\noise_{k+1}(\iter_k)}}\bigr)(1+\norm{\tilde\theta})\\
\le &  f(\iter_{k})-\eta\inprod{\grad f(\iter_{k})}{\grad f(\iter_{k})+{\noise_{k+1}(\iter_k)}}\\
    &+\frac{\tilde L}{2}\eta^2\bigl(\norm{\grad f(\iter_{k}}^2+\norm{{\noise_{k+1}(\iter_k)}}^2+2\inprod{\grad f(\iter_{k})}{{\noise_{k+1}(\iter_k)}}\bigr)(1+\max\{\norm{\iter_k},\norm{\iter_{k+1}}\})\\
 \le&  f(\iter_{k})-\eta\norm{\grad f(\iter_{k})}^2-\eta \inprod{\grad f(\iter_{k})}{{\noise_{k+1}(\iter_k)}}\\
    &+\frac{\tilde L}{2}\eta^2\bigl(\norm{\grad f(\iter_{k}}^2+\norm{{\noise_{k+1}(\iter_k)}}^2
    +2\inprod{\grad f(\iter_{k})}{{\noise_{k+1}(\iter_k)}}\bigr)\\
    &+\frac{\tilde L}{2}\eta^2\bigl(\norm{\grad f(\iter_{k}}^2+\norm{{\noise_{k+1}(\iter_k)}}^2
    +2\inprod{\grad f(\iter_{k})}{{\noise_{k+1}(\iter_k)}}\bigr)(\norm{\iter_k}+\norm{\iter_{k+1}})\,.
\end{align*}
Taking the conditional expectation on both sides, using Cauchy-Schwarz inequality, Assumption~\ref{asm:noise_bias} and the fact that~$(1+x^4)^{1/2}\le 1+x^2,\forall x>0 $ gives
\begin{align*}
&\E[f(\iter_{k+1})|\filtration_k]\\
\le &  f(\iter_{k}) + (\frac{\tilde L}{2}\eta^2 -\eta)\norm{\grad f(\iter_{k})}^2
          + \frac{\tilde L}{2} L_\noise\eta^2 (1+\norm{\iter_k}^2)+0\\
        &+ \frac{\tilde L}{2}\eta^2 \E\Bigl[\norm{\grad f(\iter_{k})}^2(\norm{\iter_{k}}+\norm{\iter_{k+1}})|\filtration_k \Bigr]\\
        &+ \frac{\tilde L}{2}\eta^2 \E^{1/2}\Bigl[\norm{\noise_{k+1}({\iter_k})}^4|\filtration_k\Bigr]\E^{1/2}\Bigl[(\norm{\iter_{k}}+\norm{\iter_{k+1}})^2|\filtration_k \Bigr] \\
       &+0+ \tilde L\eta^2 \E[ \norm{\grad f(\iter_{k})} \norm{\noise_{k+1}(\iter_k)} \norm{\iter_{k+1}} |\filtration_k]\\
      \le&  f(\iter_{k}) + (\frac{\tilde L}{2}\eta^2 -\eta)\norm{\grad f(\iter_{k})}^2
          + \frac{\tilde L}{2} L_\noise \eta^2(1+\norm{\iter_k}^2)\\
        &+ \frac{\tilde L}{2}\eta^2 \E\Bigl[\norm{\grad f(\iter_{k})}^2(\norm{\iter_{k}}+\norm{\iter_{k+1}})|\filtration_k \Bigr]\\
        & + \frac{\tilde L}{2}\eta^2L_\noise ^{1/2} (1+\norm{\iter_k}^2)\E^{1/2}\Bigl[(\norm{\iter_{k}}+\norm{\iter_{k+1}})^2|\filtration_k \Bigr] \\
               &+ \tilde L\eta^2 \E[ \norm{\grad f(\iter_{k})} \norm{\noise_{k+1}(\iter_k)} \norm{\iter_{k+1}} |\filtration_k]\,.
\end{align*}
We then take expectation on both sides. 
For this, we bound the last three terms separately.
Note that the chain starts from the initial distribution~$\sdist$.
By H\"{o}lder's inequality, we have 
\begin{align*}
& \E[\norm{\grad f(\iter_{k})}^2(\norm{\iter_{k}}+\norm{\iter_{k+1}})]\\  
\le &  \E[\norm{\grad f(\iter_{k})}^2\norm{\iter_k}]+\E[\norm{\grad f(\iter_{k})}^2\norm{\iter_{k+1}}]\\
\le & \E^{1/2}[\norm{\grad f(\iter_{k})}^4]\E^{1/2}[\norm{\iter_k}^2]+\E^{1/2}[\norm{\grad f(\iter_{k})}^4]\E^{1/2}[\norm{\iter_{k}}^2]\,.
\end{align*}
By Assumption~\ref{asm:growth} and the fact that $(x+y)^4\le 9(x^4+y^4),\forall x,y\in\R$, we have
\begin{align*}
 \E^{1/2}[\norm{\grad f(\iter_{k})}^4]
 &\le  L^2\E^{1/2}[(1+\norm{\iter_{k}})^4]
 \le 3L^2\sqrt{1+\E[\norm{\iter_k}^4]}\,.
\end{align*}
By Lemma~\ref{bound4mom}, it holds that $\E[\norm{\iter_k}^4]<\mu_{4,\eta},$ 
where the constant $\mu_{4,\eta}$ is defined in Lemma~\ref{bound4mom}.
Moreover, by  Corollary~\ref{bouned2moment}, we also have $\E[\norm{\iter_{k}}^2]\le \mu_{2,\eta},$ 
where the constant $\mu_{2,\eta}$ is defined in Corollary~\ref{bouned2moment}. 
Combining these with previous display gives
\begin{align*}
&  \E[\norm{\grad f(\iter_{k})}^2(\norm{\iter_{k}}+\norm{\iter_{k+1}})]
\le 6L^2\sqrt{1+\mu_{4,\eta}} \sqrt{\mu_{2,\eta}}\,.
\end{align*}
Using the same trick, we obtain
\begin{align*}
&\E \Bigl[ (1+\norm{\iter_k}^2) \E^{1/2}[ (\norm{\iter_k}+\norm{\iter_{k+1}})^2|\filtration_k]\Bigr] \\
\le & \E^{1/2}[(1+\norm{\iter_k}^2)^2] \E^{1/2} \Bigl[\E[ (\norm{\iter_k}+\norm{\iter_{k+1}})^2 |\filtration_k]  \Bigr]\\
\le & \E^{1/2}[2+2\norm{\iter_k}^4] \E^{1/2} \Bigl[\E[ 2\norm{\iter_k}^2+2\norm{\iter_{k+1}}^2 |\filtration_k]  \Bigr]\\
\le & {4}\E^{1/2} [1+\norm{\iter_k}^4]\E^{1/2}[\norm{\iter_k}^2]\\
\le & {4}\sqrt{\mu_{2,\eta}}\sqrt{1+\mu_{4,\eta}}\,.
\end{align*}
By Assumptions~\ref{asm:growth} and~\ref{asm:noise_bias}, we have
\begin{align*}
&\E\Bigl[\E[ \norm{\grad f(\iter_{k})} \norm{\noise_{k+1}(\iter_k)} \norm{\iter_{k+1}} |\filtration_k] \Bigr]\\
\le& \E\Bigl [\norm{\grad f(\iter_k)}\E^{1/2}[\norm{\noise_{k+1}(\iter_k)}^2 |\filtration_k]\E^{1/2}[\norm{\iter_{k+1}}^2|\filtration_k] \Bigr]\\
\le&L_\noise^{1/2} L \E\Bigl [(1+\norm{\iter_k})(1+\norm{\iter_k}^2)^{1/2}\E^{1/2}[\norm{\iter_{k+1}}^2|\filtration_{k}]\Bigr]\\
\le & L L_\noise^{1/2} \E\Bigl [(1+\norm{\iter_k})^2 \E^{1/2}[\norm{\iter_{k+1}}^2|\filtration_k]\Bigr] \\
\le & L L_\noise^{1/2} \E^{1/2} [(1+\norm{\iter_k})^4]  \E^{1/4}[\norm{\iter_k}^4]\\
\le & L L_\noise^{1/2}\sqrt{8+8\mu_{4,\eta}} (\mu_{4,\eta})^{1/4} \\
=&3 L L_\noise^{1/2}(\mu_{4,\eta} +\mu_{4,\eta}^{3/4})\,.
\end{align*}
Collecting pieces then gives 
\begin{align*}
&\E[f(\iter_{k+1})]\\
\le & \, \E[f(\iter_{k})]
+(\frac{\tilde L}{2}\eta^2-\eta)\E[\norm{\grad f(\iter_{k})}^2]
+ \tilde L L_\noise\eta^2(1+\mu_{2,\eta})\\
&+ 3\tilde L L^2 \eta^2 \mu_{2,\eta}^{1/2}\sqrt{1+\mu_{4,\eta}}
+ 2\tilde L L_\noise^{1/2} \eta^2 \mu_{2,\eta}^{1/2} \sqrt{1+\mu_{4,\eta}}
+3 \tilde LL L_\noise^{1/2}\eta^2(\mu_{4,\eta} +\mu_{4,\eta}^{3/4})\\
\le &  \, \E[f(\iter_{k})]
+(\frac{\tilde L}{2}\eta^2-\eta)\E[\norm{\grad f(\iter_{k})}^2] 
+ 12\eta^2\tilde L (L + L_\noise^{1/2}+L_\noise^{1/4})^2\Bigl( 1+\mu_{2,\eta} + \mu_{4,\eta}+\mu_{4,\eta}^{3/4}\Bigr)\,.
\end{align*}
Recall that the iterates~$\chain$ starts from the stationary distribution~$\sdist$ and $\eta< \frac{2}{\tilde L}.$ 
Rearranging the above display gives
\begin{align*}
\E[\norm{\grad f(\iter_{k})}^2] 
\le \frac{2\tilde M \eta}{2-\tilde L\eta} \,,
\end{align*}
where 
\begin{equation*}
\tilde M:=12\tilde L (L + L_\noise^{1/2}+L_\noise^{1/4})^2\Bigl( 1+\mu_{2,\eta}+ \mu_{4,\eta}+\mu_{4,\eta}^{3/4}\Bigr)\,.
\end{equation*}
By Assumption~\ref{asm:loja} and Jensen's inequality,
it holds that 
\begin{align*}
\E[\norm{\grad f(\iter_k)}^2]
\ge & \E[g(f(\iter_k)-f^*)\1 (\norm{\iter_k-\theta^*}\le R) ]+ \gamma\E[(f(\iter_k)]-f^*) \1(\norm{\iter_k-\theta^*}>R)]  \\
\ge & g(\E[(f(\iter_k)]-f^*)\1 (\norm{\iter_k-\theta^*}\le R) ])+ \gamma\E[(f(\iter_k)]-f^*) \1(\norm{\iter_k-\theta^*}>R)] \,.
\end{align*}
Combing this with previous display gives
\begin{align*}
0 \leq \E[(f(\iter_k)]-f^*)\1 (\norm{\iter_k-\theta^*}\le R) ] &\le  g^{-1}\Bigl(\frac{2\tilde M \eta}{2-\tilde L\eta}\Bigr)
\\
0 \leq \E[(f(\iter_k)]-f^*) \1(\norm{\iter_k-\theta^*}>R)]  &\le \frac{2\tilde M \eta}{2-\tilde L\eta}\,.
\end{align*}
This implies
\begin{align*}
0 \leq \sdist(f) -f^*
&=\E[(f(\iter_k)]-f^* \\
&= \E[(f(\iter_k)-f^*)\1 (\norm{\iter_k-\theta^*}\le R)]+\E[(f(\iter_k)-f^*)\1 (\norm{\iter_k-\theta^*}>R)]\\
 &\le  g^{-1}\Bigl( \frac{2\tilde M \eta}{2-\tilde L\eta} \Bigr) +\frac{2\tilde M \eta}{2-\tilde L\eta}\,.
\end{align*}
When the test function~$\phi$ satisfies $\phi=\tilde\phi\circ f$ with the $L_{\tilde\phi}$-Lipschitz function~$\tilde\phi$, we obtain
\begin{align*}
|\sdist(\phi)-\phi(\theta^*)|
\le& L_{\tilde\phi} (\sdist(f)-f^*) \le L_{\tilde\phi}\Big( g^{-1}\Bigl( \frac{2\tilde M \eta}{2-\tilde L\eta} \Bigr) +\frac{2\tilde M \eta}{2-\tilde L\eta} \Big)
\end{align*}
as desired.
\end{proof}

\begin{proof}[Proof of Theorem~\ref{bias_wz_convex}]
Consider the chain~$\chain$ starting from the stationary distirbution~$\sdist.$
By display~\eqref{bias_inq},  it holds that
\begin{equation}
\E[\inprod{\grad f(\iter_k)}{\iter_k-\theta^*}]\le  C_2\eta\,,
\end{equation}
where $C_2$ is a positive constant defined in Theorem~\ref{bias_wz_disp}. 
Note that $f$ is convex, this implies
\begin{equation*}
0 \leq  f(\iter_k)-f^* \le \inprod{\grad f(\iter_k)}{\iter_k-\theta^*}\,.
\end{equation*}
Taking the expectation on both sides and combing this with the previous display gives
\begin{align*}
0 \leq  \sdist(f)-f^* \le C_2\eta\,.
\end{align*}
The desired result readily follows for the test function~$\phi$ satisfies $\phi=\tilde\phi\circ f$ with the $L_{\tilde\phi}$-Lipschitz function~$\tilde\phi.$
\end{proof}

\end{document}